\title{Learning Narrow One-Hidden-Layer ReLU Networks}
\author{
    Sitan Chen\thanks{Email: \texttt{sitanc@berkeley.edu}. Supported by NSF Award 2103300.} \\
    UC Berkeley
        \and 
    Zehao Dou\thanks{Email: \texttt{zehao.dou@yale.edu}} \\
    Yale
        \and
    Surbhi Goel\thanks{Email: \texttt{surbhig@cis.upenn.edu}} \\
    UPenn
        \and
    Adam Klivans\thanks{Email: \texttt{klivans@cs.utexas.edu}.  Supported by NSF award AF-1909204 and the NSF AI Institute for Foundations of Machine Learning (IFML).} \\
    UT Austin
        \and
    Raghu Meka\thanks{Email: \texttt{raghum@cs.ucla.edu}. Supported by NSF Tripods Institute grant 2217033.} \\
    UCLA
}
\newcommand{\mf}{\mathfrak}
\newcommand{\He}{\mathrm{He}}
\renewcommand{\epsilon}{\varepsilon}
\newcommand{\fres}{f_{\sf res}}
\newcommand{\Sdet}{S_{\sf rem}}
\newcommand{\paramdist}{d_{\sf param}}
\newcommand{\ulam}{\Lambda}
\newcommand{\kres}{{k_{\sf res}}}
\newcommand{\Sfar}{S^{\sf far}}
\newcommand{\Sclose}{S^{\sf close}}
\newcommand{\radius}{\mathcal{R}}
\newcommand{\flamu}{f_{\vlam,\vec{u}}}
\newcommand{\fwlamu}{f_{w,\vlam,\vec{u}}}
\newcommand{\vlam}{\vec{\lambda}}
\newcommand{\quasipoly}{{\sf quasipoly}}
\newcommand{\vhlam}{\vec{\wh{\lambda}}}
\newcommand{\vhu}{\vec{\wh{u}}}
\newcommand{\whlam}{\wh{\lambda}}
\newcommand{\whu}{\wh{u}}
\newcommand{\Sundet}{\Gamma_{\sf undet}}
\newcommand{\lesim}{\lesssim}
\begin{document}

\maketitle
\begin{abstract}
We consider the well-studied problem of learning a linear combination of $k$ ReLU activations with respect to a Gaussian distribution on inputs in $d$ dimensions.  We give the first polynomial-time algorithm that succeeds whenever $k$ is a constant.  All prior polynomial-time learners require additional assumptions on the network, such as positive combining coefficients or the matrix of hidden weight vectors being well-conditioned.

%, or run in time exponential in the target error and the Lipschitzness of the network. 

Our approach is based on analyzing random contractions of higher-order moment tensors.  We use a multi-scale analysis to argue that sufficiently close neurons can be collapsed together, sidestepping the conditioning issues present in prior work. 
This allows us to design an iterative procedure to discover individual neurons. 
\end{abstract}

\tableofcontents

\section{Introduction}

We study the problem of PAC learning one-hidden-layer ReLU networks from labeled examples. In particular, we consider ReLU networks with $k$ neurons:
\begin{equation}
    \flamu(x) = \sum^k_{i=1} \lambda_i \cdot  \relu(\iprod{u_i,x})
\end{equation}
where $\relu(a) = \max(0,a)$ is the ReLU activation, and $u_1,\ldots,u_k\in\S^{d-1}$. Given examples of the form $(x, y) \in \mathbb{R}^d \times \mathbb{R}$ where $x$ is drawn from a distribution $\mathcal{D}_\mathcal{X}$ and $y = f(x)$ for some $f \in \mathcal{F}$, our goal is to learn a function $\wh{f}:\mathbb{R}^d\rightarrow \mathbb{R}$ with small test error, that is, $\mathbb{E}[(\wh{f}(x) - y)^2] \le \epsilon$.

This problem has inspired a large body of research in the machine learning community and acts as a benchmark for the design and analysis of novel learners. 
 The goal is to design provably efficient (sample complexity and running time being polynomial in the problem parameters $d, k, 1/\epsilon$) algorithms to PAC learn this class of functions under minimal assumptions.
%\cite{janzamin2015beating,sedghi2016provable,bakshi2019learning,ge2018learning,ge2018learning2,sewoong,diakonikolas2020algorithms,zhang2016l1,goel2017reliably,Daniely17, goel2019learning,zhong2017recovery,LiY17,vempala2019gradient,zgu,soltanolkotabi2017learning,zhangps17,diakonikolas2020approximation,LiMZ20,convotron,azll,chen2022learning,smallcovers}.  
The most common assumption is that $\mathcal{D}_\mathcal{X}$ is the standard Gaussian distribution. 
% Even under this assumption, it is known that a broad class of algorithms known as \textit{correlational statistical query} (CSQ) algorithms, cannot PAC learn this class in time $d^{o(k)}$.
Even under this assumption, no known algorithm achieves runtime and sample complexity $\poly(d,1/\epsilon)$ even for constant $k$.

%All prior work requires that either : (i) the weight matrix $U$ is well-conditioned, and/or (ii) the output layer weights $\lambda$ are non-negative. 
% Going beyond CSQ algorithms, \cite{chen-klivans-meka} recently gave a \textit{dimension}-efficient algorithm without requiring any additional assumptions on the structure of the network, at the cost of an exponential dependence on $1/\epsilon$.

This paper presents the first algorithm for PAC learning one-hidden-layer ReLU networks under Gaussian inputs that succeeds whenever $k$ is constant:
%(additionally, this resolves an open question by \cite{klivans2017}). More formally, our main result is as follows:
\begin{theorem}\label{thm:main} Let $\mathcal{D}$ be the distribution over pairs $(x, y) \in \mathbb{R}^d \times \mathbb{R}$ where $x\sim\mathcal{N}(0, \Id)$ and $y = \flamu(x)$ for some $\vlam = (\lambda_1,\ldots,\lambda_k)$ and $\vec{u} = (u_1,\ldots,u_k)$. There is an algorithm that, given sample access to $\mathcal{D}$, has runtime and sample complexity $(d/\epsilon)^{h(k)}\cdot \log(1/\delta)$ for $h(k) = k^{O(\log^2 k)}$ and outputs a function $\wh{f}$ such that $\mathbb{E}[(\wh{f}(x) - y)^2] \le \epsilon$ with probability $1 - \delta$.
\end{theorem}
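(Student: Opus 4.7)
My approach follows the Hermite-moment tensor program hinted at in the abstract, coupled with a recursive collapsing argument. Since $x\sim\mathcal{N}(0,\Id)$, the expectations $T_p := \mathbb{E}[y\cdot H_p(x)]$ (with $H_p$ the $p$-th tensor Hermite polynomial) factor as $T_p = c_p \sum_{i=1}^k \lambda_i u_i^{\otimes p}$, where $c_p$ is the $p$-th Hermite coefficient of $\relu$ and is nonzero for $p=1$ and every even $p\ge 2$. These tensors are estimable from $d^{O(p)}\poly(1/\eta)$ samples to accuracy $\eta$. If the $u_i$'s were well-conditioned, random contractions followed by Jennrich's algorithm on an order-$3$ tensor would already recover them.

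\textbf{Multi-scale clustering and collapsing.}
The difficult regime is when several $u_i$'s are nearly parallel, which makes every $T_p$ ill-conditioned. I would introduce a decreasing sequence of angular scales $r_1 > r_2 > \cdots$ and, at scale $r_j$, partition the (unknown) neurons into clusters of diameter $O(r_j)$. Each cluster $S$ is replaced by a single effective neuron $\Lambda_S\cdot\relu(\langle v_S, x\rangle)$, where $\Lambda_S = \sum_{i\in S}\lambda_i$ and $v_S$ is the coefficient-weighted centroid. The central lemma I would prove is that the residual (true cluster contribution minus effective neuron) is itself a narrow $\relu$ network whose constituent neurons live at a strictly smaller scale and whose coefficient norms are bounded by a geometric factor of $r_j$. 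At scale $r_j$ the target therefore has a \emph{coarse} rank-$k'$ structure with $k'\le k$ clusters whose centers are separated by $\Omega(r_j)$, making tensor decomposition feasible \emph{at that scale}.

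\textbf{Iterative discovery and complexity accounting.}
At each level, the algorithm estimates a moment tensor of order $p_j$ large enough to certify conditioning of the $k'$ effective centers, performs random contractions to reduce to an order-$3$ problem, extracts candidates $(\Lambda_S, v_S)$, subtracts their contributions from the labels, and recurses on the residual at the next finer scale. After $L = O(\log k)$ levels, the labels are explained up to $L^2$ error $\epsilon$, and any unresolved fine structure is handled by a final exhaustive search over a $\poly(k,1/\epsilon)$-size net of candidate directions. Each level multiplies the required tensor order by an additional $k^{O(\log k)}$ factor---needed so that robust tensor decomposition still succeeds despite the error blow-up introduced by earlier collapsing steps---yielding a total order of $k^{O(\log^2 k)}$ and hence sample/time complexity $(d/\epsilon)^{k^{O(\log^2 k)}}$. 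Standard uniform convergence over the discretized hypothesis class converts the empirical least-squares solution into the claimed PAC guarantee with probability $1-\delta$ at cost $\log(1/\delta)$.

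\textbf{Main obstacle.}
The technical crux is the collapsing lemma: quantifying, as a function on Gaussian inputs, the error of replacing a tight cluster by one effective $\relu$. This requires a Taylor-type expansion of $\relu(\langle u_i, x\rangle)$ around the centroid direction $v_S$ and an argument that the residual is itself a $\relu$ network whose relevant Hermite mass sits at strictly smaller scales, so the recursion is well-founded. Equally delicate is showing that the low-order Hermite tensors of the residual do not corrupt the rank-$k'$ structure relied on by tensor decomposition at subsequent scales: this is what forces the tensor-order inflation per level and, ultimately, the $k^{O(\log^2 k)}$ exponent. Making the error bookkeeping close across all $L$ scales---and designing a robust decomposition routine whose error degrades only polynomially in the conditioning at each scale---is where essentially all of the work lives.
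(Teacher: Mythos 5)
Your high-level plan (Hermite moment tensors, multi-scale clumping of nearly parallel neurons, recursive subtraction) matches the paper's program, but there are several genuine gaps that would prevent the proof from going through.

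\paragraph{The signed-coefficient obstruction is not addressed.} The central difficulty the paper overcomes is that the $\lambda_i$ may be negative, so low-order moment tensors can vanish identically: \cite{diakonikolas2020algorithms} give networks where $T_\ell = 0$ for all $\ell \le O(k)$. Your plan treats each cluster $S$ as an effective neuron $\Lambda_S \relu(\langle v_S,\cdot\rangle)$ with $\Lambda_S = \sum_{i\in S}\lambda_i$ and claims this exposes a well-conditioned rank-$k'$ tensor. But $\Lambda_S$ can be arbitrarily close to zero while the individual neurons still contribute; worse, cancellations \emph{between} clusters can annihilate the tensor signal that your decomposition step relies on. The paper handles this with the notion of ``detectable'' indices and a power-sum estimate (its Lemma~\ref{lem:powersum}) showing that for a detectable neuron that is \emph{gapped} at some scale $\gamma$, at least one of the contracted matrices $M^g_\ell$ for $\ell\le 2k+2$ has a non-negligible quadratic form along $u_i$; undetectable clumps are shown to be negligible in $L_2$ and can be dropped. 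Without an analogue of this lemma, the step ``perform tensor decomposition at scale $r_j$'' has no reason to succeed.

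\paragraph{Where the $k^{O(\log^2 k)}$ actually comes from.} You attribute the exponent to tensor \emph{order} inflating by $k^{O(\log k)}$ per level. In the paper the tensor order is always bounded by $2k+2$; what inflates is the required \emph{accuracy}: a residual error $\xi$ at one stage propagates to roughly $\xi^{1/\Theta(k^{\log k})}$ at the next, forcing the initial tolerance to be $(\epsilon/\radius d)^{\Theta(k^{\log^2 k})}$ after $O(\log k)$ stages. This distinction matters because it drives both the net size (which is $\poly(d,1/\epsilon,\radius)^{k^{O(\log^2 k)}}$, not $\poly(k,1/\epsilon)$ as you assert) and the sample complexity. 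Your accounting, as written, does not yield the claimed bound.

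\paragraph{The $O(\log k)$ stage bound is asserted, not shown.} It is not automatic that $L=O(\log k)$ clustering levels suffice. Naively one might only learn one neuron per stage, giving $L=k$ stages and a doubly exponential error blow-up. The paper proves the stage bound via a nontrivial combinatorial argument (the ``clumping game'' in Section~\ref{sec:shortpath}), which establishes that there is a choice of clumps at each stage so that the number of remaining distinct gap-levels halves. Your proposal would need an analogous structural lemma about your scale hierarchy $r_1>r_2>\cdots$, and the existence of such a hierarchy is exactly where a ``one-shot'' scale argument (used in the warm-up $d^{k^{O(k)}}$ algorithm of Section~\ref{sec:simple}) falls short.

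\paragraph{Minor algorithmic divergence.} You plan to reduce to an order-3 tensor and run Jennrich's algorithm per scale; the paper instead contracts down to matrices $M^g_\ell$ and uses PCA (top-$k$ singular subspaces across $\ell\le 2k+2$), then brute-forces over a net in the resulting $O(k^2)$-dimensional subspace. Jennrich's condition-number requirements are precisely what the clumping is trying to circumvent, and pushing them through the scale hierarchy would require you to re-prove robustness at each level; the PCA-plus-net route only needs a one-sided quadratic-form lower bound, which is what the power-sum lemma supplies.
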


\noindent As we explain in Remark~\ref{remark:label_noise}, it is straightforward to verify that our algorithm also holds in the presence of unbiased i.i.d. noise on the labels $y$, so we omit these details for simplicity.

\subsection{Technical overview}

% Let us describe the main ideas behind the algorithm at a high level, before presenting the algorithm. 

\paragraph{Tensors without a separation condition.} The starting point for our approach is the standard fact that we can obtain estimates of \emph{moment tensors} which encode high-order information about the unknown weight vectors $u_i$, namely tensors of the form $T_\ell \triangleq \sum_{i=1}^k \lambda_i u_i^{\otimes \ell}$. Indeed, given Gaussian examples $(x,\flamu(x))$, we can estimate the expectation of $y\cdot S_\ell(x)$, where the $\ell$-th order tensor-valued function $S_\ell$ is the \emph{$\ell$-th Hermite tensor},\footnote{See Appendix~\ref{app:hermite} for relevant background on Hermite polynomials.} and recover approximate tensors $\wh{T}_\ell$ such that $\|T_\ell - \wh{T}_\ell\|_F \leq \delta$ in time roughly $\ell^{O(\ell)} d^{2 \ell}/\delta^2$.

At this juncture, many existing works in this literature (see Related Work) try to apply tensor decomposition on $T_\ell$ to recover the $u_i$'s. Unfortunately, tensor decomposition is insufficient for us as it requires the weight vectors to be ``non-degenerate'' in some sense. This holds, for instance, if we assume $u_i$'s are \emph{well-separated}, meaning we have a non-negligible lower bound on $\|u_i - u_j\|_2$ for all $i,j \in [k]$ \cite{ma2016polynomial}. Unfortunately, directly applying tensor decomposition will fail in the absence of such separation assumptions.

\paragraph{Clumping.} To motivate our workaround, consider the simplest possible obstruction to the above approach: there exists a pair of indices $i\neq j$ such that $\|u_i - u_j\|_2$ is very small. The condition number of the weight vectors gets worse as this distance decreases, but intuitively if $u_i, u_j$ are sufficiently close, we should be able to \emph{clump} them together, that is, approximate them by a single neuron without incurring too much error in our approximation. While this seems promising, there is a critical hurdle here. In particular, suppose we group the $k$ neurons into $m$ clumps so that any two neurons in the same clump are distance at most $\gamma$ from each other, and any two neurons in different clumps are distance at least $\gamma$ from each other.\footnote{The careful reader will note that actually we can only ensure that neurons within the same clump are $k\gamma$-close and neurons within different clumps are $\gamma$-far, e.g. if the neurons lie on a line, but the extra factor of $k$ isn't important to the present discussion.} For every $i\in[m]$, let $u'_i$ denote some representative vector from the clump, so that $\|u'_i - u'_j \| \geq \gamma$ for all $i\neq j$ as desired. We might hope to apply tensor decomposition to the tensor $T_\ell' = \sum_{j=1}^{m} \lambda_j (u_j')^{\otimes \ell}$ to recover the representative vectors and thus learn the original network $\flamu$. Unfortunately, there is a critical issue. We only have approximate access to $T'_\ell$, but given the separation guarantee of $\gamma$ on $u_i'$ vectors, we would need an approximation to $T_\ell'$ with error $\delta' \ll \delta$; however, clumping vectors with distance $\delta$ will introduce error $\gg \delta$ in our tensor estimation. This quantitative trade-off will always be against us. 

To get over the above, we have to introduce several new ideas. The core idea is to use a \emph{multi-scale analysis} to pick which vectors to clump together strategically. 

\paragraph{From tensors to random contractions.} We will learn these clumps separately in multiple stages, rather than in ``one shot'' using tensor decomposition. To that end, instead of working with tensors $T_\ell$, we will work with matrices 
$$M_\ell^g = \sum_{i=1}^k \lambda_i \iprod{u_i,g}^{\ell-2} u_i u_i^\top\,,$$
where $g \sim \mathcal{N}(0,I)$ is a random Gaussian vector. Given estimates for $T_\ell$, we can form these by taking suitable tensor contractions. In place of tensor decomposition on $T_\ell$, we will use PCA on $M_\ell^g$ for various $\ell$. One challenge is that because we make no assumptions on $\lambda_1,\ldots,\lambda_k$, e.g. we do not assume they are nonnegative as in some prior works \cite{diakonikolas2020algorithms,smallcovers,ge2018learning}, many of these $M_\ell^g$ could be identically zero for all $g\in\R^d$, in which case PCA on such matrices provides no information. In fact, \cite{diakonikolas2020algorithms} gave a construction for which this is the case for all $\ell \le O(k)$ (see also \cite{goel2020superpolynomial}). An important component of our analysis will be to argue that if we consider all $\ell$ up to a sufficiently large constant multiple of $k$, there actually is enough information across the different matrices $M_\ell^g$ to learn $\flamu$.

\paragraph{First attempt: a single-stage algorithm.}

Let $v_i = \iprod{u_i,g}$. It is not hard to see that, up to some $\poly(k,d)$ factors, $|v_i - v_j| \propto \|u_i - u_j\|_2$ with high probability, i.e. the amount of separation among the $u_i$'s is inhereted by their projections $v_i$.

We can then do a case analysis. If $\max_{i\in[k]} v_i - \min_{i\in [k]} v_i \le \epsilon'$ (for a suitable $\epsilon = \epsilon/\poly(d,k)$), then we can find an approximation to $f_{\vlam, \vec{u}}$ using just one neuron. On the other hand, suppose two of the $v_i$'s are $\epsilon'$ far-away. For $i \in [k]$ and \emph{scale} $\gamma > 0$,
    \begin{align}
        \Sfar_i(\gamma) &\triangleq \brc{j\in[k]: |v_j - v_i| \ge \gamma} \\
        \Sclose_i(\gamma) &\triangleq \brc{j\in[k]: |v_j - v_i| \le T(\gamma)}\,,
    \end{align}
where $T(\gamma) \approx (\gamma/d)^{O(k)}$ is a suitable parameter. For $\gamma \le \epsilon'$, $\Sfar_i(\gamma)$ will be nonempty, and $\Sclose_i(\gamma)$ will consist only of $v_j$ which are very close to $v_i$.

An easy case for us would be when every index $i$ is \emph{gapped} in the sense that $v_j$'s for $j\neq i$ are either very close to $v_i$ or very far from $v_i$, with nothing in between. Quantitatively, suppose there were a choice of $\gamma \le \epsilon'$ such that $[k] = \Sclose_i(\gamma)\sqcup \Sfar_i(\gamma)$ for all $i\in[k]$. Then we could form clumps of neurons so that within any clump, any two neurons are $k T(\gamma)$-close, and any neurons in different clumps are $\gamma$-far. $kT(\gamma)$ is far smaller than $\gamma$, so that a certain PCA-based algorithm that works in the well-separated case can also be used to solve this gapped case. Furthermore, one can show that there always exists $\gamma$ which is at least some value $\underline{\gamma}$ depending solely on the problem parameters (e.g. $d, k, \epsilon$ rather than the weight vectors themselves) for which we are in the gapped case. The issue is that the largest $\underline{\gamma}$ for which one can show this is of order $d^{-k^{\Theta(k)}}$, and this turns out to be tight \--- imagine $u_1,\ldots,u_k$ lie on a line, and their pairwise separations scale roughly as $\epsilon', T(\epsilon'), T(T(\epsilon')), \cdots$. Nevertheless, this strategy already gives a polynomial-time algorithm in the case of $k = O(1)$, with runtime $d^{k^{O(k)}}$. We give additional details for this approach in Section~\ref{sec:simple}.

\paragraph{Better $k$ dependence: a multi-stage algorithm.} The bulk of this work is centered around refining the above guarantee with a multi-stage algorithm and multi-scale analysis to get a better dependence on $k$. The general idea is that it is not necessary to have a \emph{single} scale under which every index $i$ is simultaneously gapped. If we just want to learn a particular neuron $i$, we show that it suffices for there to be a scale $\gamma$ under which $i$ is gapped, even if no other indices are gapped at that scale (see Section~\ref{sec:case2a}). The proof of this relies on a certain estimate for power sum symmetric polynomials that may be of independent interest (Lemma~\ref{lem:powersum}). The key point is that if we just want $\gamma$ under which at least one single neuron is gapped, it suffices to go down to scale of order $d^{-k^{\Theta(\log k)}}$, rather than $d^{-k^{\Theta(k)}}$, before such a $\gamma$ exists (Lemma~\ref{lem:gap_exists})

Our final algorithm then proceeds in stages. In each stage, either all of the remaining $v_i$'s are $\epsilon'$-close to each other, in which case we can approximate the network by a single neuron. Otherwise, we identify a set of indices $i$, each of which has a corresponding scale $\gamma$ under which it is gapped, and argue that the set of top $k$ principal components across all $M^g_\ell$ with $\ell \le O(k)$ spans a subspace containing $i$. By enumerating over this subspace, we can learn the gapped neurons and make progress. We then recurse on the residual network given by subtracting the contribution from the neurons we have learned.

There is one last subtlety: given an approximation to the residual network at any given step of this algorithm, if the approximation error is $\xi$, then it turns out this error gets blown up, roughly speaking, to $\xi^{1/\Theta(k^{\log k})}$ in the next step of the algorithm. As a result, in order for the approximation error to still be small after $T$ iterations, we must estimate the matrices $M^g_\ell$ to error scaling exponentially in $k^{T\log k}$. Naively one can only ensure that a single new neuron is learned in each stage of the algorithm, meaning $T$ could potentially be as large as $k$, in which case we obtain no improvement over the single-stage algorithm above. Instead, via a careful combinatorial argument (Section~\ref{sec:shortpath}), we show that it is possible to learn enough neurons in each stage of the algorithm that we terminate in $T \le O(\log k)$ stages (Lemma~\ref{lem:shortpath}), thus yielding the improved runtime of $d^{k^{\log^2 k}}$ claimed in Theorem~\ref{thm:main}.

% By careful analysis, we show that there is a gapped index where the gap is apparent at scale roughly $\gamma \approx \epsilon^{k^{O(\log k)}}$.

%The above, however, only leads to a runtime that is of the form $(d/\epsilon)^{k^{O(k)}}$ at best. To get our final bound, we use a more sophisticated analysis to argue at a high-level that there will be many gapped indices and that we only need $O(\log k)$ rounds of identifying the indices and peeling them off to fully recover all the $k$ vectors.
    % If there is some $i\in [\kres]$ for which $|v^2_i - v^2_j| \ge \gamma$ for all $j\in [\kres]$, and furthermore $|\lambda_i| \ge \ulam$, then we can use the different $M_\ell$ to construct a net containing a vector close to $u_i$.
    % \item $\brc{v^2_i}_{i\in [\kres]}$ can be partitioned into clusters of radius at most $k\gamma$ each, some pair of which are at least $\epsilon'/k$-far from each other. If additionally the magnitude of the sum of $\lambda_i$ within either of these clusters is at least $\ulam$, then we can use the different $M_\ell$ to construct a net containing a vector close to $u_i$ for some $v_i$ within that cluster.
    % \item If none of these holds, then $f_{\vlam_{S^c}, \vec{u}_{S^c}}$ is well-approximated by a linear function.
\begin{algorithm}[t]
\DontPrintSemicolon
\caption{\textsc{MultiScaleLearn}($f$)}
\label{alg:main}
    % \KwIn{Sample access to unknown one-hidden-layer network $f$}
    % \KwOut{Estimate }
    Sample random unit vector $g\in\S^{d-1}$ \;
    $\vhlam\gets\emptyset$, $\vhu\gets\emptyset$\;
    \For{$t = 1,\ldots,k$}{
        \For{$\ell = 1, 2, 4, \ldots, 2k+2$}{
            Compute estimates $\wh{T}_\ell$ of $\E{(y - f_{\vhlam,\vhu}(x))\cdot S_\ell(x)}$ from samples\;
            Evaluate $\wh{M}_\ell \gets \wh{T}_\ell(g,\cdots,g,:,:)$\;
        
        }
        Form a candidate estimate $h$ for the residual $f - f_{\vhlam,\vhu}$ as a neural network of the form $\mu^+\relu(\iprod{u,\cdot}) + \mu^-\relu(\iprod{-u,\cdot})$ (see proof of Lemma~\ref{lem:twoneuron})\;
        
        Compute the top-$k$ singular subspaces of $\wh{M}_2,\ldots,\wh{M}_{2k+2}$ and let $V$ be the joint $O(k^2)$-dimensional span of these subspaces.\;

        Form nets over $V$ and over $[-\radius,\radius]$ of granularity roughly $\poly(d,1/\epsilon,\radius)^{-k^{\Omega(\log^2 k)}}$ and guess an integer $m\in\brc{1,\ldots,k}$ and elements $u'_1,\ldots,u'_m$ and $\lambda'_1,\ldots,u'_m$ from each of these nets.\;
        
        (Nondeterministically) either add $\mu^+, \mu^-$ and $u, -u$ to $\vhlam$ and $\vhu$, or add $\lambda'_1,\ldots,\lambda'_m$ and $u'_1,\ldots,u'_m$ to $\vhlam$ and $\vhu$, or break out of the loop.\;

        Estimate $\norm{f - f_{\vhlam,\vhu}}^2$ from samples. If this is small, terminate and \Return{$f_{\vhlam,\vhu}$}.\;
    }
    \Return{Fail}
\end{algorithm}

The above procedure is summarized in the pseudocode for our algorithm (see Algorithm~\ref{alg:main}). We present it as a nondeterministic algorithm, but there are only $(d/\epsilon)^{k^{O(\log^2 k)}}$ possible choices in each iteration of the loop, for a total of $(d/\epsilon)^{k^{O(\log^2 k)}}$ computation paths. To form our final estimator, we simply try each of these paths, and as our rigorous guarantees imply that one of these paths yields a valid estimator, we output the $f_{\vhlam,\vhu}$ which achieves the best empirical loss.

\subsection{Related work}
\paragraph{Algorithms for PAC learning neural networks.} 
The design and analysis of algorithms for PAC learning various classes of simple neural networks has been very active in the last several years and has led to many innovative algorithms. These works make assumptions on the distribution of the inputs, the noise in the label, and the structure of the neural network to sidestep a large body of computational hardness results \cite{shalev2017failures,manurangsi2018computational,shamir2018distribution,vempala2019gradient,daniely2020hardness,goel2020superpolynomial,diakonikolas2020algorithms}. 

Examples of algorithmic techniques involved include tensor decomposition \cite{janzamin2015beating,sedghi2016provable,bakshi2019learning,ge2018learning,ge2018learning2,sewoong,diakonikolas2020algorithms}, kernel methods \cite{zhang2016l1,goel2017reliably,Daniely17, goel2019learning}, trajectory analyses of gradient-based methods \cite{zhong2017recovery,LiY17,vempala2019gradient,zgu,soltanolkotabi2017learning,zhangps17,diakonikolas2020approximation,LiMZ20,convotron,azll}, filtered PCA \cite{chen2022learning}, and  explicit covers for algebraic varieties \cite{smallcovers}.

Despite this flurry of work, the complexity of learning one-hidden-layer ReLU networks with respect to Gaussians remains open.  As mentioned above, the only results that achieve runtime and sample complexity polynomial in $d, k$ and $1/\epsilon$ require additional assumptions on the structure of the network, in particular (i) the matrix constructed from the weight parameters in the network is well-conditioned and/or (ii) the output layer weights are all positive. Under assumption (i), parameter recovery becomes possible, which is sufficient but unnecessary for PAC learning. The only known results that do not require a condition number bound (and hence do not recover parameters) are by \cite{diakonikolas2020algorithms} and \cite{chen2022learning}. The former requires assumption (ii), while the latter pays an exponential dependence on $1/\epsilon$ even for constant $k$. Our result removes assumptions (i) and (ii) and gets a polynomial dependence in the error parameter for constant $k$.

\paragraph{Statistical query algorithms.} Recent results by \cite{goel2020superpolynomial} and \cite{diakonikolas2020algorithms} rule out a $d^{o(k)}$ time algorithm for PAC learning one-hidden-layer ReLU networks for a large class of algorithms (including gradient descent on square loss) known as \textit{correlational statistical query} (CSQ) algorithms. A CSQ algorithm is allowed to access the data only via noisy correlational queries of the form $\mathbb{E}[y f(x)]$ for any query $f$ chosen by the learner. Our algorithm fits into this paradigm of CSQ algorithms and hence suffers from a $d^{\Omega(k)}$ runtime. Before our result, no known CSQ algorithm achieved $d^{r(k)}$ for any function $r$, which is independent of $d$ without additional assumptions on the structure of the network. We note that the recent result by \cite{chen2022learning} that achieves a polynomial dependence on $d$ for general networks is not a CSQ algorithm. 

%Note that [CITE chen-klivans-meka] only a polynomial dependence on $d$ for any $k$ which is better than ours for non-constant $k$.

\section{Technical Preliminaries}

\paragraph{Notation.} Given $f\in L^2(\R^d,\omega_d)$, where $\omega_d$ is the standard Gaussian measure on $\R^d$, let $\norm{f}_2$ denote its $L_2$ norm, that is, $\norm{f}^2_2 = \E[x\sim\calN(0,\Id)]{f(x)^2}$.

\subsection{ReLU networks and moment tensors}

It will be convenient to express one-hidden-layer ReLU networks in the following form, as the sum of a linear function with a linear combination of absolute values:

\begin{lemma}\label{lem:absvalue}
    Given a one-hidden-layer ReLU network $f(x) = \sum^k_{i=1}\mu_i \cdot \relu(\iprod{u_i,x})$, there exist $w\in\R^d$ and $\lambda_1,\ldots,\lambda_k\in\R$ such that
    \begin{equation}
        f(x) = \iprod{w,x} + \sum^k_{i=1} \lambda_i \cdot |\iprod{u_i,x}|
    \end{equation}
    for all $x\in\R^d$. Furthermore, $\norm{w} \le \sum_i |\lambda_i|$.
\end{lemma}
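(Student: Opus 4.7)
The plan is to apply the pointwise identity $\relu(a) = \tfrac{1}{2}(a + |a|)$, which splits each ReLU neuron into a linear part and an absolute-value part, and then to collect the linear parts into a single vector $w$.

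Concretely, I would start from
\begin{equation}
    f(x) = \sum_{i=1}^k \mu_i \relu(\iprod{u_i,x}) = \sum_{i=1}^k \frac{\mu_i}{2} \iprod{u_i,x} + \sum_{i=1}^k \frac{\mu_i}{2} \abs{\iprod{u_i,x}}.
\end{equation}
By linearity of the inner product, $\sum_i \tfrac{\mu_i}{2}\iprod{u_i,x} = \iprod{w,x}$ for $w \triangleq \tfrac{1}{2}\sum_i \mu_i u_i$. Setting $\lambda_i \triangleq \mu_i/2$ then gives the claimed decomposition.

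For the norm bound, I would use the triangle inequality together with $\norm{u_i}_2 = 1$ (since $u_i \in \S^{d-1}$) to get
\begin{equation}
    \norm{w}_2 \le \frac{1}{2}\sum_{i=1}^k \abs{\mu_i}\norm{u_i}_2 = \frac{1}{2}\sum_{i=1}^k \abs{\mu_i} = \sum_{i=1}^k \abs{\lambda_i}.
\end{equation}

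There is no real obstacle here; the only thing to be careful about is that the lemma statement uses $\lambda_i$ both as the reparameterization of the $\mu_i$'s and (implicitly) as the quantity appearing in the norm bound, so one should simply verify that both statements use the same $\lambda_i = \mu_i/2$.
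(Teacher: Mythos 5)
Your proof is correct and essentially identical to the paper's: both use the identity $\relu(z) = \tfrac{1}{2}(z + |z|)$, set $w = \tfrac{1}{2}\sum_i \mu_i u_i$ and $\lambda_i = \mu_i/2$, and obtain the norm bound by the triangle inequality with $\|u_i\|_2 = 1$.
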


\begin{proof}
    Note that $\relu(z) = |z|/2 + z/2$ for any $z\in\R$, so we can write
    \begin{equation}
        f(x) = \sum^k_{i=1} \frac{\mu_i}{2} \cdot |\iprod{u_i,x}| + \Bigl\langle\sum^k_{i=1} \frac{\mu_i}{2} \cdot u_i, x\Bigr\rangle\,.
    \end{equation}
    We can thus take $w \triangleq \sum_i \frac{\mu_i}{2} \cdot u_i$ and $\lambda_i \triangleq \mu_i / 2$. The last part of the lemma is immediate.
\end{proof}

\noindent In light of Lemma~\ref{lem:absvalue}, given $w\in\R^d$ and $(\lambda_1,u_1),\ldots,(\lambda_k,u_k) \in \R\times\S^{d-1}$, let
\begin{equation}
    \fwlamu(x) \triangleq \iprod{w,x} + \sum^k_{i=1} \lambda_i \cdot |\iprod{u_i,x}|\,.
\end{equation}
When the $\lambda_i$, $u_i$, and $k$ are clear from context, we denote $\brc{\lambda_i, u_i}_{i\in[k]}$ by $(\vlam, \vec{u})$. Given $S\subseteq[k]$, we denote $\brc{\lambda_i, u_i}_{i\in S}$ by $(\vlam_S, \vu_S)$.
We call $k$ the \emph{width} of the network $\fwlamu$. Our bounds will depend on the $L_1$ norm of $\vlam$. Thus, henceforth suppose $\norm{\vlam}_1\le \radius$ for some parameter $\radius \ge 1$. Note that by the last part of Lemma~\ref{lem:absvalue}, we have
\begin{equation}
    \norm{w} \le \norm{\vlam}_1 \le \radius\,.
\end{equation}

Given $(\lambda_1,u_1),\ldots,(\lambda_k,u_k)\in\R\times\S^{d-1}$,  $g\in\S^{d-1}$, and $\ell\in\mathbb{N}$, define
\begin{equation}
    T_\ell(\brc{\lambda_i, u_i}_{i\in[k]}) \triangleq \sum_i \lambda_i u_i^{\otimes \ell} \qquad \text{and} \qquad M^g_\ell(\brc{\lambda_i, u_i}_{i\in[k]}) \triangleq \sum_i \lambda_i \iprod{u_i, g}^{\ell - 2} \cdot u_i u_i^\top\,,
\end{equation}
noting that $M^g_\ell$ can be obtained by contracting the \emph{moment tensor} $T_\ell$ along the direction $g$ for all of the first $\ell - 2$ modes.
When $g$ is clear from context, we denote $M^g_\ell$ by $M_\ell$.

% Let 
% \begin{equation}
%     \flamu(x) \triangleq \sum^k_{i=1} \lambda_i \cdot \relu(\iprod{u_i,x})\,.
% \end{equation}

Given $(\vlam,\vec{u}), (\vec{\lambda'},\vec{u'})\in (\R\times \S^{d-1})^k$, define the \emph{parameter distance}
\begin{equation}
    \paramdist((\vlam,\vec{u}), (\vec{\lambda'},\vec{u'})) \triangleq \min_{\pi}\max_i\brc{|\lambda_i - \lambda'_{\pi(i)}| + \norm{u_i - u'_{\pi(i)}}}\,,
\end{equation}
where the minimization is over all possible permutations of $k$ elements.

% It will often be useful to write $\flamu$ in a form that clarifies which hemisphere each $u_i$ belongs to. For that reason, we will sometimes denote by $\wt{u}_i = \sgn((\wt{u})_i)\cdot u_i$. Note that if $\vec{u}$ contains both $u_i$ and $u_j = -u_i$ for some $i,j$, $\wt{u}_i = \wt{u}_j$. In this case, there is a subset $S_{\vec{u}}\subset[k]$ such that\begin{equation}
%     \flamu(x) = \sum_{i\in S_{\vec{u}}} \lambda^+_i \relu(\iprod{\wt{u}_i,x}) + \lambda^-_i \relu(\iprod{-\wt{u}_i,x})\,.
%     \label{eq:signed_form}
% \end{equation}
% We refer to this as the \emph{signed form} of $\flamu$, and we say that \emph{width of the signed form} of $\flamu$ is given by twice the number of elements $i\in S_{\vec{u}}$ for which $\lambda^+_i + \lambda^-_i \neq 0$. This width captures the number of unsigned directions in $\S^{d-1}/\mathbb{Z}_2$ present among the neurons of $\flamu$ which don't contribute to the first Hermite coefficient of $\flamu$.

\subsection{Anti-concentration}
\label{sec:anti}

A main component of our algorithm is to contract estimates for the moment tensors $T_\ell$ along a random unit direction $g$ to get the matrices $M^g_\ell$ defined in the previous section. The most important feature of this operation is that it roughly preserves distances in the sense that if two weight vectors $u_i, u_j$ are close/far, their projections $\iprod{u_i, g}$ and $\iprod{u_j, g}$ are as well. Formally, we have the following elementary bounds, which follow by standard anti-concentration (see Appendix~\ref{app:anti}).

\begin{lemma}\label{lem:anti}
    With probability at least $4/5$ over random $g\in\S^{d-1}$, for all $i, j$ and $\sigma \in \{\pm 1\}$,
    \[
       \frac{c}{\sqrt{d}}\cdot \frac{1}{k^2} \le \frac{|\iprod{u_i + \sigma u_j, g}|}{\norm{u_i + \sigma u_j}} \le \frac{c'}{\sqrt{d}}\cdot \sqrt{\log k}
    \]
    for some absolute constants $c, c' > 0$. 
    % TODO: relate $|u_i \pm u_j|$'s to $|v_i \pm v_j|$'s up to $\poly(d)$ factors, with probability $9/10$ over $g$. Latter is lower bounded by former times $O(1/k\sqrt{d})$ and upper bounded by former times $\Omega(\sqrt{d\log k})$.
\end{lemma}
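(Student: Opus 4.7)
The plan is to reduce everything to standard concentration and anti-concentration for the inner product of a fixed unit vector with a uniform random point on the sphere. Writing $v_{ij,\sigma} \triangleq (u_i+\sigma u_j)/\|u_i+\sigma u_j\|$ whenever $u_i+\sigma u_j \neq 0$, the quantity in question is simply $|\iprod{v_{ij,\sigma},g}|$, and there are at most $4\binom{k}{2}+k = O(k^2)$ distinct triples $(i,j,\sigma)$ to consider. (The degenerate case $u_i+\sigma u_j=0$ is vacuous.)

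Next I would invoke the classical fact that for any fixed unit vector $v \in \S^{d-1}$ and $g$ uniform on $\S^{d-1}$, the scalar $\iprod{v,g}$ has density $c_d(1-z^2)^{(d-3)/2}$ on $[-1,1]$, with $c_d = \Theta(\sqrt{d})$ via Stirling's formula. From this density one extracts two bounds: a concentration bound $\Pr[|\iprod{v,g}| \ge t/\sqrt{d}] \le 2\exp(-\Omega(t^2))$ valid for $1 \le t \le \sqrt{d}$, and an anti-concentration bound $\Pr[|\iprod{v,g}| \le s/\sqrt{d}] \le C\,s$ valid for all $s \ge 0$. The latter follows immediately from the density being bounded uniformly by $O(\sqrt{d})$ on $[-1,1]$; the former by integrating the density or equivalently by using the representation $\iprod{v,g} \stackrel{d}{=} X_1/\|X\|$ with $X \sim \calN(0,\Id)$ together with standard Gaussian tail and norm concentration bounds.

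To conclude, I would apply the concentration bound to each $v_{ij,\sigma}$ with $t = C'\sqrt{\log k}$ for a large enough absolute constant $C'$, obtaining a per-triple failure probability of $O(1/k^4)$; and the anti-concentration bound with $s = c/k^2$ for a small enough absolute constant $c$, obtaining a per-triple failure probability of $O(1/k^2)$. Union bounding each of these over the $O(k^2)$ triples and summing the two failure events gives a total failure probability of at most $1/5$, yielding the claimed $4/5$ success probability with appropriate absolute constants in the numerators.

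There is no genuine obstacle here: the lemma is elementary spherical anti-concentration, and the only point requiring any attention is calibrating constants so that the $1/k^2$ slack in the anti-concentration bound comfortably absorbs the $O(k^2)$ union bound, and correspondingly that the $\sqrt{\log k}$ factor in the concentration bound suffices after union bounding. This is consistent with the excerpt deferring the bookkeeping to Appendix~\ref{app:anti}.
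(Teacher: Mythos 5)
Your proof is correct and takes essentially the same approach as the paper. The paper likewise reduces to Gaussian concentration and anti-concentration via the representation $g = h/\|h\|$ with $h \sim \calN(0,\Id)$ (the ``equivalent'' route you mention), applies a per-triple failure probability of $\delta = 1/10k^2$ to each normalized $u_i + \sigma u_j$, and union bounds over the $O(k^2)$ choices of $(i,j,\sigma)$.
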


\begin{lemma}\label{lem:vlbd}
    With probability at least $9/10$ over random $g \in \S^{d-1}$, we have that $|\iprod{u_i,g}| \ge c/(k\sqrt{d})$ for all $i$, for some absolute constant $c > 0$.
\end{lemma}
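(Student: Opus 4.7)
The plan is to combine a one-dimensional anti-concentration estimate for a single projection with a union bound over the $k$ weight vectors. Specifically, I would first establish that, for any fixed $u\in\S^{d-1}$ and $g$ uniform on $\S^{d-1}$, the random variable $\iprod{u,g}$ is distributed like the first coordinate of a uniform point on the sphere. Its density on $[-1,1]$ is proportional to $(1-x^2)^{(d-3)/2}$, and a standard calculation (normalizing the Beta density, or equivalently bounding the density of $Z_1/\sqrt{Z_1^2+\cdots+Z_d^2}$ for i.i.d.\ standard Gaussian $Z_i$) shows this density is bounded above by some $C'\sqrt{d}$ near the origin, for an absolute constant $C' > 0$. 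Integrating gives the anti-concentration inequality
\[
\Pr_{g\sim\S^{d-1}}\bigl[|\iprod{u,g}|\le t\bigr] \le 2C' t\sqrt{d}
\]
for all $t\ge 0$. This is exactly the kind of ``standard anti-concentration'' the authors defer to Appendix~\ref{app:anti}, and it is essentially the only analytic input needed.

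Second, I would set $t = c/(k\sqrt{d})$ for a small constant $c>0$ to be chosen and apply a union bound over $i\in[k]$:
\[
\Pr\bigl[\exists\, i\in[k]:\; |\iprod{u_i,g}|\le c/(k\sqrt{d})\bigr] \le k\cdot 2C'\cdot \frac{c}{k\sqrt{d}}\cdot \sqrt{d} = 2C' c.
\]
The point of the scaling is that the $k$ from the union bound exactly cancels the $1/k$ in the target lower bound, leaving a quantity that depends only on the absolute constants. Choosing $c>0$ small enough that $2C' c\le 1/10$ (e.g.\ $c = 1/(20 C')$) makes the above failure probability at most $1/10$, which yields the claim with the $9/10$ success probability.

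The only step with any content is the first one, and it is a textbook fact about the marginals of the uniform distribution on a high-dimensional sphere; I expect no real obstacle. The delicate aspect to get right is purely bookkeeping of constants so that the final bound is an absolute constant below $1/10$, and so that the lemma's claimed $c$ depends only on the universal constant in the anti-concentration density bound, not on $d$ or $k$.
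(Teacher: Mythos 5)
Your proposal is correct and takes essentially the same route as the paper: establish one-dimensional anti-concentration of $\iprod{u,g}$ at scale $t\sqrt{d}$, set $t = c/(k\sqrt{d})$ so the union-bound factor of $k$ cancels the $1/k$, and choose $c$ small enough to push the failure probability below $1/10$. The paper proves the anti-concentration estimate (its Lemma~\ref{lem:vec_to_proj}) by writing $g = h/\|h\|$ with $h\sim\mathcal{N}(0,\Id)$, combining Gaussian anti-concentration of $\iprod{u,h}$ with concentration of $\|h\|$ around $\sqrt{d}$, whereas you derive it directly from the Beta-type density $(1-x^2)^{(d-3)/2}$ of a spherical marginal and a Stirling bound showing this density is $O(\sqrt{d})$; both are standard and give the same $\Pr[|\iprod{u,g}|\le t]\lesssim t\sqrt{d}$, so the difference is cosmetic.
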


\noindent Henceforth, we will condition on the event that $g$ satisfies Lemmas~\ref{lem:anti} and \ref{lem:vlbd}. We will denote
\begin{equation}
    v_i \triangleq \iprod{u_i,g} \label{eq:vdef}
\end{equation}
and, because of the absolute values in the definition of $\fwlamu$, we may assume without loss of generality that 
\begin{equation}
    0 \le v_1 \le \cdots \le v_k\,.
\end{equation}

\section{First Attempt: Learning in Time \texorpdfstring{$d^{k^{O(k)}}$}{dkk}}
\label{sec:simple}

Let us recall the technique used by \cite{diakonikolas2020algorithms} to PAC learn one-hidden-layer ReLU networks with non-negative combining weights $\lambda$. The approach is to approximately recover the subspace $U$ spanned by $\vec{u}$ using the matrix of degree-2 Chow parameters (or the second moment matrix), that is, $\mathbb{E} [y \cdot S_2(x)]= \sum_{i=1}^k \lambda_i u_iu_i^\top$. From here, one can see that the span of this second moment matrix can exactly recover the subspace $U$ spanned by $\vec{u}$. Using the non-negativity of $\lambda$, \cite{diakonikolas2020algorithms} argue that the span of the eigenvectors corresponding to the top-$k$ eigenvalues of an approximate second moment matrix (computed using samples) contains $k$-vectors $\hat{\vec{u}}$ such that $\|\flamu -f_{\vlam, \hat{\vec{u}}}\|_2$ is small. In fact, they show a stronger property: for all $i \in [k]$, $ \lambda_i \|u_i - \hat{u}_i\|^2$ is small. Following this, a brute force strategy on this subspace recovers an approximately-good hypothesis. 

For general, possibly negative, combining weights $\vlam$, if we can design a matrix $M \approx \sum_{i=1}^k |\lambda_i| u_iu_i^\top$ which we can estimate using samples, then we can use the technique from \cite{diakonikolas2020algorithms} to guarantee recovery of a $k$-dimensional subspace such that for all $i \in [k]$, $ |\lambda_i| \|u_i - v_i\|^2$ is small, which will guarantee small loss. Our first idea is to take $M$ to be a suitable linear combination of the moment tensor contractions $M_\ell^g = \sum^k_{i=1} \lambda_i \iprod{u_i,g}^{\ell - 2} \cdot u_i u_i^\top$. That is, we would like to find coefficients $\brc{\alpha_s}_{s\in[k]}$ such that 
\[
\sum_{s=1}^{k} \alpha_{s}M_{2s}^g \approx \sum_{i=1}^k |\lambda_i| u_iu_i^\top\,.
\] 
As long as the entries of $\alpha$ are not too large in magnitude, we can use a net-argument to brute-force over the choices of $\alpha$ and run \cite{diakonikolas2020algorithms} for each choice of $\alpha$. Showing that there exists such $\alpha_{s}$ for $s \in [k]$ reduces to showing that for all $i \in [k]$
\[
\sum_{s=1}^{k} \alpha_{s} v_i^{2(s-1)} = \sgn(\lambda_i).
\]
This is equivalent to showing the existence of a univariate polynomial $p$ of degree $k - 1$ with bounded norm that matches the sign pattern of $\lambda$ on inputs $\{v_1^2, \ldots, v_k^2\}$. 

If we had that $|v_i^2 - v_j^2|$ was lower bounded for all $i\neq j$, then using the following condition number bound for the $k \times k$ Vandermonde matrix generated by $v_1^2, \ldots, v_k^2$ would give us the desired $\alpha$ with bounded norm dependent on the gap:
\begin{lemma}[E.g., Fact 5.10 from \cite{chen2022learningpoly}]\label{lem:vandermonde}
    Given an $m\times m$ Vandermonde matrix $V$ generated by nodes $a_1,\ldots,a_m$ for which $|a_i - a_j| \ge \Delta$ for all $i\neq j$, and given $c\in\R^m$, there exists $\alpha$ for which $V\alpha = c$ such that $\norm{\alpha} \le m(1/\Delta)^{2m - 2}\norm{c}$.
\end{lemma}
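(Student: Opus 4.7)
The plan is to construct $\alpha$ explicitly via Lagrange interpolation. The system $V\alpha = c$ is equivalent to asking for the coefficient vector $\alpha$ of a polynomial $p(x) = \sum_{k=0}^{m-1}\alpha_k x^k$ of degree at most $m-1$ satisfying $p(a_i) = c_i$ for every $i$. The unique such polynomial is
\begin{equation}
    p(x) = \sum_{i=1}^m c_i \cdot L_i(x), \qquad L_i(x) = \prod_{j \neq i}\frac{x - a_j}{a_i - a_j}\,,
\end{equation}
so bounding $\norm{\alpha}$ reduces to bounding the coefficients of each $L_i$.

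For the denominators I would use the separation assumption directly: $|\prod_{j \neq i}(a_i - a_j)| \ge \Delta^{m-1}$. For the numerators, I would expand $\prod_{j \neq i}(x - a_j) = \sum_{k=0}^{m-1}(-1)^{m-1-k}\, e_{m-1-k}(\{a_j\}_{j \neq i})\cdot x^k$ as a sum over elementary symmetric polynomials in the $a_j$'s. The main obstacle is controlling these $e_r$'s: without a bound on $|a_j|$ they can be arbitrarily large. In the intended application, however, $a_i = v_i^2 \in [0,1]$ by Lemma~\ref{lem:anti}, so $|e_r(\{a_j\}_{j\neq i})| \le \binom{m-1}{r} \le 2^{m-1}$, and each coefficient of $L_i(x)$ has magnitude at most $2^{m-1}/\Delta^{m-1}$.

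Finally, I would collect: $\alpha_k = \sum_i c_i \cdot [\text{coeff of }x^k\text{ in }L_i]$, so by the triangle inequality and Cauchy-Schwarz, $|\alpha_k| \le (2^{m-1}/\Delta^{m-1})\sum_i |c_i| \le \sqrt{m}\norm{c}\cdot 2^{m-1}/\Delta^{m-1}$, and hence $\norm{\alpha} \le m \cdot 2^{m-1}/\Delta^{m-1} \cdot \norm{c}$. The slightly unusual exponent $2m-2$ in the lemma statement arises simply because, in the regime of interest $\Delta \le 1/2$, we can absorb the $2^{m-1}$ factor into $1/\Delta^{m-1}$: indeed $2^{m-1}/\Delta^{m-1} \le (1/\Delta)^{2(m-1)}$ whenever $\Delta \le 1/2$, yielding the stated bound $\norm{\alpha} \le m(1/\Delta)^{2m-2}\norm{c}$.
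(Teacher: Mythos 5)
The paper does not prove this lemma; it cites it as Fact~5.10 of the reference \cite{chen2022learningpoly}, so there is no in-paper argument to compare against. Your Lagrange-interpolation argument is the standard, self-contained route (it is in essence Gautschi's classical bound on the inverse of a Vandermonde matrix), and the mechanics are correct: the denominator bound $|\prod_{j\neq i}(a_i-a_j)|\ge \Delta^{m-1}$ is right, the numerator coefficients are signed elementary symmetric polynomials, and your norm bookkeeping is fine.

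You are also right to flag that the elementary symmetric polynomials cannot be controlled without a bound on $|a_j|$, and this is more than a cosmetic point: the lemma as literally stated is false without it. For instance with $m=2$, $a_1=-1$, $a_2=1$ one has $\Delta=2$, and for $c=(1,0)^\top$ the solution is $\alpha=(1/2,-1/2)$ with $\norm{\alpha}=1/\sqrt{2}$, exceeding the claimed $m(1/\Delta)^{2m-2}\norm{c}=1/2$. The boundedness of the nodes is an implicit hypothesis of the cited fact (and holds in the paper's application, where $a_i=v_i^2\in[0,1]$, forcing $\Delta\le 1$). Your final absorption $2^{m-1}/\Delta^{m-1}\le (1/\Delta)^{2m-2}$ indeed needs $\Delta\le 1/2$; note that this is automatic once $m\ge 3$ with nodes in $[0,1]$, since then $(m-1)\Delta\le 1$. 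So the only thing to tighten in your write-up is to state the $|a_i|\le 1$ hypothesis up front as part of the lemma rather than appealing to the application halfway through the proof.
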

\noindent Unfortunately, as we make no assumptions on the hidden weight vectors, it is not necessarily the case that $v^2_1,\ldots,v^2_k$ are well-separated. Nevertheless, we could try clumping together very close $v_i^2$'s into a single representative node (and adding their corresponding combining coefficients) so that each clump is well-separated while the approximation error from clumping does not blow up. In order to formally define clumping at scale $\gamma$, consider a graph on the indices with an edge between indices $i, j$ if $|v_i^2 - v_j^2| \le \gamma$. Then a clumping is specified by a set of disjoint connected subgraphs in this graph. The main challenge is that if we clump things together  at scale $\gamma$, then when we use the above result to construct $\brc{\alpha_s}$, our coefficients are of magnitude $O(1/\gamma^k)$. This would lead to an $O(1/\gamma^k)$ blow up in the error for the indices within each clump when we approximate them by the representative node for that clump. 

If we can find a scale $\gamma$ such that the elements in any clump are $\approx \gamma^{k}$-close while the clumps are $\gamma$-separated from each other, then this blow up does not hurt us. It turns out that we can always find a scale $\gamma = \epsilon^{k^c}$ for some $c \in [0, k-1]$ that satisfies this $\gamma$ versus $\gamma^k$ ``gap.'' To prove this, suppose we are at a scale $\epsilon^{k^c}$ and this property is not satisfied. Then there must be two clumps that are separated by $< \epsilon^{k^c}$, therefore, when we go up a scale $\epsilon^{k^{c-1}}$ then these two clumps would be combined together. This implies that whenever our condition is not satisfied, going up a scale reduces the number of clumps by 1. However, there can be at most $k$ clumps at the smallest scale. Thus at some scale, we must either have our desired gap or we can clump everything together. If we can clump everything together, this implies that the original network is well-approximated by at most two neurons, and we can learn these neurons directly.

This attempt would give us a runtime of $(d/\epsilon)^{k^{O(k)}}$. This argument can be formalized, however we only present the high-level intuition since our main result improves over this significantly. At a high level, the improvement is as follows. So far, we have given an approach that tries to learn the network in ``one shot'' by looking for a single scale at which there is a gap for all clumps simultaneously. Instead, in our improved algorithm, we learn the network over multiple steps. In each step, we identify disjoint clumps at several different scales such that each clump has a gap \emph{for its corresponding scale}, and use PCA to learn the neurons within these clumps. It turns out that instead of going down to scales as small as $\epsilon^{k^k}$, now it suffices to go down to scale $\epsilon^{k^{\log k}}$ (Lemma~\ref{lem:gap_exists}). We then prove that we can find enough such clumps at each step that after $O(\log k)$ steps, we can approximate the entire network, thus yielding an improved runtime of $(d/\epsilon)^{k^{\log^2 k}}$.

% \subsection{Finding }
\section{Recursively Learning in Time \texorpdfstring{$d^{k^{O(\log^2 k)}}$}{dklog2k}}
\label{sec:main}

As the algorithmic guarantee in Theorem~\ref{thm:main} only beats brute force for $k \ll \poly(d)$, throughout we will assume this to simplify some estimates.

For some $S \subseteq[k]$, suppose we have access to $(\wh{\lambda}_i, \wh{u}_i)_{i\in S}$ for which \begin{equation}
    \norm{\fwlamu - f_{0,\vhlam,\vhu} - f_{w,\vlam_{S^c}, \vu_{S^c}}} \le \xi \,.\label{eq:l2close}
\end{equation}
% Furthermore, suppose that
% \begin{equation}
%     \norm{\vlam'_{S^c}}_1 \le \radius \label{eq:norm_inductive}
% \end{equation}
In other words, $\fwlamu - f_{0,\vhlam,\vhu}$, i.e. the difference between the ground truth and what we have learned so far, is close to $\iprod{w,x}$ plus the subnetwork of $f$ indexed by $S^c$. We will refer to this latter network as
\begin{equation}
    \fres \triangleq f_{w,\vlam_{S^c}, \vu_{S^c}}\,.
\end{equation}
% for some weights $\vlam'_S$. We note that these weights need not be the weights corresponding to $\vu_S$ in the original network $\flamu$, though as we will see, they have comparable $L_1$ norm.

For notational convenience, we assume without loss of generality that $S = \brc{\kres + 1,\ldots, k}$, where $\kres \triangleq k - |S|$.
Define the parameters
\begin{equation}
    \epsilon' \triangleq \epsilon / \poly(d,k,\radius) \qquad \ulam \triangleq \poly(d\radius/\epsilon)\cdot \xi^{1/k^{\Theta(\log k)}} \qquad \underline{\gamma} \triangleq \Bigl(\frac{\ulam \epsilon}{\radius d}\Bigr)^{k^{\Theta(\log k)}} \,. \label{eq:paramdef}
\end{equation} 
The parameter $\xi$ will be sufficiently small that $\Lambda \ll 1$.

% \TODO{fix: only set $\epsilon' \triangleq \epsilon^2 \poly(d,k,\radius)$}

% Define the residual
% \begin{equation}
%     \fres = f_{\lambda_{[\kres]}, \vu_{[\kres]}} = \flamu - f_{\vlam_S, \vu_S}\,,
% \end{equation}
% noting that
% \begin{equation}
%     \norm{\fres} = \norm{f_{\vlam,\vu} - f_{\vlam_S,\vu_S}} \le \norm{\flamu - f_{\vhlam,\vhu}} + \xi\,.
%     % \norm{\fres - (\flamu - f_{\vhlam,\vhu})} \le \
% \end{equation}

\noindent One important case in which we will show it is possible to learn a neuron $i\in[\kres]$ is when there is a significant \emph{gap} among the distances from other $v_j$ to $v_i$, i.e. every $v_j$ is either very far from $v_i$ or very close to $v_i$. Formally, given $i\in[\kres]$ and $\gamma > 0$, define
\begin{align}
    \Sfar_i(\gamma) &\triangleq \brc{j\in[\kres]: |v_j - v_i| \ge \gamma} \\
    \Sclose_i(\gamma) &\triangleq \brc{j\in[\kres]: |v_j - v_i| \le T(\gamma)}\,,
\end{align}
where
\begin{equation}
    % T(\gamma) \triangleq \xi^8 \radius^2 \ulam^2 \, (\gamma/d)^{\Theta(k)} \label{eq:Tdef}
    T(\gamma) \triangleq   \frac{\ulam^{10}}{\mathcal{R}^2} \, (\gamma/d)^{\Theta(k)}\,. \label{eq:Tdef}
\end{equation}

\begin{definition}
    Given $i\in[\kres]$ and $0 < \gamma < 1$, we say that $\gamma$ is a \emph{gapped scale for $i$} if $[\kres] = \Sclose_i(\gamma)\sqcup \Sfar_i(\gamma)$ and $\gamma \ge \underline{\gamma}$. Further, we say that $i$ is \emph{detectable at scale $\gamma$} if $|\sum_{j\in\Sclose_i(\gamma)} \lambda_i| > \ulam$.
\end{definition}

\begin{figure}[h]
    \centering
    \includegraphics[width=0.9\textwidth]{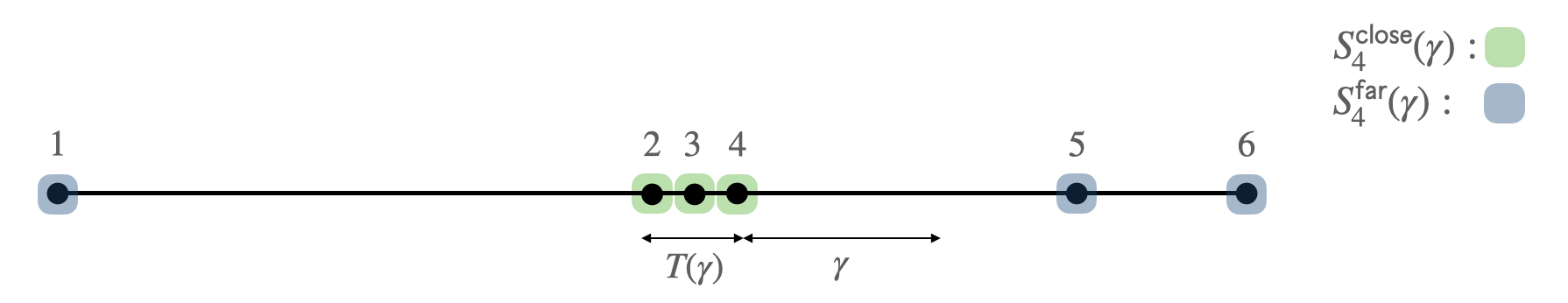}
    \caption{Illustration of $\Sclose_i(\gamma)$ and $\Sfar_i(\gamma)$. The figure shows that $\gamma$ is a gapped scale for $i = 4$, but note that because points 5 and 6 are distance strictly less than $\gamma$ apart and strictly greater than $T(\gamma)$ apart, $\gamma$ is not a gapped scale for $i = 5,6$.}
    \label{fig:gap}
\end{figure}

In this section we will show that one of two things can happen which will allow us to learn part of or all of $\fres$. 
\begin{enumerate}[leftmargin=*]
    \item If $\max_{i\in[\kres]} v_i - \min_{i\in [\kres]} v_i \le \epsilon'$ (this includes the case where $\kres = 0$), then we can find an approximation to $\fres$ as a linear combination of two ReLUs and terminate.
    \item There exists at least one pair $(i,\gamma)\in[\kres]\times[\underline{\gamma},1)$ such that $\gamma$ is a gapped scale for $i$. In this case, let $\calJ$ be any set of such pairs $(i,\gamma)$ such that the sets $\Sclose_i(\gamma)$ are all disjoint (we will specify the precise $\calJ$ that we work with later).
    We will show that:
    \begin{enumerate}
        \item For those $(i,\gamma) \in\calJ$ for which $i$ is detectable at scale $\gamma$, we can use the different $M_\ell$ to construct a net containing a vector close to $u_j$ for some $j\in \Sclose_i(\gamma)$.
        \item For all other $(i,\gamma) \in \calJ$, the subnetwork $\fres'$ of $\fres$ given by the corresponding neurons in $\fres$ is well-approximated by the zero function.
    \end{enumerate}
    Suppose we know which $(i,\gamma)\in\calJ$ fall into Case 2a versus Case 2b (recall that our algorithm is nondeterministic, so along some computation path we will have correctly guessed these). Then after brute-forcing over weights $\wh{\lambda}$ to assign to the neurons learned in Case 2a, we can update the set of pairs $(\whlam_i, \whu_i)_{i\in S}$ to a set of pairs $(\wt{\lambda}_i, \wt{u}_i)_{i\in S'}$ for some $S'\supsetneq S$ such that \begin{equation}
        \norm{\fwlamu - f_{0,\wt{\vlam},\wt{\vu}} - f_{w,\vlam_{{S'}^c}, \vu_{{S'}^c}}} \le \xi^*
    \end{equation}
    for some new error $\xi^*$. We can then recurse, until the set $S'$ is all of $[k]$ and we have learned an approximation to the entire original network $\fwlamu$.
    % along some computation path in our algorithm, we are able to approximately learn all the neurons corresponding to pairs $(i,\gamma)$ in $\calJ$, as well as all neurons in $\Sclose_i(\gamma)$.
\end{enumerate}

\noindent \emph{A priori}, one might need up to $k$ recursive steps to learn $\fwlamu$, but across this many steps the initial error $\xi$ would be blown up by a factor which is doubly exponential in $k$. We will instead argue that starting with $S = \emptyset$, there is sequence of choices of $\calJ$ across only $O(\log k)$ recursive steps such that we end up terminating with a sufficiently good approximation to the original network $\fwlamu$. This will require a delicate combinatorial argument (Section~\ref{sec:shortpath}) which is crucial to obtaining our $d^{\quasipoly(k)}$ runtime and sample complexity.

% Finally, when we have reached the point at which \eqref{eq:l2close} holds for $S = [k]$, we can use linear regression to estimate $w$ and thus learn all of $\fwlamu$.

% Let $\Sgapped\subseteq[\kres]$ denote the set of $i$ for which there exists a gapped scale for $i$.

\subsection{Moment tensor estimations}

Here we show how to estimate $T_\ell(\vlam_{[\kres]}, \vu_{[\kres]})$ using the parameters $(\vhlam, \vhu)$ as well as Gaussian samples $\brc{(x_a,y_a)}_{a\in[N]}$ labeled by the original function $\fwlamu$. We use the following standard guarantee for approximately recovering moment tensors from samples, whose proof is deferred to Appendix~\ref{app:momentest_defer}.

\begin{lemma}[Moment estimation]\label{lem:moment_est}
    Let $\ell \in \brc{1,2,4,6,\ldots}$. Let 
    \begin{equation}
        c_\ell \triangleq \begin{cases}
            1/2 & \text{if} \ \ell = 1 \\
            \frac{\He_\ell(0) + \ell H_{\ell-2}(0)}{\sqrt{2\pi\cdot \ell!}} & \text{if} \ \ell \ \text{even}
        \end{cases}
    \end{equation}
    Given samples $\brc{(x_i, \fwlamu(x_i)}_{i=1,\ldots,N}$ for $x_i\sim\calN(0,\Id)$ and $N\ge \ell^{O(\ell)} d^{2\ell} \radius^2 / \xi^2$, the tensor $\wh{T} = \frac{1}{2c_\ell N}\sum^k_{i=1} \fwlamu(x_i) \cdot S_{\ell}(x_i)$, where $S_{\ell}$ is the $\ell$-th normalized probabilist's Hermite tensor, satisfies $\norm{\wh{T} - T_\ell(\vlam,\vu)}_F \le \xi$ if $\ell$ is even, and otherwise satisfies $\norm{\wh{T} - w}_2 \le \xi$ if $\ell = 1$.
\end{lemma}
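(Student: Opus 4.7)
The proof naturally splits into a bias computation (showing $\E{\wh{T}}$ matches the claimed limit) and a variance computation (showing the empirical average concentrates around its mean).

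\emph{Bias.} By Lemma~\ref{lem:absvalue}, $\fwlamu(x) = \iprod{w,x} + \sum_{i=1}^k \lambda_i|\iprod{u_i,x}|$. Orthogonality of Hermite tensors of distinct degrees immediately yields $\E{\iprod{w,x}\cdot S_\ell(x)} = w$ for $\ell = 1$ and $0$ for $\ell\ge 2$. For each absolute-value term, rotational invariance of the Hermite tensor reduces the computation to one dimension: for any unit vector $u$,
\[
\E{|\iprod{u,x}|\cdot S_\ell(x)} = \alpha_\ell\cdot u^{\otimes\ell}, \qquad \alpha_\ell \triangleq \E[Z\sim\calN(0,1)]{|Z|\cdot\He_\ell(Z)/\sqrt{\ell!}}\,.
\]
Since $|z|$ is even, $\alpha_\ell=0$ for odd $\ell\ge 1$. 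For even $\ell$, a short integration by parts using $(e^{-z^2/2})'=-ze^{-z^2/2}$ and $\He_\ell'(z) = \ell\He_{\ell-1}(z)$ yields
\[
\int_0^\infty z\He_\ell(z)e^{-z^2/2}\,dz = \He_\ell(0) + \ell\He_{\ell-2}(0)\,,
\]
identifying $\alpha_\ell = 2c_\ell$ for $c_\ell$ as in the statement. Summing, $\E{\fwlamu(x)\cdot S_\ell(x)}$ equals $w$ for $\ell=1$ and $2c_\ell T_\ell(\vlam,\vec{u})$ for even $\ell$. Averaging over $N$ samples and normalizing by $2c_\ell$ (which is $1$ when $\ell=1$) gives $\E{\wh{T}}$ as claimed.

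\emph{Concentration.} I bound the entrywise variance and union bound. For any multi-index $\alpha$ of order $\ell$, the entry $(S_\ell(x))_\alpha$ is a normalized Hermite polynomial with $\E{(S_\ell)_\alpha^2}=1$ and $\E{(S_\ell)_\alpha^4}\le \ell^{O(\ell)}$ by Gaussian hypercontractivity. Since $\norm{w}\le\radius$ and $\sum_i|\lambda_i|\le\radius$, a direct expansion gives $\E{\fwlamu(x)^4}\le O(k^4\radius^4)$. By Cauchy-Schwarz, $\E{\fwlamu(x)^2(S_\ell(x))_\alpha^2}\le \ell^{O(\ell)}\radius^2$, so $\mathrm{Var}(\wh{T}_\alpha)\le \ell^{O(\ell)}\radius^2/N$. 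Chebyshev then gives that each entry lies within $\xi/d^{\ell/2}$ of its mean with probability $1-O(\ell^{O(\ell)}\radius^2 d^\ell/(N\xi^2))$, and union-bounding over the $d^\ell$ entries yields $\norm{\wh{T}-T_\ell}_F\le\xi$ with constant probability once $N\ge \ell^{O(\ell)} d^{2\ell}\radius^2/\xi^2$; standard median-of-means boosting attains high probability at the cost of a $\log(1/\delta)$ factor. The $\ell = 1$ case is identical with a $d$-dimensional vector in place of the $d^\ell$-entry tensor.

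\emph{Main obstacle.} The only delicate step is the Hermite-coefficient identification in the bias calculation, specifically keeping the normalization of the multivariate Hermite tensor consistent so that the factor $u^{\otimes \ell}$ emerges without spurious combinatorial factors from repeated indices, and so that the resulting constant matches the $c_\ell$ written in the lemma. Once the mean is pinned down, the concentration argument is a routine variance-plus-union-bound exercise.
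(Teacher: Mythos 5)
Your proof is correct but takes a genuinely different route from the paper's. The paper's Appendix B.1 simply defers to Corollary~42 of the cited prior work (Diakonikolas--Kane--Kontonis--Zarifis, ``smallcovers''), remarking that the cited proof does not actually use positivity of $\lambda_i$, extends to $\ell=1$, and passes from $\relu$ to $|\cdot|$ via $|z|=\relu(z)+\relu(-z)$ together with the known Hermite coefficient $c_\ell=\Theta(\ell^{-5/2})$ of $\relu$. You instead give a self-contained argument: the bias computation via rotational invariance of $S_\ell$ (so that $\E{f(\iprod{u,x})S_\ell(x)}$ is a scalar times $u^{\otimes\ell}$ for any $1$-Lipschitz $f$ and unit $u$) and a direct integration by parts recovering $\alpha_\ell=2c_\ell$, and concentration via entrywise second moments, Gaussian hypercontractivity for the fourth moment of $(S_\ell)_\alpha$, Cauchy--Schwarz, Chebyshev, and a union bound over the $d^\ell$ entries. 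Both are valid; yours has the advantage of being self-contained and making the constant $c_\ell$ transparent, while the paper's is shorter by leaning on prior work. Two small points worth tightening: your bound $\E{\fwlamu(x)^4}\le O(k^4\radius^4)$ can be improved to $O(\radius^4)$ by applying the triangle inequality in $L^4$ and using $\norm{w}+\norm{\vlam}_1\le 2\radius$, which removes a spurious $k^2$ factor from the final sample-complexity bound and matches the statement exactly; and the intermediate claim $\int_0^\infty H_{\ell-1}(z)e^{-z^2/2}\,dz=H_{\ell-2}(0)$ deserves a one-line justification via the Rodrigues formula $H_n(z)e^{-z^2/2}=(-1)^n\frac{d^n}{dz^n}e^{-z^2/2}$.
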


% By \eqref{eq:Trefclose2}, for $(x,y)$ a Gaussian sample labeled according to $\fwlamu$,
%     \begin{align}
%         % \norm{2\,\E{(y - f_{\vhlam,\vhu}(x))\cdot x} - T_1(\vlam'_{[\kres]},\vec{u}_{[\kres]})} &\le \xi' \label{eq:T1diff} \\
%         \norm{\sqrt{2\pi}\,\E{(y - f_{\vhlam,\vhu}(x))\cdot (xx^\top - \Id)} - T_2(\vlam'_{[\kres]},\vec{u}_{[\kres]})} &\le \xi'\,. \label{eq:T2diff}
%     \end{align}

\noindent The following lemma, whose proof is deferred to Appendix~\ref{app:residual_defer}, shows that we can approximate the Hermite coefficients of $\fres$ by empirically estimating the Hermite coefficients of $\fwlamu - f_{0,\wh{\vlam},\wh{\vu}}$:

\begin{lemma}\label{lem:estimate_residual_moments}
    Let \begin{equation}
        \xi' \triangleq \poly(d) \cdot (\radius + \norm{\vhlam}_1)\cdot \xi\,.
    \end{equation}
    Given the parameters $(\vhlam,\vhu)$ and Gaussian samples $(x_1,y_1),\ldots,(x_N,y_N)$ labeled by the original function $\fwlamu$, we have for all $\ell = 2,\ldots,2k+2$ that
    \begin{align}
        \left\|\frac{1}{N}\sum^N_{a=1} (y_a - f_{0,\vhlam, \vhu}(x_a)) \, x_a - w \right\| \le \xi' \\
        \left\|\frac{1}{2c_\ell N}\sum^N_{a=1} (y_a - f_{0,\vhlam, \vhu}(x_a)) \, S_\ell(x_a) - T_\ell(\vlam_{[\kres]}, \vu_{[\kres]}) \right\| \le \xi'
    \end{align}
    provided $N\ge d^{O(k)}\,(\radius^2 + \norm{\vhlam}^2_1)/\xi'^2$. 
    % Here $c_\ell$ denotes the $\ell$-th Hermite coefficient of $\relu(\cdot)$, which is (loosely) upper bounded in magnitude by $1/\sqrt{2\pi}$.
\end{lemma}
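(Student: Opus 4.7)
The plan is to reduce directly to Lemma~\ref{lem:moment_est} by viewing the combined function $g(x) \triangleq \fwlamu(x) - f_{0,\vhlam,\vhu}(x)$ as a single one-hidden-layer ReLU network of width $k + |S|$ with linear part $w$, absolute-value weights given by $\vlam$ concatenated with $-\vhlam$, and hidden directions given by $\vec{u}$ concatenated with $\vhu$. The $L_1$-norm of its combining weights is at most $\radius + \norm{\vhlam}_1$, and the only dependence on $\radius$ inside the proof of Lemma~\ref{lem:moment_est} is via a pointwise bound on the labels, so applying that lemma to $g$ with $\radius$ replaced by $\radius + \norm{\vhlam}_1$ shows that the empirical estimators in the statement concentrate around $w$ (for $\ell = 1$) and $T_\ell(\vlam,\vec{u}) - T_\ell(\vhlam,\vhu)$ (for even $\ell \ge 2$) to Frobenius error at most $\xi'/2$ once $N \ge d^{O(k)}(\radius^2 + \norm{\vhlam}^2_1)/\xi'^2$.

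\textbf{Bias correction via $L^2$ closeness.} It remains to show that $T_\ell(\vlam,\vec{u}) - T_\ell(\vhlam,\vhu)$ is itself $\xi'/2$-close in Frobenius norm to the target $T_\ell(\vlam_{[\kres]}, \vec{u}_{[\kres]})$. Their difference is $T_\ell(\vlam_S, \vec{u}_S) - T_\ell(\vhlam, \vhu)$, which, up to the factor $2c_\ell$, is exactly the $\ell$-th Hermite coefficient tensor of the function
\begin{equation}
\sum_{i\in S}\lambda_i\,|\iprod{u_i,x}| - \sum_{i}\whlam_i\,|\iprod{\whu_i,x}| \;=\; \fwlamu(x) - f_{0,\vhlam,\vhu}(x) - \fres(x),
\end{equation}
whose $L^2$-norm is at most $\xi$ by assumption~\eqref{eq:l2close}. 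Parseval for Hermite tensors then bounds this Frobenius norm by $(2c_\ell)^{-1}\cdot \xi \le \poly(k)\cdot \xi$, using that $c_\ell^{-1}$ grows like $\sqrt{\ell!}$ and $\ell \le 2k+2$; this is at most $\xi'/2$ provided the hidden $\poly(d)$ constant in $\xi' = \poly(d)\cdot (\radius + \norm{\vhlam}_1)\cdot \xi$ is taken sufficiently large. For $\ell = 1$ the bias actually vanishes exactly: absolute-value terms are even in $x$ so their first Hermite coefficient is zero, leaving only sample variance to account for.

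\textbf{Main obstacle.} The only step that is not a direct invocation of prior results is the Parseval inequality for the symmetric Hermite tensor, $\norm{\E[h(x)\, S_\ell(x)]}_F \le C_\ell\cdot \norm{h}_2$, where $C_\ell$ arises from the combinatorial normalization of $S_\ell$ (morally $\sqrt{\ell!}$ from multinomial multiplicities among the $d^\ell$ tensor entries). For $\ell = O(k)$ this constant is at most $k^{O(k)}$, which is comfortably $\poly(d)$ under the standing regime $k \ll \poly(d)$ from Section~\ref{sec:main}, and thus gets absorbed into the definition of $\xi'$. Everything else is a direct application of Lemma~\ref{lem:moment_est} to the combined function $g$, plus the bookkeeping above.
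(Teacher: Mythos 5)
Your proposal is correct and follows essentially the same route as the paper. Both arguments (i) apply Lemma~\ref{lem:moment_est} to the residual labels $y_a - f_{0,\vhlam,\vhu}(x_a)$, which is the same as applying it to the combined width-$(k+|S|)$ network $\fwlamu - f_{0,\vhlam,\vhu}$, with the $L_1$-norm bound $\radius + \norm{\vhlam}_1$ driving the sample complexity, and (ii) control the bias $T_\ell(\vlam_S,\vu_S) - T_\ell(\vhlam,\vhu)$ by converting the $L^2$ bound \eqref{eq:l2close} into a Frobenius bound on moment tensors. For step (ii) the paper invokes Lemma~\ref{lem:2normtofrob}, which is a Cauchy--Schwarz bound of the form $\norm{\E[h\, S_\ell]}_F \lesssim d^{\ell/2}\norm{h}_2$, while you invoke Parseval directly to obtain the slightly sharper $\sqrt{\ell!}\,\norm{h}_2$; both constants are $d^{O(k)}$ for $\ell\le 2k+2$ and get absorbed into $\xi'$, so the two routes are interchangeable here. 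Two small imprecisions worth flagging: the factor $\sqrt{\ell!}$ you quote comes from the multinomial overcounting in the Parseval step, not from $c_\ell^{-1}$ (which is only polynomial in $\ell$ for the normalized Hermite coefficients of $\relu$), so the phrase ``$c_\ell^{-1}$ grows like $\sqrt{\ell!}$'' should instead attribute that growth to the Parseval constant itself; and the net factor being absorbed into ``$\poly(d)$'' is really $d^{O(k)}$ or $k^{O(k)}$ rather than a fixed polynomial, which matches the paper's own (similarly loose) usage and its $N\ge d^{O(k)}/\xi'^2$ sample bound.
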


\begin{remark}\label{remark:label_noise}
    Note that if instead of getting samples of the form $(x,\fwlamu(x))$, we had samples of the form $(x,\fwlamu(x) + \zeta)$ where $\zeta$ is independent mean-zero noise, then the estimators for $T_\ell(\vlam,\vu)$ and $w$ defined above are still unbiased estimators for these quantities. Our algorithm uses its samples solely to form these estimators, so provided $\zeta$ has bounded second moment so that the variance of these estimators is not too large, our algorithm will still work in the presence of such label noise.
\end{remark}

\subsection{Case 1: two-neuron approximation}

\begin{lemma}\label{lem:twoneuron}
    Suppose $\max_{i\in[\kres]} v_i - \min_{i\in[\kres]} v_i \le \epsilon'$. Then there is an efficient algorithm that takes the parameters $(\vhlam, \vec{\wh{u}})$ as well as $\poly(d)(\radius^2 + \norm{\vhlam}^2_1)/\xi'^2$ samples from the original function $\fwlamu$ and outputs weights $\mu^+, \mu^-$ and a vector $u$ such that the network
    \begin{equation}
        h \triangleq \mu^+ \relu(\iprod{u,\cdot}) + \mu^- \relu(\iprod{-u,\cdot})
    \end{equation} satisfies $\norm{\fwlamu - f_{0,\vhlam,\vhu} - h} \lesim \poly(d,\radius) (\epsilon' + \xi')$.
\end{lemma}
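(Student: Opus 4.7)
The plan is to exploit the geometric consequence of Case~1 --- that all residual $u_i$'s cluster near a common direction --- and to recover that direction and the associated weights via low-degree Hermite moment estimation. By the convention $v_i = \iprod{u_i, g} \geq 0$ (ensured by Lemma~\ref{lem:vlbd}), the hypothesis $\max_i v_i - \min_i v_i \leq \epsilon'$ gives $|\iprod{u_i - u_j, g}| \leq \epsilon'$. Applying Lemma~\ref{lem:anti} with $\sigma = -1$ yields $\|u_i - u_j\| \leq O(k^2 \sqrt{d})\cdot \epsilon'$, so taking $u^* := u_1$, every $u_i$ with $i \in [\kres]$ lies within $\rho := O(k^2 \sqrt{d}\,\epsilon')$ of $u^*$. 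Since $\bigl\||\iprod{u,\cdot}| - |\iprod{u^*,\cdot}|\bigr\|_2 \leq \|u - u^*\|$ under the Gaussian measure, the triangle inequality gives
\[
  \Bigl\|\sum_{i \in [\kres]} \lambda_i |\iprod{u_i,\cdot}| - \Lambda^* |\iprod{u^*,\cdot}|\Bigr\|_2 \leq \radius\,\rho,
\]
where $\Lambda^* := \sum_{i \in [\kres]} \lambda_i$. Combined with the standing assumption $\|\fwlamu - f_{0,\vhlam,\vhu} - \fres\| \leq \xi$, the task reduces to approximating $\iprod{w,x} + \Lambda^*\,|\iprod{u^*,x}|$ by the two-neuron form $h$.

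Next, I would invoke Lemma~\ref{lem:estimate_residual_moments} with $\ell = 1, 2$ on the residual $\fwlamu - f_{0,\vhlam,\vhu}$ to produce estimates $\wh{w}$ and $\wh{T}_2$ with error at most $\xi'$ in the appropriate norms, using $\poly(d)(\radius^2 + \|\vhlam\|_1^2)/\xi'^2$ samples. Since $T_2(\vlam_{[\kres]},\vu_{[\kres]}) = \sum_{i \in [\kres]} \lambda_i u_i u_i^\top$ is within Frobenius error $O(\radius\,\rho)$ of the rank-one matrix $\Lambda^* u^* u^{*\top}$ (by the same clumping bound), a standard Davis--Kahan/Wedin argument produces a top unit eigenvector $\wh{u}$ of $\wh{T}_2$ with $\wh{u} \approx \pm u^*$ and top eigenvalue $\wh\Lambda \approx \Lambda^*$, whenever $|\Lambda^*|$ dominates the total perturbation $\xi' + \radius\rho$. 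When $|\Lambda^*|$ is smaller than this threshold, the absolute-value part in the target is already absorbed into the error budget and need not be matched.

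Finally, using the identity $h = \tfrac{\mu^+ + \mu^-}{2}|\iprod{u,\cdot}| + \tfrac{\mu^+ - \mu^-}{2}\iprod{u,\cdot}$, I would set $u = \wh{u}$, $\tfrac{\mu^+ + \mu^-}{2} = \wh\Lambda$, and $\tfrac{\mu^+ - \mu^-}{2} = \iprod{\wh{w}, \wh{u}}$, and then bound $\|\fwlamu - f_{0,\vhlam,\vhu} - h\|$ by triangle inequality, combining the slack $\xi$, the clumping error $O(\radius\rho)$, the moment estimation error $\xi'$, and the error from representing the linear part as a multiple of $\wh{u}$. The main obstacle I expect is controlling the component of $w$ orthogonal to $u^*$: since $h$'s linear part is forced to lie along its single direction $u$, closing the error bound will likely require the structural fact that in the original ReLU decomposition of $\fwlamu$ one has $w \in \operatorname{span}(u_1,\ldots,u_k)$, together with either an alignment argument tailored to Case~1 or an adaptive choice $u \propto \wh{w}$ when $|\Lambda^*|$ is negligible. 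With $\epsilon' = \epsilon/\poly(d,k,\radius)$ and $\xi' = \poly(d)(\radius + \|\vhlam\|_1)\xi$ as in \eqref{eq:paramdef}, these pieces combine to give the claimed $\poly(d,\radius)(\epsilon' + \xi')$ bound.
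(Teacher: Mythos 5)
Your geometric setup is the same as the paper's: in Case~1 the $u_i$ with $i\in[\kres]$ all lie within $\rho = O(k^2\sqrt{d}\,\epsilon')$ of a common direction, so the abs\nobreakdash-value part of $\fres$ collapses to a single $|\iprod{u_1,\cdot}|$, and one recovers the direction via low-degree Hermite estimates. However, you take a different route from the paper to extract $u_1$ from the degree\nobreakdash-2 moment: you propose Davis--Kahan/Wedin on $\wh{T}_2$, gated by whether $|\Lambda^*|$ exceeds the perturbation, whereas the paper simply \emph{contracts} the degree\nobreakdash-2 estimate against the fixed vector $g$ to get a vector proportional to $(\lambda^+ + \lambda^-)u_1$, then divides by $\iprod{u_1,g}$ using the anti-concentration bound $|\iprod{u_1,g}| \gtrsim 1/(k\sqrt{d})$ from Lemma~\ref{lem:vlbd}. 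That step gives you $(\lambda^+ + \lambda^-)u_1$ directly as a vector (no spectral gap needed), and combined with $(\lambda^+ - \lambda^-)u_1$ from the degree\nobreakdash-1 moment it yields $\lambda^+ u_1$, $\lambda^- u_1$, and the scalars $\lambda^\pm$. The paper then threshholds $\lambda^+$ and $\lambda^-$ \emph{individually} using an AM--GM free parameter $\omega$; your threshold on the single scalar $\Lambda^* = \lambda^+ + \lambda^-$ alone does not suffice, because $\lambda^+,\lambda^-$ can both be large while $\Lambda^*$ nearly cancels, leaving a large linear part that you must still approximate.

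The genuine gap is the one you flag yourself at the end: how to control the component of $w$ orthogonal to $u^*$. Your proposed fix --- invoking $w\in\operatorname{span}(u_1,\ldots,u_k)$ --- does not close it, since that span is $k$-dimensional and $w$ can have a large piece supported on $\{u_i\}_{i\in S}$, which need not align with $u_1$. The paper sidesteps this by arguing structurally: it first constructs the idealized two-neuron network $h^* = \lambda^+\relu(\iprod{u_1,\cdot}) + \lambda^-\relu(\iprod{-u_1,\cdot})$ and shows $\|h^* - \fres\| \lesssim \epsilon'\radius k^2\sqrt d$ via Lemma~\ref{lem:paramtoL2}; then, by Lemma~\ref{lem:2normtofrob}, closeness in $L^2$ forces the degree\nobreakdash-1 Hermite coefficient of the residual to be close to that of $h^*$, namely a scalar multiple of $u_1$. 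In other words, the alignment of the linear part with $u_1$ is \emph{derived} from the $L^2$ bound, not separately postulated. This is exactly the ``alignment argument tailored to Case~1'' you guessed you would need but did not supply, and without it the bound $\norm{\fwlamu - f_{0,\vhlam,\vhu} - h}\lesssim\poly(d,\radius)(\epsilon'+\xi')$ does not follow from your construction.
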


\begin{proof}[Sketch, see Appendix~\ref{app:defer_twoneuron}]
    The condition on $v_1,\ldots,v_\kres$ and the fact that projection along $g$ roughly preserves distances among $u_1,\ldots,u_\kres$ implies that all of the weight vectors in $\fres$ are close in Euclidean distance. As a result, $\fres$ is well-approximated by a ReLU network which only depends on the projection of the input to a single direction, i.e. a width-2 network whose weight vectors are $u$ and $-u$ for some vector $u$. This can be done by considering suitable linear combinations of the first Hermite coefficients of $\fres$, namely $w$, together with a contraction of the second moment matrix, which is simply $\sum_{i\in[\kres]} \lambda_i v_i u_i$, both of which can be estimated from samples by Lemma~\ref{lem:estimate_residual_moments}.
\end{proof}

% If all the $u_i$'s are close, then trivially learnable

% True network $f = \sum_i \lambda_i \relu(u_i,x)$. Network learned so far $\wh{f} = \sum_{i\in S} \wh{\lambda}_i \relu(\wh{u}_i, x)$. Idealized residual: $f_{res} = \sum_{i\not \in S} \lambda_i \relu(u_i,x)$.

% \begin{lemma}
%     Suppose for all $i,j\in [\kres]$, $\norm{u_i \pm u_j} \le \epslin$ ($\epslin$ should look something like $\epsilon/\poly(k)$). Suppose that $\norm{f - \wh{f} - f_{res}}_2 \le ???$. Then there is an algorithm which, given $???$ samples from $f$ and also access to $\wh{f}$, outputs an estimator $a \relu(u,x) + b \relu(-u,x)$ which is $???$-close in $L_2$ to $f_{res}$.
% \end{lemma}

% \begin{lemma}
%     modular thing about collapsing bunch of similar relus
% \end{lemma}

% \begin{proof}
%      Use Lemma~\ref{lem:paramtoL2}
% \end{proof}

\subsection{Existence of a gapped scale}

As a warmup, here we show that if we are not in Case 1, there exists $i\in[\kres]$ for which there is a gapped scale $\gamma$.
% We will ultimately show something stronger in Section~\ref{sec:logrounds}, namely that there are \emph{sufficiently many} $i\in[\kres]$ with gapped scales that if in every iteration of our recursive procedure we approximately learn and add to $S$ all remaining $u_i$ for $i\in [\kres]$ with gapped scales, then the algorithm will terminate after $O(\log k)$ layers of recursion.

\begin{lemma}\label{lem:gap_exists}
    % \TODO{need $\underline{\gamma}$ to be smaller than what you get by iterating $T(\cdot/\kres)/\kres$ a total of $O(\log k)$ times starting from $\epsilon'$} 
    Suppose $\max_{i\in[\kres]} v_i - \min_{i\in[\kres]} v_i > \epsilon'$. Then there exists at least one index $i\in[\kres]$ for which there is a gapped scale $\gamma$.
\end{lemma}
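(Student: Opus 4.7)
My plan is to analyze a recursive single-linkage procedure on the sorted values $v_1 \leq \cdots \leq v_\kres$ and stop as soon as a gapped index can be identified. I will maintain a cluster $S_t$, starting with $S_0 = [\kres]$ and invariantly contiguous in sorted order. At each step $t$, let $g_t$ be the largest consecutive gap within $S_t$; this gap splits $S_t$ into a left part $L_t$ and a right part $R_t$. If either $\mathrm{diam}(L_t) \leq T(g_t)$ or $\mathrm{diam}(R_t) \leq T(g_t)$, terminate and declare any index in the small-diameter side as gapped at scale $\gamma = g_t$. Otherwise, set $S_{t+1}$ to be the smaller-cardinality side and recurse; this ensures $|S_{t+1}| \leq |S_t|/2$, so the recursion terminates in at most $\tau = \lceil \log_2 \kres \rceil$ steps. (If $|S_t| = 1$ ever occurs, the sole index is vacuously gapped at any scale $\leq g_{t-1}$.)

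To justify termination, I need to show that an index $i$ declared gapped at step $t$ is actually gapped with respect to the full $[\kres]$, not just $S_t$. Indices $j$ on the same side as $i$ satisfy $d_{ij} \leq T(g_t)$, while indices $j$ across the cut satisfy $d_{ij} \geq g_t$. For $j \in [\kres] \setminus S_t$, the key observation is that the sequence of splitting gaps $(g_s)$ is non-increasing (each $g_{s+1}$ is bounded by the second-largest gap of $S_s$, which is at most $g_s$). Since $j$ was separated from $S_t$ at some earlier step $s < t$ across a gap of size $g_s \geq g_t$, we have $d_{ij} \geq g_s \geq g_t$, placing $j$ in $\Sfar_i(g_t)$.

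Next I will verify that the termination scale satisfies $g_t \geq \underline{\gamma}$. Pigeonhole on the consecutive gaps together with the hypothesis $v_\kres - v_1 > \epsilon'$ gives $g_0 \geq \epsilon'/k$. When step $t$ does not terminate, $\mathrm{diam}(S_{t+1}) > T(g_t)$, and so pigeonhole within $S_{t+1}$ gives $g_{t+1} \geq T(g_t)/k$. Substituting $T(\gamma) = (\Lambda^{10}/\mathcal{R}^2)(\gamma/d)^{\Theta(k)}$ and taking logarithms yields the recurrence $\log(1/g_{t+1}) \leq \Theta(k) \cdot \log(1/g_t) + O(k \log(dk\mathcal{R}/\Lambda))$. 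Iterating this for $\tau = O(\log \kres)$ steps yields $\log(1/g_\tau) \leq k^{O(\log k)} \cdot \log(d\mathcal{R}/(\Lambda\epsilon))$, which is at most $\log(1/\underline{\gamma})$ by the definition of $\underline{\gamma}$ with exponent $k^{\Theta(\log k)}$.

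The main obstacle is balancing the two competing goals of the recursion: keeping the depth at $O(\log k)$, so that the iterated tower $T^{(t)}$ does not force $g_t$ below $\underline{\gamma}$, while simultaneously guaranteeing that a gapped index is produced at termination. Recursing on the smaller-cardinality side is crucial — recursing on the larger side could allow depth $\Omega(\kres)$ and would only recover the weaker $d^{-k^{\Theta(k)}}$ bound of Section~\ref{sec:simple}. The monotonicity of the splitting gaps $g_t$, which lifts ``gapped within $S_t$'' to ``gapped within $[\kres]$,'' is the other subtle ingredient that makes the descent go through.
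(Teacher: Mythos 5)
Your proof is correct and follows essentially the same binary-search idea as the paper: identify the largest consecutive gap, recurse on the smaller (cardinality-wise) side so the depth is $O(\log k)$, and iterate $T$ that many times to obtain the $\underline{\gamma}$ bound. Your explicit tracking of the cluster $S_t$ and the observation that the splitting gaps are non-increasing (which lets you lift gappedness within $S_t$ to all of $[\kres]$) is a cleaner formalization of the paper's somewhat informal ``without loss of generality'' reductions, but the core decomposition and quantitative recurrence are the same.
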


\begin{proof}
    In this proof, assume without loss of generality that $v_1 \le \cdots \le v_\kres$, and define $\delta_i \triangleq v_i - v_{i-1}$ for $1 < i \le \kres$. By averaging, there exists some $i_0$ for which $\delta_{i_0} \ge \epsilon'/\kres$. Suppose without loss of generality that $i_0 > \kres/2$. 
    
    If $\delta_j \le T(\epsilon'/\kres)/\kres$ for all $i_0 < j \le \kres$, then $\epsilon'/\kres$ is a gapped scale for $i_0$. Otherwise, there exists $i_0 < i_1 \le \kres$ for which $\delta_{i_1} > T(\epsilon'/\kres)/\kres$. Suppose without loss of generality that $i_1$ is closer to $\kres$ than $i_0$. 
    
    Again, if $\delta_j \le T(T(\epsilon'/\kres)/\kres)/\kres$ for all $i_1 < j \le \kres$, then $T(\epsilon'/\kres)/\kres$ is a gapped scale for $i_0$. Otherwise, we can continue this binary search procedure at most $O(\log k)$ times. 

    Finally, we verify that $\underline{\gamma}$ is smaller than the result of iterating $z\mapsto T(z/\kres)/\kres$ for $O(\log k)$ times starting from $\epsilon'$, which will prove that there exists a gapped scale $\gamma$ for some $i\in[\kres]$.

    To verify this, note that $T(z/\kres)/\kres \ge T(z/k)/k \ge (\ulam^{10}/\radius^2)(z/d)^{\Theta(k)}$. So iterating this $O(\log k)$ times starting from $z = \epsilon'$ gives a quantity which is at least $\Bigl(\frac{\ulam \epsilon}{\radius d}\Bigr)^{k^{\Theta(\log k)}} \approx \underline{\gamma}$.
    % \begin{equation}
    %     (\xi^10\radius/d^{\Theta(k)})^{k^{O(\log k)}}
    % \end{equation}
\end{proof}

\subsection{Case 2a: detectable neurons}
\label{sec:case2a}

Here we show that for any $(i,\gamma)$ for which $\gamma$ is a gapped scale for $i$ and furthermore $i$ is detectable, we can use a certain PCA-based procedure to produce a net over vectors, at least one of which is close to $u_i$.

The main ingredient in the proof is the following consequence of an estimate for power sums (Lemma~\ref{lem:powersum}) that we prove in Section~\ref{sec:powersum}.

\begin{lemma}\label{lem:powersum_specific}
    Consider $i\in[\kres]$ and $\gamma$ which is a gapped scale for $i$ and such that $i$ is detectable at $\gamma$. Then for any projector $\Pi\in\R^{d\times d}$ and $r \triangleq \Pi u_i$, we have that
    \begin{equation}
        r^\top M_\ell(\vlam_{[\kres]}, \vec{u}_{[\kres]})\, r \ge C_1 \,\norm{r}^4_2 - C_2
    \end{equation}
    for
    \begin{equation}
        C_1 = \ulam\,(\gamma/d)^{\Theta(k)} \qquad C_2 \lesim \radius \poly(d)\, T(\gamma)^{1/2}\,.
        % C_1 = \ulam\,(\gamma/d)^{\Theta(k)} \qquad C_2 \le \ulam^5\,(\gamma/d)^{\Theta(k)}\,.
    \end{equation}
    for some even integer $2 \le \ell \le 2k + 2$ and absolute constants $c,C > 0$. %\TODO{once we tune $\ulam$ and $T(\gamma)$, come back and update this bound so that it just depends on $\gamma, k, d, \radius, \norm{r}^4_2$}
\end{lemma}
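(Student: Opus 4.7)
The plan is to expand $r^\top M_\ell(\vlam_{[\kres]},\vec{u}_{[\kres]})\, r$ along the partition $[\kres] = \Sclose_i(\gamma) \sqcup \Sfar_i(\gamma)$, approximate the close-cluster terms by a single dominant mass term proportional to $\bigl(\sum_{j\in\Sclose_i(\gamma)}\lambda_j\bigr)\, v_i^{\ell-2}\, \|r\|^4$, and then invoke Lemma~\ref{lem:powersum} to exhibit an even $\ell \in [2, 2k+2]$ for which this dominant term is not cancelled by the far-cluster contribution. Detectability supplies the $\ulam$ factor in $C_1$, and the $\gamma$-gap supplies the $(\gamma/d)^{\Theta(k)}$ factor.

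First I would note that since $r = \Pi u_i$ for an orthogonal projector, $\iprod{u_i, r} = \iprod{u_i, \Pi u_i} = \|\Pi u_i\|^2 = \|r\|^2$. For each $j \in \Sclose_i(\gamma)$, the lower-bound direction of Lemma~\ref{lem:anti} combined with $|v_j - v_i| \le T(\gamma)$ yields $\|u_j - u_i\| \lesim \sqrt{d}\, k^2\, T(\gamma)$, so that $\iprod{u_j, r}^2 = \|r\|^4 + O(\|r\|^3 \|u_j - u_i\|)$ and $v_j^{\ell-2} = v_i^{\ell-2} + O(\ell\, T(\gamma))$ (using $v_j, v_i \le 1$). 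Summing over $\Sclose_i(\gamma)$ then produces
\[
\sum_{j\in\Sclose_i(\gamma)} \lambda_j\, v_j^{\ell-2}\, \iprod{u_j, r}^2 \;=\; v_i^{\ell-2}\,\|r\|^4\!\!\sum_{j\in\Sclose_i(\gamma)}\!\!\lambda_j \;+\; E_{\rm close}(\ell),
\]
with $|E_{\rm close}(\ell)| \lesim \radius\,\poly(d)\,T(\gamma)^{1/2}$ uniformly in the admissible $\ell$, the square-root slack being comfortably absorbed into $C_2$.

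It then remains to find an $\ell$ for which the leading close-cluster term and the far-cluster contribution combine to at least $\ulam\,(\gamma/d)^{\Theta(k)}\|r\|^4$. Viewed as a function of $\ell$, the quantity
\[
v_i^{\ell-2}\|r\|^4\Bigl(\sum_{j\in\Sclose_i(\gamma)}\lambda_j\Bigr) + \sum_{j\in\Sfar_i(\gamma)} \lambda_j\, v_j^{\ell-2}\, \iprod{u_j, r}^2
\]
is a weighted power sum in the nodes $\{v_i\} \cup \{v_j : j\in\Sfar_i(\gamma)\}$ evaluated at exponents $\ell - 2 \in \{0,2,\ldots,2k\}$, and by construction $v_i$ is $\gamma$-separated from every other node while the weight on $v_i$ has absolute value at least $\ulam\|r\|^4$ by detectability. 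Lemma~\ref{lem:powersum} is designed precisely to handle this setting, producing an even $\ell \in [2, 2k+2]$ that witnesses the desired lower bound. This is the main obstacle of the proof: although the standard Vandermonde/Lagrange bound (Lemma~\ref{lem:vandermonde}) would give the correct quantitative $(\gamma/d)^{\Theta(k)}$ loss, it requires pairwise separation of all nodes, whereas here the far nodes may themselves cluster arbitrarily closely, and the point of Lemma~\ref{lem:powersum} is to isolate the distinguished node $v_i$ using only its own separation from the rest. Once that lemma is in hand, combining with the close-cluster approximation and bounding $|E_{\rm close}(\ell)|$ by $C_2\lesim \radius\,\poly(d)\,T(\gamma)^{1/2}$ finishes the proof; any sign ambiguity in $\sum_{j\in\Sclose_i(\gamma)}\lambda_j$ is absorbed by the freedom of choice among the $k+1$ admissible values of $\ell$.
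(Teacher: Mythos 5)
Your proof is correct and follows essentially the same route as the paper's: both rest on Lemma~\ref{lem:powersum}, and your pre-collapsing of the close cluster onto the single node $v_i$ before invoking it with $k'=1$ simply inlines the $k'\to 1$ reduction and the $\iprod{u_j,r}^2\approx\norm{r}^4$ approximation that the paper instead defers to Lemma~\ref{lem:powersum} (applied with $k'=|\Sclose_i(\gamma)|$) and a separate step relating $\tau$ to $\norm{r}^4$. One small caveat (present in the paper's write-up as well): Lemma~\ref{lem:powersum} lower-bounds $|\iprod{v^{\odot\ell},q}|$, and the choice among admissible $\ell$ does not actually fix the sign as your last sentence suggests; this is harmless because the downstream Lemma~\ref{lem:pca} tolerates a two-sided bound on the quadratic form.
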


\begin{proof}
    Take $k$ and $k'$ in Lemma~\ref{lem:powersum} to be $\kres$ and $|\Sclose_i(\gamma)|$ respectively. Permute $[\kres]$ so that $i$ is now the first element, $\Sclose_i(\gamma)$ consists of the first $k'$ elements, and $\Sfar_i(\gamma)$ consists of the remaining. For every $i\in[\kres]$, let $q_i = \lambda_i \iprod{u_i,r}^2$, and let $v_i$ be as defined in \eqref{eq:vdef}. Note that
    \begin{equation}
        r^\top M_\ell(\vlam_{[\kres]}, \vec{u}_{[\kres]})\, r = \sum^{\kres}_{i=1} \lambda_i \iprod{u_i,g}^{\ell-2} \iprod{u_i,r}^2 = \sum^{\kres}_{i=1} q_i v_i^{\ell} = \iprod{v^{\odot\ell},q}\,. \label{eq:rMr}
    \end{equation}
    As we are conditioning throughout on the event of Lemma~\ref{lem:vlbd}, we have that $|v_i| \ge \alpha$ for all $i$ for $\alpha \triangleq c/(k\sqrt{d})$. We can take $R$ in Lemma~\ref{lem:powersum} to be the assumed bound $\radius$ on $\norm{\vlam}_1$, as $\norm{q}_\infty \le \norm{\vlam}_\infty \le \radius$. And by the hypothesis of the lemma, $|v_i - v_j| \le \beta$ for all $j\in\Sclose_i(\gamma)$ for $\beta \triangleq T(\gamma)$ and $|v_i - v_j| \ge \gamma$ for all $j\in\Sfar_i(\gamma)$.

    By Lemma~\ref{lem:powersum} and \eqref{eq:rMr}, we conclude that
    \begin{align}
        r^\top M_\ell(\vlam_{[\kres]}, \vec{u}_{[\kres]})\, r &\ge \frac{\tau}{2\kres}\Bigl(\frac{c^2\gamma^2}{4k^3d}\Bigr)^\kres - C\radius\kres\,(|\Sclose_i(\gamma)| - 1)\, T(\gamma) \\
        &\ge \frac{\tau}{2k}\Bigl(\frac{c^2\gamma^2}{4k^3 d}\Bigr)^k - C\radius k^2 T(\gamma)
    \end{align}
    for $\tau \triangleq |\sum_{j\in \Sclose_i(\gamma)} q_j|$. It remains to relate $\tau$ to $\norm{r}^4$. Recalling that $r = \Pi u_i$, we have
    \begin{equation}
        \iprod{u_i, r}^2 = \iprod{u_i, \Pi u_i}^2 = \norm{\Pi u_i}^4 = \norm{r}^4\,. \label{eq:useproj}
    \end{equation}
    In addition, for every $j\in\Sclose_i(\gamma)$, we have
    \begin{align}
        |q_j - \lambda_j \iprod{u_i,r}^2| &\le |\lambda_j| \cdot |\iprod{u_j, r}^2 - \iprod{u_i, r}^2| \\
        &\le 2|\lambda_j|\,\min_{\sigma\in\brc{\pm 1}} \norm{u_i - \sigma\cdot u_j} \\
        &\lesim |\lambda_j| \, k\sqrt{d}\cdot \min_{\sigma\in\brc{\pm 1}} \norm{v_i - \sigma\cdot v_j} \\
        &\le T(\gamma)^{1/2} |\lambda_j|\,k\sqrt{d}\,, \label{eq:wjdiff}
    \end{align}
    where the first and third steps are by Lemma~\ref{lem:squaretosigneddiff}, and the second step is by the fact that we are conditioning on the event of Lemma~\ref{lem:anti}.
    Combining \eqref{eq:useproj} and \eqref{eq:wjdiff}, we conclude that
    \begin{align}
        \tau &= \left|\sum_{j\in\Sclose_i(\gamma)} q_j\right| \ge \ulam\norm{r}^4 - T(\gamma)^{1/2} \radius\, k\sqrt{d}\,,
    \end{align}
    from which we get the following more quantitative version of the claimed bound.
    \begin{align}
        r^\top M_\ell(\vlam_{[\kres]}, \vec{u}_{[\kres]})\, r &\ge  \frac{\ulam}{2k}\Bigl(\frac{c^2\gamma^2}{4k^3 d}\Bigr)^k \cdot \norm{r}^4_2 - C\radius k^2 T(\gamma) - \frac{\sqrt{d}}{2}\Bigl(\frac{c^2\gamma^2}{4k^3 d}\Bigr)^k\, T(\gamma)^{1/2}\radius\,. \qedhere
    \end{align}
\end{proof}

\noindent To see why such a lower bound on the quadratic form is useful, we show next that it can be used to obtain a net over vectors containing one which is close to some $u_i$:

\begin{lemma}\label{lem:pca}
    Let $T\subseteq[k]$, and let $\wh{M}_\ell$ be approximations to $M_\ell(\vlam_T, \vu_T)$ satisfying
    \begin{equation}
        \norm{M_\ell(\vlam_T, \vu_T) -\wh{M}_\ell} \le \eta
    \end{equation}
    for all $\ell \in \{2, \ldots, 2k, 2k+2\}$. Let $V$ be the span of all the top-$k$ singular values of $\wh{M}_2, \wh{M}_4, \ldots, \wh{M}_{2k + 2}$. Now let $u'_i = \mathsf{proj}_V(u_i)$ for $i \in T$, and $r_i = u_i - u'_i$. 

    If for some $i\in T$ and some $\ell\in\{2,\ldots,2k + 2\}$, 
    \begin{equation}
        r_i^\top M_\ell(\vlam_T, \vu_T) r_i \ge C_1\, \iprod{r_i, u_i}^2 - C_2, \label{eq:pca_cond}
    \end{equation}
    then if we define $\calS$ to be an $\upsilon$-net over unit vectors in $V$ for $\upsilon \asymp (\eta/C_1)^{1/2} + (C_2/C_1)^{1/4}$, then $\calS$ contains a vector which is $2 \upsilon$-close to $u_i$.
\end{lemma}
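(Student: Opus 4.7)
The plan is to exploit the fact that the range of $M_\ell(\vlam_T,\vu_T)$ has rank at most $|T|\le k$, so the top-$k$ singular subspaces across the $\wh{M}_\ell$'s must capture almost all of the "signal" in each $M_\ell$. First I would observe that by construction $r_i \perp V$, and since $u'_i\in V$ we get the identity $\langle r_i,u_i\rangle = \langle r_i,u'_i+r_i\rangle = \|r_i\|^2$. Thus the hypothesis \eqref{eq:pca_cond} reads
\begin{equation}
    r_i^\top M_\ell(\vlam_T,\vu_T)\, r_i \;\ge\; C_1\,\|r_i\|^4 - C_2\,.
\end{equation}

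Next I would prove a matching upper bound on the same quadratic form. Because $M_\ell(\vlam_T,\vu_T) = \sum_{j\in T}\lambda_j v_j^{\ell-2}\,u_j u_j^\top$ has rank at most $k$, Weyl's inequality gives $\sigma_{k+1}(\wh{M}_\ell)\le \sigma_{k+1}(M_\ell) + \eta = \eta$. Since $V$ contains the top-$k$ singular (= eigen-) subspace of the symmetric matrix $\wh{M}_\ell$, and $r_i\perp V$, the spectral decomposition yields $|r_i^\top \wh{M}_\ell r_i| \le \sigma_{k+1}(\wh{M}_\ell)\,\|r_i\|^2 \le \eta\,\|r_i\|^2$. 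Adding the perturbation error $|r_i^\top (M_\ell - \wh{M}_\ell)r_i|\le \eta\|r_i\|^2$ then gives $r_i^\top M_\ell(\vlam_T,\vu_T) r_i \le 2\eta \|r_i\|^2$.

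Combining the two inequalities produces the quadratic bound
\begin{equation}
    C_1 \|r_i\|^4 \;\le\; 2\eta\,\|r_i\|^2 + C_2\,,
\end{equation}
and solving for $\|r_i\|^2$ (viewing the left side as a quadratic in $\|r_i\|^2$) yields $\|r_i\|^2 \lesssim \eta/C_1 + \sqrt{C_2/C_1}$, hence $\|r_i\| \lesssim (\eta/C_1)^{1/2} + (C_2/C_1)^{1/4} \asymp \upsilon$. Thus $u'_i\in V$ is $O(\upsilon)$-close to $u_i$. Finally, since $u_i$ is a unit vector, $\|u'_i\| = \sqrt{1-\|r_i\|^2}$, so the renormalized vector $\tilde u_i \triangleq u'_i/\|u'_i\|$ is a unit vector in $V$ with $\|u_i-\tilde u_i\|^2 = \|r_i\|^2 + (1-\sqrt{1-\|r_i\|^2})^2 \le 2\|r_i\|^2$ for small $\|r_i\|$; picking the nearest net point gives the claimed $2\upsilon$-close element of $\calS$ (after absorbing constants into the $\asymp$ in the definition of $\upsilon$).

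The only step with any real content is the upper bound $r_i^\top M_\ell r_i \le 2\eta\|r_i\|^2$, and specifically the use of $\mathrm{rank}(M_\ell)\le k$ together with Weyl to control $\sigma_{k+1}(\wh M_\ell)$; everything else is bookkeeping plus a one-line quadratic solve. The cleanest potential pitfall is that the $\wh M_\ell$ are symmetric but indefinite, so one must be careful to use the eigen-decomposition (not the SVD with nonnegative singular values tied to a different pair of bases) when asserting that orthogonality to the top-$k$ subspace of $V$ controls the quadratic form by the $(k+1)$-st singular value — this is why I split $V$ as the joint span of the top-$k$ eigenspaces of each individual $\wh M_\ell$, so that the bound applies uniformly to whichever $\ell$ realizes the hypothesis.
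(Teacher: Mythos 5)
Your proof is correct and follows essentially the same route as the paper: use $\langle r_i,u_i\rangle=\|r_i\|^2$, upper-bound $r_i^\top M_\ell r_i$ by $2\eta\|r_i\|^2$ via the rank-$k$ structure plus Weyl, combine with the hypothesis to get a quadratic in $\|r_i\|^2$, and solve. The one place you are more careful than the paper is the final renormalization of $u'_i$ to a unit vector before invoking the net (the paper's triangle-inequality line silently assumes this), and your remark about eigen- versus singular-subspaces is a harmless nitpick since for a symmetric matrix the two coincide.
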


\begin{proof}
   We will denote $M_\ell(\vlam_T, \vu_T)$ by $M_\ell$ for the rest of the lemma. First let us upper bound $r_i^\top M_\ell r_i$. We have
   \begin{equation}
    r_i^\top M_\ell r_i = r_i^\top \wh{M}_\ell r_i + r_i^\top (M_\ell - \wh{M}_\ell) r_i \le  \lambda_{k + 1} (\wh{M}) \norm{r_i}^2 + \norm{M_\ell - \wh{M}_\ell}_{\sf op}\norm{r_i}^2 \le 2\eta \norm{r_i}^2
   \end{equation}
   Here the first inequality follows from observing that $r_i$ is orthogonal to $V$ which contains the top-$k$ singular subspace of $\wh{M}_\ell$. The last inequality follows from the following two facts: (i) Weyl's inequality, $\norm{\lambda(M_\ell) - \lambda(\wh{M}_\ell)}_1 \le \eta$ (where $\lambda(A)$ are the eigenvalues of $A$ sorted in decreasing order), and (ii) $\lambda_{k+1}(M_\ell) = 0$ since ${\sf rank}(M_\ell) = k$.
   
   Since $\iprod{r_i, u_i} = \iprod{r_i, r_i + u_i'} = \norm{r_i}^2$, we have
    \begin{align*}
        &2\eta\norm{r_i}^2 \ge r_i^\top M_\ell r_i \ge C_1\norm{r_i}^4 - C_2.
    \end{align*}
    This gives us that $\norm{r_i}\le \sqrt{\frac{\eta + \sqrt{\eta^2 + C_1C_2}}{C_1}} \le \sqrt{\frac{2 \eta}{C_1}} + \sqrt[4]{\frac{C_2}{C_1}}$. Since $u'_i$ is the projection of $u$ to $V$, this implies that $V$ contains a point close to $u_i$ in $\ell_2$ distance. 

    Let $\calS$ be an $\upsilon$-net over unit vectors in $V$ for $\upsilon \asymp (\eta/C_1)^{1/2} + (C_2/C_1)^{1/4}$. Then by triangle inequality, we know that there exists a point $z \in N$ such that $\norm{z- u_i} \le \norm{z - u'_i} + \norm{r_i} \le 2 \upsilon$.
    % Note: this is P. 9 of ``algorithms and SQ lower bounds for one-hidden layer networks''
\end{proof}

\noindent We now apply Lemma~\ref{lem:pca} using the bound in Lemma~\ref{lem:powersum_specific}.

\begin{lemma} \label{lem:listS}
    Under the hypotheses of Lemma~\ref{lem:powersum_specific}, there is an algorithm that takes the parameters $(\vhlam, \vec{\wh{u}})$ as well as $\poly(d)\, (\radius^2 + \norm{\vhlam}^2_1)/\xi'^2$ samples from the original function $\fwlamu$ and outputs a list $\calS$ of unit vectors, containing vectors which are $\upsilon$-close to $u_i$ for every $i\in[\kres]$ for which there exists a gapped scale, where
    \begin{equation}
        \upsilon \asymp (\xi'/C_1)^{1/2} + (C_2 / C_1)^{1/4} \lesssim \ulam
        % (d/\underline{\gamma})^{\Theta(k)} (\radius^{1/4} + \norm{\vhlam}^{1/4}_1) \, \xi^{1/4}
        \,.\label{eq:newerror}
    \end{equation}
\end{lemma}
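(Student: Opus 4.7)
The plan is to combine three ingredients already in place: the moment estimates of Lemma~\ref{lem:estimate_residual_moments}, the quadratic form lower bound of Lemma~\ref{lem:powersum_specific}, and the PCA rounding of Lemma~\ref{lem:pca}. Given $(\vhlam,\vhu)$ and $N = \poly(d)\,(\radius^2 + \norm{\vhlam}_1^2)/\xi'^2$ fresh samples from $\fwlamu$, I would first invoke Lemma~\ref{lem:estimate_residual_moments} to compute tensors $\wh{T}_\ell$ for $\ell = 2, 4, \ldots, 2k + 2$ with $\norm{\wh{T}_\ell - T_\ell(\vlam_{[\kres]}, \vec{u}_{[\kres]})}_F \le \xi'$. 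Since $g$ is a unit vector, contracting $\wh{T}_\ell$ along $g$ in its first $\ell - 2$ modes yields matrices $\wh{M}_\ell$ satisfying $\norm{\wh{M}_\ell - M_\ell(\vlam_{[\kres]}, \vec{u}_{[\kres]})}_{\sf op} \le \xi'$.

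Next, form $V$ as the joint span of the top-$k$ singular subspaces of $\wh{M}_2,\wh{M}_4,\ldots,\wh{M}_{2k+2}$; this is a subspace of dimension at most $O(k^2)$. Fix any $i\in[\kres]$ having a gapped scale $\gamma$ at which $i$ is detectable, as required by the hypotheses of Lemma~\ref{lem:powersum_specific}. Let $\Pi$ be the orthogonal projector onto $V^\perp$, and set $r_i \triangleq \Pi u_i = u_i - \mathsf{proj}_V(u_i)$. Applying Lemma~\ref{lem:powersum_specific} with this projector yields
\begin{equation}
    r_i^\top M_\ell(\vlam_{[\kres]}, \vec{u}_{[\kres]})\, r_i \ge C_1 \norm{r_i}^4 - C_2
\end{equation}
for some even $\ell\in\{2,\ldots,2k+2\}$. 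Because $r_i \perp \mathsf{proj}_V(u_i)$, we have $\iprod{r_i,u_i} = \norm{r_i}^2$, so $\norm{r_i}^4 = \iprod{r_i,u_i}^2$, which is exactly the hypothesis \eqref{eq:pca_cond} of Lemma~\ref{lem:pca}.

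Applying Lemma~\ref{lem:pca} with $T = [\kres]$ and $\eta = \xi'$ then produces a $\upsilon$-net $\calS$ over unit vectors in $V$, for $\upsilon \asymp (\xi'/C_1)^{1/2} + (C_2/C_1)^{1/4}$, which contains a vector within $2\upsilon$ of $u_i$. Since $\dim V \le O(k^2)$, such a net of size $(1/\upsilon)^{O(k^2)}$ is straightforward to enumerate, and a single $\calS$ works simultaneously for every relevant $i$ once $\upsilon$ is taken to match the worst admissible $\gamma\ge \underline{\gamma}$.

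The main obstacle is the quantitative check that $\upsilon \lesim \ulam$. This reduces to unwinding the nested definitions in \eqref{eq:paramdef} and \eqref{eq:Tdef}: substituting $C_1 = \ulam(\gamma/d)^{\Theta(k)}$, $C_2 \lesim \radius\poly(d)\, T(\gamma)^{1/2}$ with $T(\gamma) = (\ulam^{10}/\radius^2)(\gamma/d)^{\Theta(k)}$, one sees that $(C_2/C_1)^{1/4}$ contributes roughly a constant power of $\ulam$ multiplied by a small positive power of $\gamma/d$, which is absorbed into $\ulam$ since $\gamma \ge \underline{\gamma}$. The $(\xi'/C_1)^{1/2}$ term is controlled by recalling $\xi' = \poly(d)(\radius + \norm{\vhlam}_1)\xi$ and the calibration $\ulam = \poly(d\radius/\epsilon)\,\xi^{1/k^{\Theta(\log k)}}$, which makes $\xi$ polynomially smaller than a very large power of $\ulam$. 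This is precisely why the exponents $k^{\Theta(\log k)}$ appear in the definitions of $\ulam$ and $\underline{\gamma}$; verifying the inequality cleanly at the worst admissible scale $\gamma = \underline{\gamma}$ dominates the other cases.
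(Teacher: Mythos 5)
Your proof is correct and follows essentially the same path as the paper's: form the empirical moment matrices from Lemma~\ref{lem:estimate_residual_moments}, observe that the projector onto the complement of the joint top-$k$ singular subspace $V$ is precisely the $\Pi$ needed to make Lemma~\ref{lem:powersum_specific} yield the hypothesis \eqref{eq:pca_cond} of Lemma~\ref{lem:pca}, and then invoke Lemma~\ref{lem:pca} and do the parameter accounting. The paper's write-up is terser (it essentially points at the three lemmas and asserts the quantitative check), while you spell out the contraction step, the identification of $\Pi$ with the projector onto $V^\perp$, and the reason $\norm{r_i}^4 = \iprod{r_i,u_i}^2$; these additions are all accurate and consistent with how Lemma~\ref{lem:pca} is stated.
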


\begin{proof}
    Take any such $i\in[\kres]$ with corresponding gapped scale $\gamma$. In Lemma~\ref{lem:pca}, we will take $T \triangleq [\kres]$, $\wh{M}_\ell$ given by the empirical estimators from Lemma~\ref{lem:estimate_residual_moments}, $\eta \triangleq \xi'$, and $C_1, C_2$ as in Lemma~\ref{lem:powersum_specific}. Note that by our choice of $T(\gamma)$ in \eqref{eq:Tdef}, $C_2 / C_1$ is bounded by an arbitrarily small constant multiple of $\Lambda$ for all $\gamma > 0$, and by our choice of $\Gamma$, $(\xi'/C_1)^{1/2} \lesssim \ulam$. Then we conclude that for $\upsilon$ as in \eqref{eq:newerror},
    we can recover all $i$ for which there exists a gapped scale to error of order $\upsilon$. The sample complexity follows from the guarantee of Lemma~\ref{lem:estimate_residual_moments}.
\end{proof}

\subsection{Case 2b: ignoring undetectable neurons}

Here we show that if there are undetectable neurons, then we can approximate them by zero:

\begin{lemma}\label{lem:subtract_linear}
    Let $\Sundet\subset[\kres]\times[\underline{\gamma},1)$ be a set of pairs $(i,\gamma)$ such that $\gamma$ is a gapped scale for $i$ and furthermore $i$ is undetectable at scale $\gamma$. Additionally, suppose that the sets $\brc{\Sclose_{i}(\gamma)}_{(i,\gamma)\in \Sundet}$ are all disjoint.
    
    Then for $\Sdet\triangleq [\kres]\backslash \cup_{(i,\gamma)\in\Sundet} \Sclose_i(\gamma)$,
    \begin{equation}
        \norm{\fres - f_{w,\vec{\lambda}_{\Sdet}, \vu_{\Sdet}}} \lesim T(\gamma) k^3\sqrt{d} + k\ulam %\lesim \xi'\,.
    \end{equation}
    and thus that
    \begin{equation}
        \norm{\fwlamu - f_{0,\vhlam,\vhu} - f_{w,\vec{\lambda}_{\Sdet}, \vu_{\Sdet}}}\lesim T(\gamma) k^3\sqrt{d} + k\ulam + \xi \lesim \ulam\,.
    \end{equation}
\end{lemma}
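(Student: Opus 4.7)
The plan is to bound the error by summing the contributions from each undetectable clump via the triangle inequality. By the definition of $\Sdet$ and the hypothesis that the sets $\Sclose_i(\gamma)$ across $(i,\gamma)\in\Sundet$ are disjoint,
\begin{equation}
    \fres - f_{w,\vec{\lambda}_{\Sdet},\vu_{\Sdet}} \;=\; \sum_{(i,\gamma)\in\Sundet}\;\sum_{j\in\Sclose_i(\gamma)} \lambda_j\,|\iprod{u_j,\cdot}|,
\end{equation}
so it suffices to control the $L^2$ norm of each inner sum and union bound over the at most $k$ pairs in $\Sundet$.

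For a fixed $(i,\gamma)\in\Sundet$, I would add and subtract $|\iprod{u_i,\cdot}|$ to split the clump contribution:
\begin{equation}
    \sum_{j\in\Sclose_i(\gamma)}\lambda_j\,|\iprod{u_j,\cdot}| \;=\; \Bigl(\sum_{j\in\Sclose_i(\gamma)}\lambda_j\Bigr)|\iprod{u_i,\cdot}| \;+\; \sum_{j\in\Sclose_i(\gamma)}\lambda_j\bigl(|\iprod{u_j,\cdot}| - |\iprod{u_i,\cdot}|\bigr).
\end{equation}
The first summand has $L^2$ norm at most $\ulam\cdot O(1)$ by the undetectability hypothesis $\bigl|\sum_{j\in\Sclose_i(\gamma)}\lambda_j\bigr|\le\ulam$ together with $\||\iprod{u_i,\cdot}|\|_2=\sqrt{2/\pi}$; summed over the at most $k$ pairs this produces the $k\ulam$ term. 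For the second summand I would use $1$-Lipschitzness of $|\cdot|$ and Gaussian rotation invariance to bound the $L^2$ norm of $|\iprod{u_j,\cdot}|-|\iprod{u_i,\cdot}|$ by $\norm{u_j-u_i}$, and then apply Lemma~\ref{lem:anti} (with $\sigma=-1$, on whose event we are conditioning) to convert $j\in\Sclose_i(\gamma)$, i.e.\ $|v_i-v_j|\le T(\gamma)$, into $\norm{u_i-u_j}\lesim k^2\sqrt{d}\,T(\gamma)$. Pulling out $|\lambda_j|$ and summing over all $j$ across all clumps, using $\sum_{j\in[\kres]\setminus\Sdet}|\lambda_j|\le \radius$ only once rather than once per clump, yields a contribution of order $\radius k^2\sqrt{d}\,T(\gamma)$, which is absorbed into $T(\gamma) k^3\sqrt{d}$ under the $\lesim$.

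The second inequality then follows immediately from the triangle inequality applied to \eqref{eq:l2close} and the bound we just established. To see that this quantity is $\lesim\ulam$, I would invoke the parameter choices of \eqref{eq:paramdef}: substituting $T(\gamma)=(\ulam^{10}/\radius^2)(\gamma/d)^{\Theta(k)}$ with $\gamma\le 1$ shows that $T(\gamma)k^3\sqrt{d}$ is in fact an extremely high power of $\ulam$, while $\xi \ll \ulam$ holds by the very definition of $\ulam$.

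The main obstacle is bookkeeping rather than conceptual: different pairs $(i,\gamma)\in\Sundet$ may use different gapped scales $\gamma$, so the ``$T(\gamma)$'' in the statement has to be read as the worst case over $\Sundet$. Because $T$ is monotone increasing and every $\gamma$ appearing in $\Sundet$ lies in $[\underline{\gamma},1)$, this worst case is controlled, and the calibrations of $\ulam$ and $\underline{\gamma}$ in \eqref{eq:paramdef} are set up precisely so that the first inequality yields something dominated by $\ulam$ after combining with $\xi$.
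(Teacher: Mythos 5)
Your proposal is correct and follows essentially the same route as the paper: decompose the difference into per-clump contributions, split each into a ``center'' term controlled by undetectability ($|\sum_{j\in\Sclose_i(\gamma)}\lambda_j|\le\ulam$) and a ``spread'' term controlled by converting $|v_i-v_j|\le T(\gamma)$ into $\|u_i-u_j\|\lesim k^2\sqrt{d}\,T(\gamma)$ via Lemma~\ref{lem:anti}, then sum. Your direct Lipschitz/rotation-invariance bound on $\||\iprod{u_j,\cdot}|-|\iprod{u_i,\cdot}|\|_2$ is equivalent to (and a touch tighter than) the paper's invocation of Lemma~\ref{lem:paramtoL2}; the only nit is that $\||\iprod{u_i,\cdot}|\|_2=1$, not $\sqrt{2/\pi}$, which is immaterial.
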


\begin{proof}
    Let $(i,\gamma) \in \Sundet$. By the fact that we are conditioning on the event of Lemma~\ref{lem:anti}, 
    \begin{equation}
        \norm{u_i - u_j} \lesim T(\gamma) k^2\sqrt{d}   \label{eq:unitsclose}
    \end{equation}
    We can thus apply Lemma~\ref{lem:paramtoL2} to the networks $\sum_{j\in \Sclose_i(\gamma)} \lambda_j |\iprod{u_j,\cdot}|$ and $(\sum_{j\in \Sclose_i(\gamma)} \lambda_j) |\iprod{u_i,\cdot}|$ to get
    \begin{equation}
        \Bigl\|\sum_{j\in\Sclose_i(\gamma)} \lambda_j |\iprod{u_j, \cdot}| - \Bigl(\sum_{j\in\Sclose_i(\gamma)} \lambda_j\Bigr)\cdot |\iprod{u_i,\cdot}|\Bigr\| \lesim k^2\sqrt{d} \radius T(\gamma)\cdot |\Sclose_i(\gamma)|\,.
    \end{equation}
    Additionally, because $|\sum_{j\in\Sclose_i(\gamma)} \lambda_j| \le \ulam$, 
    \begin{equation}
        \Bigl\|\Bigl(\sum_{j\in\Sclose_i(\gamma)} \lambda_j\Bigr)\cdot |\iprod{u_i,\cdot}|\Bigr\| \lesim \ulam\,.
    \end{equation}
    The first part of the lemma follows upon noting that $\sum_{(i,\gamma)\in \Sundet} |\Sclose_i(\gamma)| \le k$ and that, by definition of $\fres$ and disjointness of $\Sclose_i(\gamma)$ for $(i,\gamma)\in\Sundet$,
    \begin{equation}
        \fres - f_{w,\vec{\lambda}_{\Sdet}, \vu_{\Sdet}} = \sum_{(i,\gamma)\in\Sundet} \sum_{j\in\Sclose_i(\gamma)} \lambda_j |\iprod{u_j,\cdot}|\,.
    \end{equation}
    The second part then follows by \eqref{eq:l2close}.
\end{proof}

% \begin{lemma}
%     For every $i\in\Sgapped$, let $\gamma_i$ denote the minimal gapped scale for $i$. Then for any $i,j\in\Sgapped$, either $\Sclose_i(\gamma_i)$ and $\Sclose_j(\gamma_j)$ are disjoint or one is a subset of the other.
% \end{lemma}

% \begin{proof}
%     Suppose to the contrary that $\Sclose_i(\gamma_i) \cap \Sclose_j(\gamma_j)$ is both nonempty and a strict subset of both sets. Then there exists $\gamma < \gamma_i$ for which $\Sclose_i(\gamma) = 
% \end{proof}

% \begin{proof}
%     We first show by a greedy construction that there is a set of indices $\brc{i_1,\ldots,i_m}\subseteq[\kres]$ and scalars $\gamma_1,\ldots,\gamma_m \in [\underline{\gamma},\overline{\gamma}]$ for which $[\kres] = \bigsqcup^m_{j=1} \Sclose_{i_j}(\gamma_j)$. Take $i_1$ to be any index for which there exists $\gamma\in[\underline{\gamma},\overline{\gamma}]$ such that $[\kres] = \Sfar_{i_1}(\gamma)\sqcup \Sclose_{i_1}(\gamma)$, and take $\gamma_1$ to be this $\gamma$. Suppose that we have found $i_1,\ldots,i_{m'}$ and $\gamma_1,\ldots,\gamma_{m'}$ such that all $\Sclose_{i_j}(\gamma_j)$ are disjoint.  
% \end{proof}

\subsection{Combining the cases}

We now put together the analysis from the preceding sections. We would like to show that starting from an approximating network $f_{\vhlam,\vhu}$ which satisfies \eqref{eq:l2close}, after one step of either Case 1 or 2, we end up with an $\epsilon$-close estimate of the original function $\flamu$, or otherwise we make progress by approximately learning some new neurons, corresponding to the set of pairs $(i,\gamma)$ given by $\calJ$, from the residual network.

First, following Lemma~\ref{lem:subtract_linear}, let $\Sundet\subseteq\calJ$ denote the set of pairs $(i,\gamma)$ for which $i$ is undetectable at scale $\gamma$, and recall the definition of $\Sdet = [\kres]\backslash \cup_{(i,\gamma)\in\Sundet} \Sclose_i(\gamma)$. By Lemma~\ref{lem:subtract_linear}, we can effectively ignore the neurons in $\cup_{(i,\gamma)\in\Sundet} \Sclose_i(\gamma)$. Among the neurons in $\Sdet$, we can use the analysis for Case 2a to recover those in $\calJ\backslash \Sundet$, i.e. those which have gapped scales at which they are detectable. For these, we can then brute-force over possible weights, resulting in an approximation to the subnetwork given by the neurons in $\calJ\backslash\Sundet$.

\begin{lemma} \label{lem:epscoverlambda}
    Let $\calJ\subseteq S^c$ denote any subset of pairs $(i,\gamma)$ for which $\Sclose_i(\gamma)$ are all disjoint and $\gamma$ is a gapped scale for $i$. Let $\Sundet\subseteq\calJ$ denote the set of $(i,\gamma)$ in $\calJ$ such that $i$ is undetectable at scale $\gamma$.
    
    Suppose for every $(i,\gamma)\in \calJ\backslash\Sundet$, we have a vector $\wt{u}_i$ satisfying $\norm{\wt{u}_i - u_i} \le \upsilon$. Then if we define $\mathcal{S}_\vlam$ to be an $\upsilon$-scale discretization of $[-\mathcal{R}, \mathcal{R}]$ and take $S^\star\triangleq [\kres] \backslash \cup_{(i,\gamma)\in\calJ} \Sclose_i(\gamma)$, then there exist $\brc{\wt{\lambda}_i}_{(i,\gamma)\in\calJ\backslash\Sundet}$ taking values in $\mathcal{S}_\vlam$ such that
    \[
        \left\|\fwlamu - \left(f_{0,\vhlam,\vhu} + \sum_{(i,\gamma)\in \calJ\backslash \Sundet} \wt{\lambda}_i \,|\iprod{\, \wt{u}_i,\cdot}| \right) - f_{w,\vlam_{S^\star}, \vu_{S^\star}} \right\| \lesim \ulam + k^2 \radius \upsilon.
    \]
    for $\upsilon$ defined in \eqref{eq:newerror}.
\end{lemma}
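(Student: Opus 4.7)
The plan is to expand the target by using the disjointness hypothesis on $\calJ$ to get the partition $[\kres]\setminus S^\star = \bigsqcup_{(i,\gamma)\in\calJ}\Sclose_i(\gamma)$. Then $\fres - f_{w,\vlam_{S^\star},\vu_{S^\star}} = \sum_{(i,\gamma)\in\calJ}\sum_{j\in\Sclose_i(\gamma)}\lambda_j|\iprod{u_j,\cdot}|$, and invoking \eqref{eq:l2close} to replace $\fwlamu - f_{0,\vhlam,\vhu}$ by $\fres$ at $L^2$ cost $\xi$, the problem reduces to approximating each clump separately: by $0$ when $(i,\gamma)\in\Sundet$, and by $\wt{\lambda}_i|\iprod{\wt{u}_i,\cdot}|$ when $(i,\gamma)\in\calJ\setminus\Sundet$.

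For the undetectable clumps, I would replay the argument already inside the proof of Lemma~\ref{lem:subtract_linear}. Since we condition on Lemma~\ref{lem:anti}, the definition of $\Sclose_i(\gamma)$ gives $\|u_i-u_j\|\lesim T(\gamma)k^2\sqrt{d}$ for every $j\in\Sclose_i(\gamma)$, so by Lemma~\ref{lem:paramtoL2} the clump is $L^2$-close to the single neuron $\bigl(\sum_{j\in\Sclose_i(\gamma)}\lambda_j\bigr)|\iprod{u_i,\cdot}|$ at cost $\lesim k^2\sqrt{d}\,\radius\, T(\gamma)|\Sclose_i(\gamma)|$; undetectability bounds the surviving coefficient by $\ulam$, so the whole clump has $L^2$ norm $\lesim k^2\sqrt{d}\,\radius\, T(\gamma)|\Sclose_i(\gamma)| + \ulam$, which is exactly what is needed to drop it.

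For the detectable clumps I would perform three triangle-inequality swaps. Step (i): collapse the clump to $\bigl(\sum_{j\in\Sclose_i(\gamma)}\lambda_j\bigr)|\iprod{u_i,\cdot}|$ exactly as above, paying $\lesim k^2\sqrt{d}\,\radius\, T(\gamma)|\Sclose_i(\gamma)|$. Step (ii): replace $u_i$ by $\wt{u}_i$, using the pointwise bound $\bigl|\,|\iprod{a,x}|-|\iprod{b,x}|\,\bigr|\le|\iprod{a-b,x}|$ together with standard Gaussian integration to get an $L^2$ error of at most $\bigl|\sum_j\lambda_j\bigr|\cdot\|u_i-\wt{u}_i\|\le\radius\upsilon$. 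Step (iii): round $\sum_j\lambda_j\in[-\radius,\radius]$ to the nearest $\wt{\lambda}_i\in\mathcal{S}_\vlam$; this element exists because $\mathcal{S}_\vlam$ is an $\upsilon$-net of that interval, and the swap costs $\lesim \upsilon$ since $\||\iprod{\wt{u}_i,\cdot}|\|_{L^2}\le 1$.

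Summing per-clump contributions over at most $k$ clumps in each category yields a total of $\lesim \xi + k\ulam + k^3\sqrt{d}\,\radius\, T_{\max} + k^2\radius\upsilon$, where $T_{\max}$ is the largest $T(\gamma)$ that appears. The main sanity check is that the $T_{\max}$ and $\xi$ terms are dominated by $\ulam$: from \eqref{eq:Tdef}, $T(\gamma) = (\ulam^{10}/\radius^2)(\gamma/d)^{\Theta(k)}\le \ulam^{10}/\radius^2$ for $\gamma\le 1$, so $k^3\sqrt{d}\,\radius\, T_{\max}\ll \ulam$ (this is exactly why the $\ulam^{10}$ prefactor is built into $T$), and by \eqref{eq:paramdef} $\xi$ is chosen so that $\ulam = \poly(d\radius/\epsilon)\,\xi^{1/k^{\Theta(\log k)}}\gg \xi$. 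The leading $k\ulam$ is absorbed into $\ulam$ via the $\poly(d\radius/\epsilon)$ slack in its definition, giving the claimed $\lesim \ulam + k^2\radius\upsilon$. The main obstacle is not conceptual but purely bookkeeping: verifying that every per-clump error, summed over up to $k$ clumps, fits under $\ulam$, which is precisely what the $\ulam^{10}$ cushion in the definition of $T(\gamma)$ and the $\underline{\gamma}$ lower bound in the definition of ``gapped scale'' were designed to accommodate.
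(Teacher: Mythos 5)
Your proposal is correct and takes essentially the same route as the paper: the paper collapses each detectable clump to a single neuron $\wt{\lambda}_i\,|\iprod{\wt{u}_i,\cdot}|$ via Lemma~\ref{lem:paramtoL2} (getting $\lesim k^2\radius\upsilon$ in one step), and disposes of the undetectable clumps by citing Lemma~\ref{lem:subtract_linear}, which you simply re-derive inline. Your per-clump three-step swap (collapse, rotate, round) is a more explicit unwinding of the same bound, and your bookkeeping that the $T_{\max}$, $\xi$, and $k\ulam$ terms all fall under $\ulam$ matches the paper's absorption in Lemma~\ref{lem:subtract_linear}.
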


\begin{proof}
    % Take $f''$ to be the subnetwork of $f'$ consisting of the neurons in $T'$. As $\norm{\lambda^\pm}_1 \le \norm{\vlam'_{[\kres]}}_1$ by Lemma~\ref{lem:subtract_linear}, we certainly have $\norm{\lambda'^\pm}_1 \le \norm{\vlam'_{[\kres]}}_1$.
    Take any $(i,\gamma) \in \calJ \backslash\Sundet$. Let $\wt{\lambda}_i\in \mathcal{S}_\vlam$ be the closest point to $\sum_{j\in \Sclose_i(\gamma)}\lambda_j$ in $\mathcal{S}_\vlam$. Then by Lemma~\ref{lem:paramtoL2},
    \begin{align}
        \MoveEqLeft\Bigl\| \sum_{(i,\gamma)\in \calJ\backslash\Sundet} \sum_{j\in\Sclose_i(\gamma)} \lambda_j\,|\iprod{u_j,\cdot}| - \sum_{(i,\gamma)\in\calJ\backslash\Sundet} \wt{\lambda}_i\,|\iprod{\wt{u}_i, \cdot}|\Bigr\| \\
        &\le \sum_{(i,\gamma)\in \calJ\backslash\Sundet} \Bigl\|\wt{\lambda}_i \,|\iprod{\wt{u}_i,\cdot}| - \sum_{j\in \Sclose_i(\gamma)}\lambda_j \,|\iprod{u_j,\cdot}|\Bigr\| \lesim k^2\radius \upsilon\,.
    \end{align}
    The lemma then follows by Lemma~\ref{lem:subtract_linear}, \eqref{eq:l2close}, and triangle inequality.
\end{proof}

\begin{lemma}\label{lem:twooutcomes}
    At the end of one step of our recursive procedure, either we terminate with a function $\wh{f}: \R^d\to \mathbb{R}$ such that
    \[
        \norm{\fwlamu - \wh{f}} \le \epsilon
    \]
    or we obtain a subset $S'\supsetneq S$ and $(\overline{\vlam}, \overline{\vu}) = (\overline{\lambda}_i, \overline{u}_i)_{i \in S'}$ for which 
    \[
        \norm{\fwlamu - f_{0,\overline{\vlam}, \overline{\vu}} - f_{w,\vlam_{(S')^c}, \vu_{(S')^c}}} \lesim k^2\radius\ulam\,.
    \]
\end{lemma}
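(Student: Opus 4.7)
The plan is a case split on whether the Case 1 condition $\max_{i\in[\kres]} v_i - \min_{i\in[\kres]} v_i \le \epsilon'$ holds. In Case 1, I would invoke Lemma~\ref{lem:twoneuron} on the current parameters $(\vhlam,\vhu)$ to obtain a width-2 network $h$ with $\norm{\fres - h}\lesim \poly(d,\radius)(\epsilon' + \xi')$, then combine with \eqref{eq:l2close} via the triangle inequality to control $\norm{\fwlamu - f_{0,\vhlam,\vhu} - h}$. The key observation is that the parameter choices in \eqref{eq:paramdef} (namely $\epsilon' = \epsilon/\poly(d,k,\radius)$ and $\xi'$ polynomial in $d,\radius,\xi$ with $\xi$ chosen small enough that $\ulam \ll 1$) make this bound at most $\epsilon$, so the first alternative is realized by outputting $\wh{f} \triangleq f_{0,\vhlam,\vhu} + h$. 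The empirical-loss check at the end of the iteration of Algorithm~\ref{alg:main} accepts this on the computation path that guesses Case 1.

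If Case 1 fails, I would use Lemma~\ref{lem:gap_exists} to produce at least one pair $(i,\gamma)\in[\kres]\times[\underline{\gamma},1)$ for which $\gamma$ is a gapped scale, then let $\calJ$ be a maximal collection of such pairs with mutually disjoint $\Sclose_i(\gamma)$ and partition $\calJ = (\calJ\backslash\Sundet)\sqcup \Sundet$ by detectability. Both choices are among the nondeterministic guesses in Algorithm~\ref{alg:main}. Lemma~\ref{lem:listS} then supplies $\wt{u}_i$ with $\norm{\wt{u}_i - u_i}\le \upsilon$ for every $(i,\gamma)\in\calJ\backslash\Sundet$, and Lemma~\ref{lem:epscoverlambda} furnishes weights $\wt{\lambda}_i$ in the $\upsilon$-net over $[-\radius,\radius]$ satisfying
\[
\Bigl\|\fwlamu - \Bigl(f_{0,\vhlam,\vhu} + \sum_{(i,\gamma)\in\calJ\backslash\Sundet}\wt{\lambda}_i\,|\iprod{\wt{u}_i,\cdot}|\Bigr) - f_{w,\vlam_{S^\star},\vu_{S^\star}}\Bigr\| \lesim \ulam + k^2\radius\upsilon \lesim k^2\radius\ulam,
\]
where $S^\star \triangleq [\kres]\backslash \bigcup_{(i,\gamma)\in\calJ}\Sclose_i(\gamma)$.

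To match the statement of the lemma, I would set $S' \triangleq [k]\backslash S^\star = S \cup \bigcup_{(i,\gamma)\in\calJ}\Sclose_i(\gamma)$, which strictly contains $S$ since $\calJ\neq\emptyset$, and assemble $(\overline{\vlam},\overline{\vu})$ indexed by $S'$ by copying $(\vhlam,\vhu)$ on $S$, assigning $(\wt{\lambda}_i,\wt{u}_i)$ to one designated index in each detectable cluster $\Sclose_i(\gamma)$, and placing weight zero (paired with an arbitrary unit vector) on every remaining index of $S'\backslash S$. Then $f_{0,\overline{\vlam},\overline{\vu}}$ matches the learned function in the display above, realizing the second alternative with the claimed error $\lesim k^2\radius\ulam$.

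The main obstacle I anticipate is bookkeeping the nondeterminism: the set $\calJ$, the detectability split of $\calJ$, and the discretized weights $\wt{\lambda}_i$ must all be guessed correctly, and the repackaging into $(\overline{\vlam},\overline{\vu})$ must be done so that the resulting $f_{0,\overline{\vlam},\overline{\vu}}$ really equals the approximator inside the norm. Since each nondeterministic choice ranges over a set of size at most $(d/\epsilon)^{k^{O(\log^2 k)}}$ (nets on the subspace $V$ and on $[-\radius,\radius]$, together with the discrete choice of partition), at least one computation path realizes the bound above, and outputting the empirical-loss minimizer across all paths preserves the guarantee.
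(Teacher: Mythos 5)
Your proof takes essentially the same route as the paper's: Case 1 invokes Lemma~\ref{lem:twoneuron} and uses the parameter choices in \eqref{eq:paramdef} to land at $\epsilon$, and Case 2 chains Lemma~\ref{lem:gap_exists}, Lemma~\ref{lem:listS}, and Lemma~\ref{lem:epscoverlambda} to produce the new parameters with error $\lesim k^2\radius\ulam$. Two small remarks. First, Lemma~\ref{lem:twoneuron} already bounds $\norm{\fwlamu - f_{0,\vhlam,\vhu} - h}$ directly, so the triangle-inequality pass through \eqref{eq:l2close} you describe in Case~1 is unnecessary (harmless, but not needed). Second, and more substantively, your definition $S' = S\cup\bigcup_{(i,\gamma)\in\calJ}\Sclose_i(\gamma)$ (together with the bookkeeping of padding $S'\backslash S$ with zero-weight neurons beyond the one designated index per cluster) is the \emph{correct} choice to make the second display hold, since Lemma~\ref{lem:epscoverlambda}'s residual network is indexed by $S^\star = [\kres]\backslash\bigcup_{(i,\gamma)\in\calJ}\Sclose_i(\gamma)$. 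The paper's own proof instead writes $S' = S\cup\bigcup_{(i,\gamma)\in\calJ}\{i\}$, which does not match the residual set $S^\star$ whenever some $\Sclose_i(\gamma)$ contains more than one index — so the contribution of the remaining neurons in each cluster would be double-counted. You have caught (and implicitly corrected) a small indexing error in the paper's proof; this is consistent with how $\cup_{(i,\gamma)\in\calJ}\Sclose_i(\gamma)$ is treated as the set of removed neurons in Section~\ref{sec:shortpath_proof}.
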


\begin{proof}
    At the start of the recursion step, suppose we are in Case 1, then by Lemma \ref{lem:twoneuron}, using $\poly(d)(\radius^2 + \norm{\vhlam}^2_1)/\xi'^2$ samples, we can find a function $h$ that is a linear combination of two ReLUs such that
    \[
    \norm{\fwlamu - f_{0,\vhlam,\vhu} - h } \lesim \poly(d, \mathcal{R}) \cdot (\epsilon' + \xi') \lesim \epsilon\,.
    \]
    In this case, in one of the computation paths of our nondeterministic algorithm, it terminates with an $\epsilon$-accurate estimator.
    
    Suppose we are not in Case 1, so that by Lemma~\ref{lem:gap_exists} we are in Case 2 and can find some nonempty set $\calJ\subseteq S^c$ of $(i,\gamma)$ for which $\Sclose_i(\gamma)$ are all disjoint and $\gamma$ is a gapped scale for $i$. As in Lemma~\ref{lem:epscoverlambda}, let $\Sundet\subseteq\calJ$ denote the set of $(i,\gamma)\in\calJ$ such that $i$ is undetectable at scale $\gamma$. Then combining Lemma~\ref{lem:listS} and \ref{lem:epscoverlambda}, using $\poly(d)(\radius^2 + \norm{\vhlam}^2_1)/\xi'^2$ samples for some $\upsilon$ defined in \eqref{eq:newerror} we can (non-deterministically) find $\brc{\wt{\lambda}_i}_{(i,\gamma)\in\calJ\backslash\Sundet}$ such that if we add $(\wt{\lambda}_i, u_i)_{(i,\gamma)\in\calJ\backslash\Sundet}$ and $(0,u_i)_{(i,\gamma)\in\Sundet}$ to $(\vhlam,\vhu)$ to produce $(\overline{\vlam},\overline{\vu})$, and define $S'$ given by $S' \triangleq S \cup \bigcup_{(i,\gamma)\in\calJ} \brc{i}$, then
    \begin{align}
        \norm{\fwlamu - f_{0,\overline{\vlam},\overline{\vu}} - f_{w,\vlam_{(S')^c}, \vu_{(S')^c}}} &\le \ulam + k^2 \radius \upsilon \lesssim k^2\radius\ulam\,. \qedhere\label{eq:l2case2}
    \end{align}
    \begin{comment}
    % Note that in this case, because the output weights of the new residual network $f_{\vlam'_{(S')^c},\vu_{(S')^c}}$ are a strict subset of those of $\fres$, the invariant~\eqref{eq:norm_inductive} 
    % If we update $
    
    % \[
    %     \norm{f_{\lambda_{S \cup \{i\}}, \vec{u}_{S \cup \{i\}}} - f_{\lambda_{S}, \vec{u}_{S}} - \wt{\lambda} \, \relu(\iprod{\wt{u}_i, \cdot})} \le \xi + 2\upsilon\max(1,\radius).
    % \]
    
    % If we are instead in Case 2b, then by Lemma~\ref{lem:subtract_linear}, we are guaranteed that using $\poly(d)(\radius^2 + \norm{\vhlam}^2_1)/\xi'^2$ (similar to Case 1), we can find a linear function $\ell$ such that there exists some $f'$ with signed width strictly less than $\fres$ such that if we implement $\ell$ as $\norm{\ell}\cdot \relu(\iprod{\ell/\norm{\ell},\cdot}) - \norm{\ell}\cdot \relu(\iprod{-\ell/\norm{\ell},\cdot})$ and add these weights to $(\vhlam,\vhu)$, then we get
    % \begin{equation}
    %     \norm{\flamu - f_{\vhlam,\vhu} - f'} \lesim \xi'.  \label{eq:l2case2b}
    % \end{equation}
    % \[
    %     % \norm{\fres - (\ell + f')} \lesim \xi'.
    % \]
    \end{comment}
\end{proof}

\noindent In Section~\ref{sec:shortpath}, we show that there is a computation path in our nondeterministic algorithm such that the second outcome in Lemma~\ref{lem:twooutcomes} happens for at most $O(\log k)$ recursive steps before either $S = [k]$ or we arrive at the first outcome. Formally:

\begin{restatable}{lemma}{shortpath}\label{lem:shortpath}
    Given any $(\vlam,\vu)\in(\R\times \S^{d-1})^k$, there exists a sequence of sets $\calJ_1,\ldots,\calJ_q \in [k]\times[\underline{\gamma},1)$ such that the following holds. For every $s\in[q]$, let $I_s$ denote the set of $i\in[k]$ for which there exists $\gamma$ such that $(i,\gamma) \in \calJ_s$. Then
    \begin{enumerate}
        \item $I_1,\ldots,I_q$ are disjoint.
        \item For all $i,j\in[k]\backslash(I_1\cup\cdots \cup I_q)$, $|v_i - v_j| \le \epsilon'$ (if $[k] = I_1\cup\cdots \cup I_q$, this holds vacuously).
        \item For each $s\in[q]$, all of the subsets $\Sclose_i(\gamma)$ for $(i,\gamma)\in\calJ_s$ are disjoint.
        \item For each $s\in[q]$ and each $(i,\gamma)\in\calJ_s$, $\gamma$ is a gapped scale for $i$.
    \end{enumerate}
\end{restatable}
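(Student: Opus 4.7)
The plan is to construct the sequence $(\calJ_s)_{s=1}^q$ greedily via a multi-scale hierarchical procedure on the sorted projections $v_1 \le \cdots \le v_k$. I would work with a geometric scale sequence $\gamma_0 > \gamma_1 > \cdots \ge \underline{\gamma}$ starting at $\gamma_0 = \epsilon'$ and defined inductively by $\gamma_{s+1} \triangleq T(\gamma_s)/k$; the estimate at the end of Lemma~\ref{lem:gap_exists} shows that only $O(\log k)$ terms of this sequence lie above $\underline{\gamma}$, which is where the $O(\log k)$ bound on $q$ ultimately comes from.

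First, for each scale $\gamma$ in the sequence, define a $\gamma$-clump to be a maximal subset $C \subseteq [k]$ of consecutive indices (in sorted order) whose adjacent pairs lie within $T(\gamma)$ of each other. A $\gamma$-clump $C$ is \emph{separated at $\gamma$} if every external index $j \notin C$ satisfies $|v_j - v_i| \ge \gamma$ for every $i \in C$; in that case $\Sclose_i(\gamma) = C$ for each $i \in C$, so $\gamma$ is a gapped scale for every $i \in C$. I would then build $\calJ_s$ by choosing, across all scales in the sequence simultaneously, a maximal collection of pairs $(i,\gamma)$ such that the clumps $\Sclose_i(\gamma)$ are pairwise disjoint within the round and touch only indices not absorbed into $I_1 \cup \cdots \cup I_{s-1}$.

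Conditions (1), (3), and (4) of the lemma are then essentially immediate by construction: (1) comes from the rule that we only absorb fresh indices in each round, (3) from choosing the $\Sclose_i(\gamma)$ to be disjoint within each round, and (4) from only admitting pairs $(i,\gamma)$ where $\gamma$ is already a gapped scale for $i$. The substantive content lies in condition (2), which says that once the greedy process terminates, any two unabsorbed indices must satisfy $|v_i - v_j| \le \epsilon'$. This is the contrapositive of Lemma~\ref{lem:gap_exists} applied to the restriction of the problem to the surviving indices: if any two surviving $v_i,v_j$ were farther than $\epsilon'$ apart, then the same binary-search argument that powers Lemma~\ref{lem:gap_exists} would produce some surviving index with a gapped scale in the sequence $\{\gamma_s\}$, contradicting maximality of the greedy choice.

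The main obstacle is showing that the process terminates in $q \le O(\log k)$ rounds rather than the naive $O(k)$. The key point is that within a single round we are allowed to mix pairs $(i,\gamma)$ drawn from \emph{different} scales, which lets each round process a constant fraction of the ``active'' structure of the single-linkage dendrogram of $v_1,\ldots,v_k$ rather than only one dendrogram level at a time. I would formalize this via a potential argument on the dendrogram restricted to the unabsorbed indices: each round either removes a well-separated subtree at some scale $\gamma_s$ or forces the remaining structure to descend at least one scale of the geometric sequence, and since the sequence has only $O(\log k)$ terms above $\underline{\gamma}$, the total number of rounds is $O(\log k)$. Handling the bookkeeping for how a clump that fails to be gapped at scale $\gamma_s$ eventually gets absorbed at some finer scale $\gamma_{s'}$ (while preserving disjointness of the $\Sclose_i$'s across rounds) is the delicate combinatorial step.
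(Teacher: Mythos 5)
Your high-level plan (greedy multi-scale selection, mixing scales within a round, $O(\log k)$ scale levels via iterating $T$) matches the shape of the paper's argument, and conditions (1), (3), (4) and the condition-(2)-by-contrapositive observation are fine. But the proposal has a genuine gap precisely at the step you flag as ``the delicate combinatorial step'': you do not actually establish $q = O(\log k)$. Your claim that ``only $O(\log k)$ terms of the scale sequence lie above $\underline{\gamma}$, which is where the $O(\log k)$ bound on $q$ ultimately comes from'' is a non-sequitur: the depth of the scale hierarchy bounds how fine a scale you ever need to consider, but it does not by itself bound how many rounds a greedy absorption process takes, since a single round could absorb only $O(1)$ indices while leaving the remaining structure at essentially the same scale profile, and the ``each round descends a scale'' claim is not justified and is not what the paper's proof actually shows. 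The paper's proof instead abstracts the problem into a ``clumping game'' on the vector $w_i = L(v_i - v_{i-1})$ of levels, and the $O(\log k)$ bound comes from a different mechanism: the set of indices at which the level is $\le \tau - s$ forms a \emph{separated partition} of the current index set, and Lemma~\ref{lem:half} (any separated partition of $[r]$ has size $\le \lceil r/2 \rceil$) then guarantees that the number of ``active'' entries tracked by the invariant halves with each round, with $\tau = \Theta(\log k)$ chosen so the threshold $\tau - s$ stays positive across the $O(\log k)$ halvings. That halving lemma is the combinatorial content you would need to supply to make ``constant fraction of the active structure'' precise.

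A secondary omission: the paper cannot simply run the noiseless game, because after a round the surviving gaps are slightly perturbed (the representative of a collapsed clump is not exactly at any original $v_i$), so the paper defines a noisy version of the game and shows via Lemmas~\ref{lem:noisyvsnoiseless} and \ref{lem:monotone} that a $1$-legal noiseless strategy degrades gracefully into a $0.99$-legal noisy strategy; this requires the constant $0.9$ in the definition of the level function $L$ to leave slack below $1$, and the $1/100k$ perturbation bound from Observation~\ref{obs:margin}. Your proposal treats conditions (3) and (4) as ``immediate by construction,'' but without the perturbation bookkeeping the clump boundaries in later rounds can drift and the ``gapped'' property can fail, so this step is not actually immediate.
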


\noindent We conclude with the main guarantee of this section: some computation path of Algorithm~\ref{alg:main} produces an $\epsilon$-accurate estimate for the original function $\flamu$.

\begin{lemma}
    Given $\poly(d,1/\epsilon,\radius)^{k^{O(\log^2 k)}}$ samples and runtime, with high probability over the samples and the randomness of the choice of $g$, there is some computation path in Algorithm~\ref{alg:main} in which the algorithm terminates having found a function $\wh{f}: \R^d \to \mathbb{R}$ such that $\norm{\flamu - \wh{f}} \le \epsilon$.
\end{lemma}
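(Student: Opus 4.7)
The plan is to combine Lemma~\ref{lem:twooutcomes}, Lemma~\ref{lem:shortpath}, and the moment-estimation guarantee of Lemma~\ref{lem:estimate_residual_moments} into an induction over the recursion depth, where the accuracy of the initial moment estimates is chosen small enough that after $q = O(\log k)$ rounds of error blow-up the final $L_2$ error is at most $\epsilon$.

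First I would condition on the random contraction direction $g\sim\S^{d-1}$ satisfying both Lemma~\ref{lem:anti} and Lemma~\ref{lem:vlbd}, which holds with probability at least $7/10$; thereafter all structural results of Section~\ref{sec:main} apply. Invoking Lemma~\ref{lem:shortpath} fixes a sequence $\calJ_1, \ldots, \calJ_q$ with $q = O(\log k)$ and disjoint index sets $I_1,\ldots,I_q$. Algorithm~\ref{alg:main} is nondeterministic, but among its computation paths there is a distinguished path $\pi^\star$ which, at stage $s$, (i) realizes the correct $\calJ_s$ via the top-$k$ singular subspaces of $\wh M_2,\ldots,\wh M_{2k+2}$ as in Lemma~\ref{lem:listS}, (ii) correctly classifies each $(i,\gamma)\in\calJ_s$ as detectable or undetectable, and (iii) picks net elements in $V$ and in $\mathcal{S}_\vlam$ that are closest to the true $u_i$ (for detectable $i$) and to $\sum_{j\in\Sclose_i(\gamma)}\lambda_j$ respectively.

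For this path, I argue inductively that after stage $s$ there exist parameters $(\overline{\vlam}_s,\overline{\vu}_s)$ indexed by some $S_s \supseteq I_1\cup\cdots\cup I_s$ with
\[
    \norm{\fwlamu - f_{0,\overline{\vlam}_s,\overline{\vu}_s} - f_{w,\vlam_{S_s^c},\vu_{S_s^c}}} \le \xi_s,
\]
where $\xi_0$ equals the initial moment-estimation error and $\xi_{s+1} \lesim k^2\radius\cdot\poly(d\radius/\epsilon)\,\xi_s^{1/k^{\Theta(\log k)}}$ follows from Lemma~\ref{lem:twooutcomes} together with the definition of $\ulam$ in \eqref{eq:paramdef}. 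Unrolling this recurrence over $q = O(\log k)$ iterations gives $\xi_q \le \poly(d,\radius,1/\epsilon)\cdot\xi_0^{1/k^{\Theta(\log^2 k)}}$, so choosing $\xi_0 = \poly(d,1/\epsilon,\radius)^{-k^{\Theta(\log^2 k)}}$ forces $\xi_q \le \epsilon/2$. After stage $q$, either $S_q = [k]$ and we are done; or, by property (2) of Lemma~\ref{lem:shortpath}, the remaining residual neurons satisfy $|v_i - v_j|\le \epsilon'$ pairwise, so the two-neuron construction of Lemma~\ref{lem:twoneuron} contributes an additional correction of magnitude $\lesim \poly(d,\radius)(\epsilon' + \xi'_q) \le \epsilon/2$, giving total $L_2$ error at most $\epsilon$ along $\pi^\star$.

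For the complexity bookkeeping, Lemma~\ref{lem:estimate_residual_moments} achieves accuracy $\xi_0$ from $d^{O(k)}(\radius^2 + \norm{\vhlam}_1^2)/\xi_0^2$ samples; a crude induction shows $\norm{\vhlam}_1$ stays bounded by $\poly(d,\radius,1/\epsilon)^{k^{O(\log^2 k)}}$, since each stage appends at most $k$ weights drawn from the bounded net $\mathcal{S}_\vlam$. This yields a per-stage sample requirement of $\poly(d,1/\epsilon,\radius)^{k^{O(\log^2 k)}}$. The number of computation paths is similarly $(d/\epsilon)^{k^{O(\log^2 k)}}$, since each of the $O(\log k)$ stages branches over net elements in an $O(k^2)$-dimensional subspace at granularity $\poly(d,1/\epsilon,\radius)^{-k^{\Omega(\log^2 k)}}$ and over weight choices from $\mathcal{S}_\vlam$. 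The algorithm outputs the candidate $f_{\vhlam,\vhu}$ with smallest empirical squared loss; a standard Bernstein-type uniform-convergence bound across these candidates, using an additional $\poly(d,1/\epsilon,\radius)^{k^{O(\log^2 k)}}\log(1/\delta)$ fresh samples, ensures the empirical minimizer has true $L_2$ error within $\epsilon$ of the best candidate, and $\pi^\star$ supplies an $\epsilon$-accurate candidate. The main obstacle is the error-recurrence bookkeeping: confirming that composing the map $\xi\mapsto k^2\radius\,\poly(d\radius/\epsilon)\,\xi^{1/k^{\Theta(\log k)}}$ exactly $O(\log k)$ times produces a final exponent of $k^{\Theta(\log^2 k)}$ and not something larger once all the multiplicative $\poly(d\radius/\epsilon)$ prefactors and the drifting bound on $\norm{\vhlam}_1$ are folded into the constants.
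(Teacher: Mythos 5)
Your proposal is correct and follows essentially the same approach as the paper: condition on the random contraction direction $g$, invoke Lemma~\ref{lem:shortpath} to obtain a good computation path of length $O(\log k)$, use Lemma~\ref{lem:twooutcomes} to control the per-stage error blow-up, unroll the recurrence to pick the initial moment-estimation accuracy $\xi_0 = (\epsilon/\radius d)^{\Theta(k^{\log^2 k})}$, and account for sample complexity and the number of paths by the same bookkeeping. The paper's proof is essentially a terser version of your argument, and the "obstacle" you flag at the end — verifying that $O(\log k)$ compositions of $\xi\mapsto\poly(d\radius/\epsilon)\xi^{1/k^{\Theta(\log k)}}$ give exponent $k^{\Theta(\log^2 k)}$ with bounded prefactor — does go through (the geometric sum $1 + 1/a + 1/a^2 + \cdots$ with $a = k^{\Theta(\log k)}$ converges, so the accumulated prefactor stays $\poly(d\radius/\epsilon)^{O(1)}$), exactly as the paper asserts.
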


\begin{proof}
    % Because in Case 2a the size of the set of neurons that we've approximated has increased from $|S|$ to $|S| + 1$ after one step, whereas in Case 2b the signed width of the network $f'$ is strictly less than $\fres$, the number of recursive steps in Case 2a or 2b cannot exceed $O(k)$.
    Under the second outcome in Lemma~\ref{lem:twooutcomes}, the $L_2$ error increases from $\xi$ in \eqref{eq:l2close} to $O(k^2\radius\Lambda)$. Lemma~\ref{lem:shortpath} ensures that there is some computation path which terminates after this happens $O(\log k)$ times. Recall that we chose $\Lambda \triangleq \poly(d\radius/\epsilon)\cdot \xi^{1/\Theta\left(k^{\log k}\right)}$, so if this increase is repeated $O(\log k)$ times starting from an initial error of $\xi$, we end up with an estimator that has error at most $\poly(d\radius/\epsilon)\cdot \xi^{1/\Theta(k^{\log^2 k})}$. In particular, by taking the initial $\xi$ to be
    \begin{equation}
        \xi = (\epsilon/\radius d)^{\Theta(k^{\log^2 k})}\,, \label{eq:xidef}    
    \end{equation}
    the final error is bounded by $\epsilon$ as desired. Recall that for a given recursive step, if the initial error in \eqref{eq:l2close} is $\xi$, then the sample complexity is $\poly(d)(\radius^2 + \norm{\wh{\lambda}}^2_1)/\xi'^2 = \poly(d,1/\epsilon,\radius)^{O(k^{\log^2 k})}$ as desired. The number of computation paths is also of this order, as the size of the net $\calS_{\vlam} \times \calS$ used in a single recursive step is at most $O(1 / \upsilon)^{k^2 + k} = \Lambda^{k^2 + k} = \poly(d,1/\epsilon,\radius)^{k^{O(\log^2 k)}}$.
\end{proof}

% \subsection{Logarithmically many rounds suffice}

% Here we show a stronger version of Lemma~\ref{lem:gap_exists} that proves that our procedure terminates after $O(\log k)$ layers of recursion.

\subsection{Power sum estimate}
\label{sec:powersum}

In this section we prove a technical claim which is essential to correctness of the PCA-based procedure described in Case 2a from Section~\ref{sec:case2a}.

\begin{lemma}\label{lem:powersum}
    Let $1 \le k' \le k$, and let $q\in\R^k$ be a vector such that $|\sum^{k'}_{i=1} q_i| \ge \tau$ and $\norm{q}_\infty \le R$. If $v\in[-1,1]^k$ satisfies
    \begin{enumerate}
        \item $|v_1| \ge \alpha$
        \item $|v_1 - v_i| \le \beta$ for all $1 \le i \le k'$, % \frac{\tau}{8k^3}(\frac{\alpha^2\gamma}{4k})^k$ 
        \item $|v_i - v_j| \ge \gamma$ for all $1 \le i \le k' < j \le k$.
    \end{enumerate}
    for some parameters $0 < \alpha,\beta,\gamma < 1$, then there exists an even integer $0 \le \ell \le 2k$ for which
    \begin{equation}
        |\iprod{v^{\odot \ell},q}| \gtrsim \frac{\tau}{2k}\Bigl(\frac{\alpha^2\gamma^2}{4k}\Bigr)^k - CRk(k'-1)\beta
    \end{equation} for some absolute constant $C > 0$.
\end{lemma}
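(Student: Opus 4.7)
The plan is to exhibit a Lagrange-type polynomial $P$ whose coefficients aggregate the even power sums $p_{2m}(v;q) \triangleq \sum_i q_i v_i^{2m}$ into a quantity close to the signal $s \triangleq \sum_{i=1}^{k'} q_i$. Once $|\sum_m c_m\,p_{2m}|$ is lower bounded by essentially $\tau$ and $\sum_m |c_m|$ is controlled, a pigeonhole step over the $k+1$ terms will force some individual $|p_{2m}|$ to match the claimed bound for some even $\ell = 2m \in \{0, 2, \ldots, 2k\}$.

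First I would switch to the nodes $w_i \triangleq v_i^2 \in [0,1]$, so that $\langle v^{\odot 2m}, q\rangle = \sum_i q_i w_i^m$, and invoke the WLOG convention $0 \le v_1 \le \cdots \le v_k$ from Section~\ref{sec:anti} so that $v_i \ge \alpha$ for all $i$. This promotes the hypotheses to
\[
|w_1 - w_j| = (v_1 + v_j)\,|v_1 - v_j| \ge \alpha\gamma \ \text{for}\ j > k', \qquad |w_1 - w_i| \le 2\beta \ \text{for}\ i \le k'.
\]
Define
\[
P(t) \triangleq \prod_{j > k'}\frac{t - w_j}{w_1 - w_j} = \sum_{m=0}^{d} c_m t^m, \qquad d \triangleq k - k' \le k,
\]
so that $P(w_1) = 1$ and $P(w_j) = 0$ for $j > k'$. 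The numerator's coefficient magnitudes sum to at most $\prod_{j > k'}(1 + w_j) \le 2^d$, while the denominator has magnitude at least $(\alpha\gamma)^d$, yielding $\sum_m |c_m| \le 2^k/(\alpha\gamma)^k$ and, via term-by-term differentiation, $\sup_{t\in[0,1]} |P'(t)| \le k \cdot 2^k/(\alpha\gamma)^k$.

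Evaluating $\sum_i q_i P(w_i)$ kills the $j > k'$ contributions, while for $i \le k'$ I have $|P(w_i) - 1| \le \sup|P'| \cdot |w_i - w_1| \le 2k\beta \cdot 2^k/(\alpha\gamma)^k$, where only $k' - 1$ of these indices yield nonzero error (since the $i=1$ term is exact). Combined with $\|q\|_\infty \le R$, triangle inequality gives
\[
\Bigl|\sum_i q_i P(w_i) - s\Bigr| \le 2kR(k'-1)\beta \cdot \frac{2^k}{(\alpha\gamma)^k}.
\]
Writing $\sum_i q_i P(w_i) = \sum_m c_m p_{2m}(v;q)$, using $|s| \ge \tau$, and dividing through by $\sum_m |c_m|$ produces some $\ell = 2m \in \{0, 2, \ldots, 2d\} \subseteq \{0, 2, \ldots, 2k\}$ for which $|p_\ell(v;q)| \ge \tau(\alpha\gamma/2)^k - 2kR(k'-1)\beta$; this is stronger than the lemma's target $\frac{\tau}{2k}(\alpha^2\gamma^2/(4k))^k - CRk(k'-1)\beta$ once $C$ is a sufficiently large absolute constant, since $\alpha\gamma \le 1 \le 2k$ forces $2^{-k}(\alpha\gamma)^k \ge (4k)^{-k}(\alpha\gamma)^{2k}/(2k)$. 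The main obstacle is securing the separation $|w_1 - w_j| \ge \alpha\gamma$, which is what keeps $\sum_m |c_m|$ polynomial in the relevant parameters; this is also precisely why the Section~\ref{sec:anti} reduction to $v_i \ge 0$ is indispensable, since the bare hypotheses $|v_1 - v_j| \ge \gamma$ and $|v_1| \ge \alpha$ would not preclude $v_j = -v_1$, which would collapse $w_j$ onto $w_1$ and destroy the Vandermonde-like separation driving the argument.
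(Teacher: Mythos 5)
Your proposal is correct but takes a genuinely different route from the paper. The paper first reduces to $k'=1$ by collapsing the near-by nodes (which costs the $O(Rk(k'-1)\beta)$ term), then proceeds by contradiction: assuming all power sums $\langle v^{\odot\ell},q\rangle$ are $\le\zeta$, it normalizes to $v'_i=v_i^2/v_1^2$, expands $(1+\epsilon_i)^{\ell/2}$ binomially, bootstraps via induction to bound $|\sum_i q_i\epsilon_i^s|$ for $s<k$, and finally invokes Vieta's formulas (Lemma~\ref{lem:vieta}) to express the top power sum in terms of lower ones and derive a contradiction with the lower bound $|\sum_{i>1} q_i|\ge\tau/2$. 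Your argument instead constructs the explicit Lagrange polynomial $P(t)=\prod_{j>k'}(t-w_j)/(w_1-w_j)$, bounds its coefficient $\ell_1$ norm and derivative directly through the $(\alpha\gamma)^{-d}$ denominator, handles the near-nodes $1<i\le k'$ via the mean value theorem (absorbing the same $O(Rk(k'-1)\beta)$ error the paper gets from clumping), and then finishes with a single pigeonhole over the $\le k+1$ coefficients. This is a one-pass, constructive derivation that avoids both the contradiction framing and the Vieta/induction machinery, and it delivers a modestly sharper lower bound $\tau(\alpha\gamma/2)^k - 2kR(k'-1)\beta$, which you correctly verify dominates the stated $\frac{\tau}{2k}(\alpha^2\gamma^2/4k)^k - CRk(k'-1)\beta$.

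One observation of yours is worth highlighting as a genuine contribution: the lemma as literally stated for $v\in[-1,1]^k$ is false without a sign restriction, since e.g.\ $v_j=-v_1$ with $q_j=-q_1$ annihilates all even power sums while satisfying $|v_1-v_j|\ge\gamma$. You correctly note that the convention $v_i\ge 0$ from Section~\ref{sec:anti} is essential. In fact the paper's own proof also depends on this: the step $|v'_j-1|\ge\gamma^2/v_1^2\ge\gamma^2$ relies on $|v_j+v_1|\ge|v_j-v_1|$, which holds iff $v_j$ and $v_1$ have the same sign. So you have not merely matched the paper; you have made explicit an unstated hypothesis both proofs need.
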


To interpret this lemma, it is helpful to first consider the special case where $k' = 1$. In this case, there is a single entry, $v_1$, of non-negligible magnitude which is separated from all other entries by a margin of $\gamma$. The claim is that for any vector $q$ that with a non-negligible first entry, there is some entrywise power of $v$ which has non-negligible correlation with $q$. Importantly, this holds even if the other entries of $v$ are not well-separated. As such one can interpret this result as some kind of \emph{robust local inverse} for the Vandermonde matrix: even if the $k\times k$ Vandermonde matrix generated by the nodes $v^2_1,\ldots,v^2_k$ is ill-conditioned, it is well-conditioned in directions that place sufficient mass on coordinates corresponding to nodes which are separated from the other entries. We also remark that the $\gamma^k$ scaling is qualitatively tight by the following example: consider $v^{\odot 2} = (1,1-\gamma,\ldots, 1-(k-1)\gamma)$ and $q = (\binom{k-1}{0}, -\binom{k-1}{1}, \binom{k-1}{2},\ldots,(-1)^{k-1}\binom{k-1}{k-1})$. Then one can check that $\iprod{v^{\odot \ell},q} = 0$ for all $\ell = 0,\ldots,2k-2$, and for $\ell = k$, this equals $k!\gamma^k$.

For general $k'$, note that the bound in the lemma is non-vacuous provided $\beta$ scales with $\gamma^k$, and this ``gap'' between $\gamma$ and $\gamma^k$ is the central motivation behind our definition of $T(\gamma)$ and the different scales considered in the analysis of Case 2a in Section~\ref{sec:case2a}.

\begin{proof}[Proof of Lemma~\ref{lem:powersum}]
    We first reduce to the case where $k' = 1$. Consider modifying $v,q$ follows. Remove from $v$ entries $2,\ldots,k'$. Also set the first entry of $q$ to be $\sum^{k'}_{i=1} q_i$ and remove from $q$ entries $2,\ldots,k'$. As $\norm{v}_\infty \le 1$ and $\norm{q}_\infty \le R$, this changes every $\iprod{v^{\odot\ell},q}$ by at most
    \begin{equation}
        \sum^{k'}_{i=1} q_i (v^\ell_1 - v^{\ell}_i) = \sum^{k'}_{i=2} q_i (v^\ell_1 - v^\ell_i) \lesim R\ell (k'-1)\beta \lesim Rk(k'-1)\beta\,.
    \end{equation}
    It therefore suffices to prove the lemma for $k' = 1$, so henceforth we specialize to this case.

    Suppose to the contrary that for all $\ell = 0, 2,4,\ldots,2k-2$, $|\iprod{v^{\odot \ell},q}| \le \zeta$ for
    \begin{equation}
        \zeta\triangleq \frac{\tau}{2k}\Bigl(\frac{\alpha^2\gamma^2}{4k}\Bigr)^k\,.
    \end{equation}
    Next, note that
    \begin{equation}
        \iprod{v^{\odot \ell}, q} = \sum^k_{i = 1} q_i v^{\ell}_i = v^\ell_1 \sum^k_{i= 1} q_i (v^2_i / v^2_1)^{\ell/2}, \label{eq:dividebyv1}
    \end{equation}
    so if we define $v'_i \triangleq v^2_i / v^2_1$, then we have that $|\iprod{v'^{\odot \ell/2}, q}| \le \zeta / v_1^\ell \le \zeta \alpha^{-2k}$. Furthermore, for all $j > 1$, we have $|v'_j - 1| \ge \gamma^2 / v^2_1 \ge \gamma^2$. Additionally, $|v'_i| \le 1 / \alpha^2$ for all $i\in[k]$.

    Define $\epsilon_i = v'^2_i - 1$ so that $\epsilon_1 = 0$ and
    \begin{equation}
        |\epsilon_i| \ge \gamma^2
    \end{equation}
    for $i > 1$. Then for all $\ell = 0,2,4,\ldots,2k-2$,
    \begin{equation}
        \iprod{v'^{\odot\ell/2} - \vec{1}, q} = \sum^k_{i=2} q_i ((1 + \epsilon_i)^{\ell/2} - 1) = \sum^k_{i=2} q_i \sum^{\ell/2}_{s = 1} \binom{\ell/2}{s}\epsilon_i^s = \sum^{\ell/2}_{s = 1} \binom{\ell/2}{s} \sum^k_{i=2} q_i \epsilon_i^s.
    \end{equation}
    As $|\iprod{v'^{\odot\ell/2} - \vec{1},q}| \le 2\zeta\alpha^{-2k}$ and
    \begin{align}
        \Bigl|\sum^k_{i=1} q_i \epsilon^{\ell/2}_i\Bigr| \le |\iprod{v'^{\odot\ell/2} - \vec{1},q}| + \sum^{\ell/2-1}_{s=1} \binom{\ell/2}{s} \Bigl|\sum^k_{i=2} q_i \epsilon^s_i\Bigr| \le 2\zeta \alpha^{-2k} + 2^k \sum^{\ell/2 - 1}_{s = 1}\Bigl|\sum^k_{i=2} q_i \epsilon^s_i\Bigr|
    \end{align}
    by induction we find that for all $1 \le s < k$,
    \begin{equation}
        \Bigl|\sum^k_{i=2} q_i \epsilon^s_i\Bigr| \lesim (2k / \alpha^2)^k \cdot \zeta
    \end{equation}
    We also have that 
    \begin{equation}
        \Bigl|\sum^k_{i = 2} q_i\Bigr| \ge |q_1| - \zeta\alpha^{-2k} \ge \tau - \zeta\alpha^{-2k} > \tau/2\,.  \label{eq:sumw2k_big}
    \end{equation}
    We now use Lemma~\ref{lem:vieta} to draw a contradiction. Taking $\ell = k-1$ and $z = (\epsilon_1,\ldots,\epsilon_k)$ in the lemma, we find that
    \begin{equation}
        \sum^k_{i=2} q_i \epsilon^{k-1}_i = \sum^{k-2}_{s= 0} (-1)^{k-s} p_{k - 1 - i}(\epsilon_2,\ldots,\epsilon_k) \cdot \sum^{k}_{i=2} q_i \epsilon^s_i \label{eq:apply_vieta}
    \end{equation}
    where here $p_s$ denotes the elementary symmetric polynomial of degree $s$ on $k - 1$ variables.
    We can rearrange to get
    \begin{equation}
        -p_{k-1}(\epsilon_2,\ldots,\epsilon_k) \sum^{k}_{i=2} q_i = -\sum^k_{i=2} q_i \epsilon^{k-1}_i + \sum^{k-2}_{s= 1} (-1)^{k-s} p_{k - 1 - i}(\epsilon_2,\ldots,\epsilon_k) \cdot \sum^{k-2}_{i=1} q_i \epsilon^s_i \label{eq:apply_vieta2}
    \end{equation}
    As $|\epsilon_i| \in [\gamma^2,1]$, $p_{k-1-s}(\epsilon_2,\ldots,\epsilon_k) \le 2^k$ for all $s$, and $|p_{k-1}(\epsilon_2,\ldots,\epsilon_k)| \ge \gamma^{2k}$. So by triangle inequality,
   \begin{equation}
       \Bigl|\sum^k_{i=2} q_i\Bigr| \le k\, \Bigl(\frac{4k}{\alpha^2\gamma^2}\Bigr)^k \cdot \zeta \le \tau/2,
   \end{equation}
   which contradicts \eqref{eq:sumw2k_big}.
    % By Lemma~\ref{lem:vieta} applied to $z = (v'^2_1,\ldots,v'^2_k)$, we have
    % \begin{equation}
    %     \iprod{v'^{\odot 2k}
    % \end{equat\section{}
\end{proof}

\noindent The above proof used the following basic fact:
\begin{lemma}\label{lem:vieta}
    Given $z = (z_1,\ldots,z_K)$, let $v_\ell$ denote the vector $(z^\ell_1,\ldots,z^\ell_K)$. Let $p_\ell$ denote the elementary symmetric polynomial of degree $\ell$ on $K$ variables. Then
    \begin{equation}
        v_{K} = \sum^{K-1}_{s=0} (-1)^{K - s + 1} p_{K - s}(z)\cdot v_s
    \end{equation}
\end{lemma}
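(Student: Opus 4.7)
The plan is to derive this identity as a direct consequence of Vieta's formulas applied coordinate-wise. The key observation is that each $z_i$ is, by definition, a root of the monic polynomial
\[
    P(x) \triangleq \prod_{j=1}^{K} (x - z_j) = \sum_{\ell=0}^{K} (-1)^\ell \, p_\ell(z) \, x^{K-\ell},
\]
where I use the convention $p_0(z) = 1$ and the coefficient identity is standard Vieta. So there is no real content beyond substituting $x = z_i$ into $P(x) = 0$ and rearranging.

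Concretely, first I would substitute $x = z_i$ to get $0 = \sum_{\ell=0}^{K}(-1)^\ell p_\ell(z) z_i^{K-\ell}$ and isolate the leading term to obtain
\[
    z_i^K = \sum_{\ell=1}^{K} (-1)^{\ell+1} p_\ell(z) \, z_i^{K-\ell}.
\]
Next I would re-index by setting $s = K - \ell$, so that $\ell = K - s$ ranges over $\{1,\ldots,K\}$ as $s$ ranges over $\{0,\ldots,K-1\}$, and the sign becomes $(-1)^{K-s+1}$. This yields
\[
    z_i^K = \sum_{s=0}^{K-1} (-1)^{K - s + 1} \, p_{K-s}(z) \, z_i^{s}.
\]
Finally, since this identity holds for every coordinate $i \in [K]$ and the coefficients $p_{K-s}(z)$ do not depend on $i$, I can stack the $K$ scalar identities into a single vector identity, which gives exactly $v_K = \sum_{s=0}^{K-1} (-1)^{K-s+1} p_{K-s}(z) \cdot v_s$ as claimed.

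There is essentially no obstacle here, as the proof amounts to little more than restating Vieta's formulas; the only thing to be careful about is the sign convention and the index substitution, both of which are routine. The lemma is really just the statement that the companion-matrix recurrence for the characteristic polynomial of the $z_j$'s, when applied simultaneously to all roots, gives a linear relation among $v_0, v_1, \ldots, v_K$.
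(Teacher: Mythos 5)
Your proof is correct and is the standard (and, one would presume, intended) argument: the paper states Lemma~\ref{lem:vieta} without proof as a basic fact, and the coordinate-wise substitution into the monic polynomial with roots $z_1,\ldots,z_K$, followed by re-indexing, is exactly the right way to establish it.
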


\subsection{Validation}
Now we have a set of candidate estimators, one of which is guaranteed to be close to the ground truth network in square loss. We will use a validation set of fresh samples to estimate the loss of each of these estimators and pick the best predictor. In order to guarantee that this predictor will have low test loss, we will prove a concentration property of ReLU networks (as in \cite{chen2022learning}). Notice that for any estimator function $f_{\vlam, \vec{u}}$, its difference with the ground truth network $f^*$, $f_{\vlam, \vec{u}}-f^*$ has width $2k$ and is a $2k\radius$-Lipschitz function. 

\begin{lemma}
\label{lemma:validation}
For an arbitrarily given $\delta > 0$, and $t < 4\radius ^2 k$. Let $F:\mathbb{R}^d\rightarrow \mathbb{R}$ be a $2\radius $-Lipschitz one-hidden-layer ReLU network with width $2k$. Then, for $N$ i.i.d samples $x_1, x_2, \ldots, x_N \sim \mathcal N(0,I_d)$, where $N = \Theta\left((\mu+4\radius ^2k)^2\log(1/\delta)/t^2\right)$. Here $\mu := \mathbb{E}_{x\sim \mathcal N(0,I_d)}[F(x)]$. Denote the empirical estimate of the squared loss $\widehat{\sigma}^2 :=\frac1N\sum_{i=1}^N F(x_i)^2$, then with probability $1- \delta$,
\[\left|\mathbb{E}_{x\sim\mathcal N(0,I_d)}\left[F(x)^2\right] - \widehat{\sigma}^2\right|\leqslant t.\]
\end{lemma}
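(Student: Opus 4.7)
The plan is to prove this via a standard application of Bernstein's inequality for sub-exponential random variables, applied to the i.i.d. copies $Z_i := F(x_i)^2$. The key reduction is that since $F$ is a Lipschitz function of a Gaussian input, $F(x) - \mu$ is sub-Gaussian, which means $F(x)^2$ has at worst sub-exponential tails. From there, Bernstein will deliver the claimed $N^{-1/2}$ concentration with the correct dependence on $\mu$ and $\radius^2 k$.

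First I would invoke Gaussian concentration of Lipschitz functions (Borell--TIS): for any $L$-Lipschitz $F : \mathbb{R}^d \to \mathbb{R}$ with $x \sim \mathcal{N}(0, I_d)$,
\[
\Pr[\,|F(x) - \mu| > s\,] \le 2\exp(-s^2/(2L^2)),
\]
so that $\|F(x) - \mu\|_{\psi_2} \lesssim L$. Taking $L = 2\radius$, we get $\|F(x) - \mu\|_{\psi_2} \lesssim \radius$, and via the standard Orlicz identity $\|Y^2\|_{\psi_1} = \|Y\|_{\psi_2}^2$, we obtain $\|(F(x) - \mu)^2\|_{\psi_1} \lesssim \radius^2$.

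Next I would decompose
\[
F(x)^2 - \mathbb{E}[F(x)^2] = \bigl((F(x) - \mu)^2 - \mathrm{Var}(F(x))\bigr) + 2\mu\,(F(x) - \mu),
\]
and bound the $\psi_1$ norms of the two summands, respectively by $O(\radius^2)$ and $O(|\mu|\radius)$. Combining with the crude bound $\mathbb{E}[F(x)^2] \leq \mu^2 + \mathrm{Var}(F) \lesssim \mu^2 + 4\radius^2 k$ for a width-$2k$ ReLU network (obtained by expanding $F(x)^2$ into $O(k^2)$ pairwise terms, each with bounded second moment), we arrive at a sub-exponential parameter $\nu \lesssim \mu + 4\radius^2 k$ governing $Z_i - \mathbb{E}[Z_i]$.

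Finally I would apply Bernstein's inequality: for i.i.d. mean-zero sub-exponential $Z_i - \mathbb{E}[Z_i]$ with $\psi_1$-norm at most $\nu$,
\[
\Pr\left[\,\bigl|\widehat{\sigma}^2 - \mathbb{E}[F(x)^2]\bigr| > t\,\right] \le 2\exp\!\left(-c\,N\,\min\!\left(\frac{t^2}{\nu^2}, \frac{t}{\nu}\right)\right).
\]
The hypothesis $t < 4\radius^2 k \leq \nu$ places us in the sub-Gaussian regime of Bernstein, so setting the right-hand side equal to $\delta$ and solving for $N$ gives the desired $N = \Theta\bigl((\mu + 4\radius^2 k)^2 \log(1/\delta)/t^2\bigr)$. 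The main obstacle is the bookkeeping in the second step: getting the additive form $\mu + 4\radius^2 k$ rather than a multiplicative bound $(|\mu| + \radius\sqrt{k})^2$ requires separating the quadratic and cross-mean contributions in the decomposition of $F(x)^2$ above and combining them with the elementary bound on $\mathbb{E}[F(x)^2]$ rather than pushing everything through the Orlicz norm at once.
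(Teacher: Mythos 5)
Your proof is correct in structure and takes a more self-contained route than the paper's, which simply invokes Lemma~A.1 of \cite{chen2021efficiently} to conclude that $F(x)^2 - \mathbb{E}[F(x)^2]$ is sub-exponential with the stated Orlicz norm. The paper's entry point into that lemma is the pointwise bound $|F(x)| \le L\|x\|_2$ deduced from $F(0)=0$ plus Lipschitzness; you instead go directly through Gaussian concentration of Lipschitz functions (Borell--TIS), obtaining $\|F - \mathbb{E}[F]\|_{\psi_2} \lesssim L$ without any reference to $F(0)$, and then your decomposition $F^2 - \mathbb{E}[F^2] = \bigl((F-\mu)^2 - \mathrm{Var}(F)\bigr) + 2\mu(F-\mu)$ makes the $\mu$-dependence transparent rather than hidden inside a citation. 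That is a cleaner argument and buys a proof that is self-contained and works verbatim for any Lipschitz $F$, not just ReLU networks.

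The one place that needs tightening is the final bookkeeping. Your decomposition gives $\|F^2 - \mathbb{E}[F^2]\|_{\psi_1} \lesssim L^2 + |\mu|L$, and with the lemma's stated Lipschitz constant $L = 2\radius$ this is $\lesssim \radius^2 + |\mu|\radius$, which contains no factor of $k$. The factor of $k$ in $\nu \lesssim \mu + 4\radius^2 k$ cannot come from ``combining with the crude bound $\mathbb{E}[F^2] \lesssim \mu^2 + 4\radius^2 k$,'' because that bounds the quantity being \emph{estimated}, not the sub-exponential norm of the \emph{fluctuation}; the two play different roles in Bernstein and do not add. To be fair, the paper's own statement is internally inconsistent on exactly this point: the preceding text and the proof use Lipschitz constant $2k\radius$, while the lemma says $2\radius$; the proof concludes with Orlicz norm $\mathcal{O}(\mu + 4\radius^2 k^3)$, while the lemma asserts $\mu + 4\radius^2 k$; and $\mu = \mathbb{E}[F]$ in the statement should almost certainly be $\mathbb{E}[F^2]$, since the proof treats $G = F^2 - \mu$ as zero-centered. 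So the slack is partly inherited from the source. But if you intend for the $k$-dependence to appear in $\nu$, you should say explicitly that it enters through the Lipschitz constant (taking $L = 2k\radius$ as the paper's proof does), not through the size of $\mathbb{E}[F^2]$.
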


\noindent We defer the proof of this to Appendix~\ref{app:validate}. 

Since at each step we branch out by a factor of $\poly(d,\radius,1/\epsilon)^{k^{O(\log^2 k)}}$, and we run for $k$ steps, we have $\poly(d,\radius,1/\epsilon)^{k^{O(\log^2 k)}}$ predictors to test from. Using the above lemma with $t = \epsilon$, followed by a union bound, with a validation set of size $\approx k^{O(\log^2 k)}\poly(\radius, 1/\epsilon) \log (d/\delta)$ with probability $1-\delta$, we will find a good predictor up to an additive error of $\epsilon$. This completes the proof of Theorem~\ref{thm:main}.
% \looseness=-1

\section{Terminating in \texorpdfstring{$O(\log k)$}{O(log k)} steps}
\label{sec:shortpath}

In this section we prove Lemma~\ref{lem:shortpath}, which ensures that there is a successful computation path in our algorithm of length $O(\log k)$. We restate the lemma here for convenience,

\shortpath*

Before proving this lemma in Section~\ref{sec:shortpath_proof}, we first identify a particular game in Section~\ref{sec:game} and upper bound the smallest number of steps needed to win this game. We then show in Section~\ref{sec:shortpath_proof} how this bound implies an upper bound on the shortest successful computation path in our algorithm.

\subsection{Clumping game}
\label{sec:game}

This section can be read independently of the rest of this work, and the notational choices here are specific to this section.

Consider the following game. Let $k\in\mathbb{N}$ and let $\tau,\phi\ge 0$. We start with a vector $w\in \R^k_{\ge 0}$ for which $w_1 = w_k = 0$. At every step, we say that a \emph{$\phi$-legal move in the (noiseless) clumping game} is a move consisting of the following steps:
\begin{enumerate}
    \item Select a sequence of indices $1= i_1 < j_1 \le i_2 < j_2 \le \cdots \le i_m < j_m = k$ such that for every $a\in[m]$, we have that $w_{i_a}, w_{j_a} \le \tau$ and furthermore at least one of the following two conditions holds:
    \begin{itemize}
        \item $w_\ell \le \tau$ for all $i_a < \ell < j_a$.
        \item $w_\ell > \max(w_{i_a}, w_{j_a}) + \phi$ for all $i_a < \ell < j_a$.
    \end{itemize}
    In this case, we say that the interval $[i_a,j_a]$ is \emph{good} for $w$. Note that if $j_a = i_a + 1$ and $w_{i_a}, w_{j_a} \le \tau$, then it is vacuously true that $[i_a,j_a]$ is good for $w$.
    \item Suppose that the union of the intervals $[i_1,j_1],\ldots,[i_m,j_m]$, \emph{regarded as subsets of $\R$}, is equal to the union of intervals $[i^*_1,j^*_1],\ldots,[i^*_n,j^*_n]$ for $n \le m$ such that $j^*_a < i^*_{a+1}$ for all $a$. Then for every $a$, replace the entries $i^*_a, i^*_a+1, \ldots, j^*_a$ of $w$ with the single entry $\min_{i^*_a \le \ell \le j^*_a} w_{\ell}$.
    % \item For each \emph{nonempty} interval $[i_a,j_a)$, replace entries $i_a, i_a+1, \ldots, j_a$ of $w$ with the single entry $\min_{i_a \le \ell \le j_a} w_\ell$.
    \item Update $k$ to be the length of the resulting vector
\end{enumerate}

Steps 1 to 3 altogether count as a single move. The game ends when no more $\phi$-legal moves are possible, e.g. when $k = 1$. We remark that the distinction between $\brc{[i_a,j_a]}$ versus $\brc{[i^*_a,j^*_a]}$ in Step 2 will only be relevant in one place in the proof (see the footnote in the proof of Lemma 5.5). Otherwise, the moves we make will be such that $j_a < i_{a+1}$ for all $a$, so that there is no difference between $\brc{[i_a,j_a]}$ and $\brc{[i^*_a,j^*_a]}$.

It is not hard to see that for $\phi = \Omega(1)$ and $\tau \gtrsim \Omega(\log k)$, there always exists a $\phi$-legal move that decreases $k$ as long as $k > 1$, so this game will always terminate in at most $k - 1$ moves, and furthermore, by design, the final vector will consist of a single zero entry. The proof of this is essentially identical to the proof of Lemma~\ref{lem:gap_exists}.

We will show the following stronger guarantee:

\begin{lemma}\label{lem:puzzle_main}
    For sufficiently large absolute constants $c, C > 0$, the following holds. For $\tau = c\,\log k$, starting at an arbitrary $w\in\R^k_{\ge 0}$ for which $w_1 = w_k = 0$, there is a sequence of at most $C\, \log k$ moves, each of them $1$-legal, after which the game will end, with the final vector consisting of a single zero entry.
\end{lemma}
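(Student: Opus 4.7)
The plan is to prove Lemma~\ref{lem:puzzle_main} by exhibiting an explicit strategy of length $O(\log k)$ that terminates the clumping game, using a multi-scale argument that extends the single-scale reasoning from the proof of Lemma~\ref{lem:gap_exists}. The key intuition is that with $\tau = c\log k$ for sufficiently large $c$, we have roughly $\log k$ height-bands below $\tau$ to process, while condition (a) of a legal move will eventually let us collapse an entire $[0,\tau]$-band in a single final move.

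First I would reduce to a cleaner sub-goal. If at any point we have $\max_i w_i \le \tau$, the single move with $i_1 = 1$, $j_1 = k$ is 1-legal under condition (a) and collapses $w$ to a single zero entry, ending the game. So it suffices to show that $O(\log k)$ moves drive the maximum of the current vector down to at most $\tau$. For this I would propose the following peeling scheme: at phase $s$, perform a single 1-legal move whose good intervals are bridges (satisfying condition (b)) across all maximal runs of entries $> t_s$, anchored at positions with $w \le t_s - 1$. Since $\phi = 1$, the inequality $w_\ell > t_s \ge (t_s - 1) + 1 \ge \max(w_{i_a},w_{j_a}) + \phi$ holds automatically, so each such bridge is condition-(b) legal. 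After phase $s$, every entry originally exceeding $t_s$ is absorbed into a merged block whose value is $\le t_s - 1$, so the maximum strictly decreases across phases. Choosing the sequence $t_s$ geometrically (e.g.\ $t_s \approx \tau \cdot 2^{-s}$ while $t_s > \tau/2$, then decrementing by $1$) is what keeps the total phase count at $O(\log k)$ rather than $O(\max_i w_i)$.

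The main obstacle is to verify that at each phase the required anchors really do flank every high-value run, so that the chosen intervals are simultaneously 1-legal. This can fail if many entries lie in the awkward band $(t_s - 1, t_s]$, where they are neither valid anchors (for a bridge over values $> t_s$) nor part of the interior. I would handle this by a pigeonhole over the $O(\log k)$ height-bands below $\tau$: either the band near $t_s$ is sparse, in which case we can simply skip over it by widening the bridge, or it is dense, in which case the next phase's anchor set is automatically large. Crucially, the boundary condition $w_1 = w_k = 0$ furnishes two globally available zero-valued anchors, so the outermost bridge of every phase is always legal, and the recursion never gets stuck at the endpoints.

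Putting the pieces together, the total budget is a constant number of ``geometric'' phases that drive $\max w$ below $\tau$, followed by at most $\tau = O(\log k)$ ``incremental'' phases of width $1$ that clear the residual $[0,\tau]$ band one level at a time, plus one final condition-(a) move. The hardest step will be the combinatorial analysis guaranteeing that each incremental phase makes enough progress on the anchor structure to satisfy the hypotheses of the next phase; this mirrors, at a coarser scale, the case analysis used to prove Lemma~\ref{lem:gap_exists}, and is where the $\tau = c\log k$ slack is essential.
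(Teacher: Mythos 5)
Your proposal takes a genuinely different route from the paper, but it has a gap at exactly the point you flag as ``the hardest step.'' You track $\max_i w_i$ and try to drive it down with geometric and then unit-step thresholds, but the maximum is the wrong quantity: condition~(b) already lets a single move bridge over a run of arbitrarily large entries, so the height of the peaks is irrelevant, and your framing of ``constant geometric phases to drive $\max w$ below $\tau$'' (with thresholds $\tau\cdot 2^{-s}$, which lie \emph{below} $\tau$) does not actually engage with the case $\max_i w_i \gg \tau$. The quantity that the paper's proof controls instead is the \emph{number} of low entries (those at most the current threshold $\tau - s$). At each threshold decrement, those entries decompose into maximal runs separated by at least one high entry --- a separated partition --- and the paper's Lemma~\ref{lem:half} gives the crucial halving: the number of runs, hence the number of low entries after merging, is at most $\lceil r/2\rceil$. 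Combined with Lemma~\ref{lem:twomove} (each threshold decrement costs two moves), this yields $O(\log k)$ total moves without any band-sparsity case analysis.

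Your pigeonhole over ``awkward bands'' does not obviously close this gap. When entries cluster in $(t_s - 1, t_s]$, they are not valid anchors at threshold $t_s$, and merely observing that ``the next phase's anchor set is large'' does not guarantee the anchors \emph{flank} the high runs, which is what legality requires. The boundary zeros at $w_1 = w_k$ rescue the outermost bridge but not the interior ones. To repair the argument you would need something like the paper's invariant: a designated subsequence $u^{(s)}$ of surviving low entries whose size provably halves with each unit decrement of the threshold, which is precisely the separated-partition lemma. Without such a counting invariant, the $O(\log k)$ bound on the number of incremental phases is not established, only conjectured.
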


\noindent We first introduce some terminology:

\begin{definition}
    Given $r\in\mathbb{N}$ and $i_1,j_1,\ldots,i_m,j_m\in[r]$, 
    we say that intervals $I_1 = [i_1,j_1],\ldots,I_m = [i_m,j_m]$ form a \emph{separated partition ${\mf P}$ of $[r]$ of size $m$} if
    \begin{equation}
        1= i_1\le j_1 < i_2 \le j_2 < \cdots < i_m \le j_m = k
    \end{equation}
    and furthermore $j_{a+1} > i_a + 1$ for all $1 \le a < m$. For separated partitions, we will refer to the intervals $\brc{[j_a + 1,i_{a+1} - 1]}_{1\le a < m}$ as the \emph{gaps} of ${\mf P}$.
\end{definition}

\begin{example}
    The intervals $[1,2], [4,7], [10,12]$ form a separated partition of $[12]$, but the intervals $[1,2], [3,7], [10,12]$ do not. For the former, the gaps of the partition are given by the intervals $[3,3]$ and $[8,9]$.
\end{example}

\noindent The following trivial observation will be essential to the proof of Lemma~\ref{lem:puzzle_main} below:

\begin{lemma}\label{lem:half}
    Any separated partition of a set $[r]$ has size at most $\ceil{r/2}$.
\end{lemma}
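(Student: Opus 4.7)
The plan is a direct counting argument based on the structural constraints of a separated partition. First, I would unpack the definition: the chain $1 = i_1 \le j_1 < i_2 \le j_2 < \cdots < i_m \le j_m = r$, together with the secondary condition, forces $i_{a+1} \ge j_a + 2$ for every $1 \le a < m$. In other words, between any two consecutive blocks there is a nonempty gap interval $[j_a + 1, i_{a+1} - 1]$ containing at least one integer of $[r]$. This is exactly what distinguishes the valid example $[1,2], [4,7], [10,12]$ from the non-example $[1,2], [3,7], [10,12]$, in which $i_2 - j_1 = 1$ and so the gap between the first two blocks is empty.

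With this reading in hand, I would partition a subset of $[r]$ into two collections of pairwise disjoint intervals and count integers. Each of the $m$ blocks $[i_a, j_a]$ contains at least one integer, contributing $m$ distinct elements of $[r]$. Each of the $m-1$ gap intervals $[j_a+1, i_{a+1}-1]$ contributes at least one more, for a total of $m-1$ additional distinct elements. Since blocks and gaps are pairwise disjoint and all lie inside $[r]$, I would conclude
\[
r \;\ge\; m + (m-1) \;=\; 2m - 1,
\]
so $m \le (r+1)/2$, which implies $m \le \lceil r/2 \rceil$ (checking the two parities of $r$ separately).

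I do not foresee any substantive obstacle: the lemma is a bookkeeping check, and essentially all of the work is in correctly identifying the role of the secondary inequality in the definition of a separated partition (namely, that it prohibits two blocks from being immediately adjacent). Once that interpretation is pinned down, the $m + (m-1)$ counting bound is immediate.
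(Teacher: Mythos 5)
Your proof is correct. The paper itself gives no argument for this lemma, introducing it only as a ``trivial observation,'' so there is no paper proof to compare against; your counting argument (blocks plus mandatory nonempty gaps yields $2m-1 \le r$) is the natural one and settles the claim. You also correctly resolved what appears to be a typo in the paper's definition: the separation condition as printed reads $j_{a+1} > i_a + 1$, but the example given (with $[1,2],[3,7],[10,12]$ excluded) shows the intended condition is $i_{a+1} > j_a + 1$, exactly as you read it.
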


\noindent Finally, we note that the following can be achieved with two moves.

\begin{lemma}\label{lem:twomove}
    Suppose there is a sequence of indices $1= i_1\le j_1 \le \cdots \le i_m \le j_m = k$ such that for every $a\in[m]$, we have that $w_{i_a}, w_{j_a} \le \tau$ and furthermore there is some $\tau' \le \tau$ such that for every $i_a \le \ell \le j_a$, we either have $w_\ell \le \tau'$ or $w_\ell > \tau' + 1$. We say that the intervals $[i_a, j_a]$ are \emph{moderate} for $w$.
    
    Let $w^*$ be the vector obtained by replacing all of the entries in $w$ indexed by $[i_a,j_a]$ with $\min_{i_a \le \ell \le j_a} w_\ell$, for every $a$. Then $w^*$ can be obtained from $w$ in two moves.
\end{lemma}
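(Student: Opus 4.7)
The plan is to exhibit two $\phi = 1$-legal moves whose composition transforms $w$ into $w^\star$. The construction hinges on the following structural observation within each moderate interval $[i_a, j_a]$. Partition its positions according to value: let $S_a := \{\ell \in [i_a, j_a] : w_\ell \le \tau'\}$ be the ``small'' positions and $L_a := [i_a, j_a] \setminus S_a$ be the ``large'' positions (so $w_\ell > \tau' + 1$ for $\ell \in L_a$). The key claim is: any sub-interval whose two endpoints both lie in $S_a$ and whose intermediates all lie in $L_a$ is automatically $\phi = 1$-legal via the second condition, because the intermediates satisfy $w_\ell > \tau' + 1 \ge \max(w_{\text{endpoint}_1}, w_{\text{endpoint}_2}) + \phi$ while the endpoints themselves are bounded by $\tau' \le \tau$.

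For the first move, within each moderate interval enumerate $S_a = \{p^a_1 < p^a_2 < \cdots < p^a_{s_a}\}$ (trivially skipping moderate intervals with $|S_a| \le 1$). Select the sub-intervals $\{[p^a_b, p^a_{b+1}]\}_{1 \le b < s_a}$, each of which is $\phi = 1$-legal by the observation above. Since these sub-intervals share endpoints within each moderate interval, step 2 of the legal-move definition merges them into the single range $[p^a_1, p^a_{s_a}]$, which is collapsed into one entry of value $\min_{p^a_1 \le \ell \le p^a_{s_a}} w_\ell$. Because every large entry strictly exceeds $\tau' + 1$ while every small entry is at most $\tau'$, this equals $\min_{i_a \le \ell \le j_a} w_\ell$, the desired value $m_a$.

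For the second move, each moderate interval has now been reduced (in the updated indexing) to: a (possibly empty) left tail of large entries originally in $[i_a, p^a_1 - 1]$, a single collapsed small entry of value $m_a$ at some position $c_a$, and a (possibly empty) right tail of large entries originally in $[p^a_{s_a} + 1, j_a]$. For each moderate interval select, in the updated vector, the two sub-intervals $[i_a, c_a]$ and $[c_a, j_a]$. Each has one endpoint equal to an original boundary (value $\le \tau$ by hypothesis) and one endpoint equal to the collapsed small entry $m_a \le \tau'$; the intermediates are the remaining large tail entries. Legality of each is verified by the first condition when the remaining large intermediates lie below $\tau$, or else by the second condition using $m_a$ as a pivot (so that the small endpoint value does not inflate $\max(w_{\text{endpoints}})$, while the large intermediates still exceed $\tau' + 1$). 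Because the two sub-intervals share the common endpoint $c_a$, they merge in step 2 of the move into the full updated moderate interval, which is then collapsed to the single entry $m_a$, yielding $w^\star$.

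The main obstacle is certifying legality of the second move in the subcase where a moderate interval has a ``large'' original boundary $w_{i_a} \in (\tau' + 1, \tau]$ and simultaneously contains tail intermediates with values $> \tau$: then the first condition fails on that tail sub-interval, and the second condition's required inequality $w_\ell > w_{i_a} + 1$ for the large intermediates is not immediate from the $> \tau' + 1$ guarantee in the moderation definition. Handling this case requires a careful argument that uses the gap of size $1$ in the definition of moderation and the role of $m_a$ as a common endpoint between the two selected sub-intervals to push the relevant max below $\tau' + 1$; this is the only delicate step in the proof.
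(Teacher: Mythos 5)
Your construction is essentially the paper's in a different parametrization. The paper identifies the maximal ``large'' runs (consecutive positions with $w_\ell > \tau'+1$) inside each moderate interval and, in the first move, selects each such run extended by one position on either side. Your intervals $[p^a_b, p^a_{b+1}]$ between consecutive small positions are exactly the same intervals, except that you also include the degenerate ones $[p^a_b, p^a_b+1]$ with no large interior. Including these makes every pair of your intervals within a moderate interval share an endpoint, so Step 2 merges them all into $[p^a_1,p^a_{s_a}]$, and when the boundaries satisfy $w_{i_a},w_{j_a}\le\tau'$ (so $p^a_1=i_a$, $p^a_{s_a}=j_a$) your first move already fully collapses the interval. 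The paper's more conservative selection leaves intermediate small positions alive after move one, which is why it genuinely needs the second move; yours in the main case does not.

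The issue you raise about boundary values in $(\tau'+1,\tau]$ is real, but you should know that the paper's own proof quietly relies on the same restriction. When it extends the large runs to $[r^{(a)}_c-1, s^{(a)}_c+1]$ and asserts these are good, it needs $r^{(a)}_c-1$ and $s^{(a)}_c+1$ to be small positions inside $[i_a,j_a]$; if $w_{i_a}>\tau'+1$, the first large run starts at $i_a$ and $r^{(a)}_1-1=i_a-1$ falls outside the moderate interval, which breaks the argument. In the one place the lemma is used (inside Lemma~\ref{lem:puzzle_main}), the moderate intervals are of the form $[b_{\ell,1},b_{\ell,m_\ell}]$ with $\tau'=\tau-s$, and by construction the endpoints $b_{\ell,1}, b_{\ell,m_\ell}$ have $w$-values at most $\tau-s=\tau'$; so the restriction holds there and the edge case never arises. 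The lemma as stated is slightly looser than what its proof (or yours) establishes.

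So: your decomposition is sound and matches the paper's under the hypothesis that actually holds in the application, you are right that the $(\tau'+1,\tau]$ boundary case is a genuine unhandled gap, and you should note that the paper's proof shares this gap. If you want to fully close it, the fix is not the $[i_a,c_a]$/$[c_a,j_a]$ split you tried (which, as you correctly observe, cannot beat tail entries only marginally above $\tau'+1$); rather, in the first move you should also eat into the tails by selecting intervals with small inner endpoints and the $w$-value-$\le\tau$ boundary as the outer endpoint, expanding outward one ``step'' at a time using condition~2 with the small endpoint as the pivot. But for the purposes of this paper, simply noting the $w_{i_a},w_{j_a}\le\tau'$ restriction and verifying it in the application suffices.
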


\begin{proof}
    For every $a$, let $[r^{(a)}_1,s^{(a)}_1], \ldots, [r^{(a)}_{m_a}, s^{(a)}_{m_a}]$ denote the separated partition of $[i_a,j_a]$ into intervals such that entries of $w$ corresponding to indices within an interval strictly exceed $\tau' + 1$, and such that entries of $w$ corresponding to indices within a gap of this partition are at most $\tau'$. Then define $i^{(a)}_c = r^{(a)}_c - 1$ and $j^{(a)}_c = s^{(a)}_c + 1$ so that $[i^{(a)}_c, j^{(a)}_c]$ is good for $w$. 

    We can make one move using all of the intervals $[i^{(a)}_c, j^{(a)}_c]$.\footnote{This is the only part of the proof in this section where the intervals defining the move are not disjoint as subsets of $\R$, so that there is a distinction between $[i,j]$ and $[i^*,j^*]$ in Step 2.} Note that the resulting vector in the next step of the game, call it $w'$, can equivalently be defined by taking $w$ and, within every interval $[i_a,j_a]$ of coordinates, removing all entries of $w$ which exceed $\tau' + 1$ as well as some other entries that are not the minimum entry of $w$ within that interval.

    Every interval $[i_a,j_a]$ of coordinates from $w$ corresponds in a natural way to an interval $[i'_a,j'_a]$ of coordinates from $w'$. Note that within any such interval, the entries of $w'$ are at most $\tau'$ by design, so $[i'_a,j'_a]$ is good for $w'$. We can then make one move using all of the intervals $[i'_a,j'_a]$ to obtain the vector $w^*$ defined in the lemma.
\end{proof}

\begin{lemma}\label{lem:zeroentry}
    If at any point in the game there is exactly one zero entry in the vector, then the game is over.
\end{lemma}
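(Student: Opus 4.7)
The plan is to establish an invariant and then use it: at every point in the game, the first and the last entries of the current vector are equal to $0$. Once this invariant is in hand, the lemma is immediate. Indeed, if at some point the vector has exactly one zero entry, then the first and last entries, both required to be zero, must coincide as a single entry, which forces the current length to equal $1$. But every $\phi$-legal move requires selecting indices $1 = i_1 < j_1 \le \cdots < j_m = k$, and in particular $k \ge 2$; hence no move is available from a length-one vector, and the game is over by definition.

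To establish the invariant, I would induct on the number of moves performed. The base case is exactly the hypothesis $w_1 = w_k = 0$ on the starting vector. For the inductive step, let $w$ denote the vector at the start of the move with current length $k$, so by hypothesis $w_1 = w_k = 0$ and $w \in \R_{\ge 0}^k$. In Step 1 of a legal move we must choose $i_1 = 1$ and $j_m = k$, and hence when the union of the chosen intervals is rewritten as the separated partition $[i_1^\ast, j_1^\ast], \ldots, [i_n^\ast, j_n^\ast]$ in Step 2, we also have $i_1^\ast = 1$ and $j_n^\ast = k$. The new first entry of the updated vector is then
\[
\min_{1 \le \ell \le j_1^\ast} w_\ell \le w_1 = 0,
\]
and since every entry of $w$ is nonnegative this minimum is exactly $0$. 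The analogous computation on the last interval $[i_n^\ast, k]$ gives that the new last entry is $0$ as well. This closes the induction and yields the invariant, which as noted above completes the proof.

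I do not expect any real obstacle: the content of the lemma is simply that the minimum operation in Step 2 preserves zero entries at the two endpoints of the vector, combined with the fact that a legal move is structurally required to span from index $1$ to the current terminal index and thus cannot be applied to a vector of length one.
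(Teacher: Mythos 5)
Your proof is correct and takes essentially the same route as the paper's: you observe that the endpoints of the vector remain zero throughout the game (the paper dismisses this with "by design," whereas you spell out the induction via the minimum in Step~2 and the structural requirement $i_1 = 1$, $j_m = k$), and then conclude that a single zero entry forces length one, from which no legal move is possible.
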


\begin{proof}
    By design, the leftmost and rightmost entries of any vector produced over the course of the game must be zero. So if there is a single zero entry, this means the vector is one-dimensional, and thus the game has ended.
\end{proof}

\begin{proof}[Proof of Lemma~\ref{lem:puzzle_main}]
    Starting with $s = 0$, we inductively define the following objects. Let $k^{(0)} \triangleq k$, $w^{(0)} \triangleq w$, and $u^{(0)} \triangleq w$. We will maintain the invariant that $u^{(s)}$ is a subsequence of $w^{(s)}$ such that all entries outside of $u^{(s)}$ are $> \tau - s + 1$.
    % suppose inductively that we have some decreasing sequence of positive integers $k^{(0)} > k^{(1)} > \cdots > k^{(t+1)}$, a sequence of separated partitions $\mf{P}_0, \ldots, \mf{P}_t$ of size $k^{(1)}, \ldots, k^{(t+1)}$ respectively, 
    % vectors $u^{(0)} \in \R_{\ge 0}^{k^{(0)}}, \ldots, u^{(t)} \in \R_{\ge 0}^{k^{(t)}}$, and vectors $w^{(0)},\ldots,w^{(t)}$
    % such that the following holds. First, for every $0 \le s \le t$, 
    
    Let $\mf{P}_s$ be a separated partition of $[k^{(s)}]$, and denote its size by $k^{(s+1)}$. Suppose that $\mf{P}_s$ consists of intervals $I^{(s)}_1,\ldots,I^{(s)}_{k^{(s+1)}}$ such that:
    \begin{itemize}
        \item Every entry of $u^{(s)}$ indexed by an entry from one of these intervals is $\le \tau - s$
        \item Every entry of $u^{(s)}$ indexed by an entry from a gap of $\mf{P}_s$ is $> \tau - s$.
    \end{itemize}
    
    Let $1 \le a^{(s+1)}_1 < \cdots < a^{(s+1)}_{k^{(s+1)}}\le k^{(s)}$ denote the indices in $I^{(s)}_1,\ldots, I^{(s)}_{k^{(s+1)}}$ over which $u^{(s)}$ is minimized within those intervals (breaking ties arbitrarily). Then define $u^{(s+1)} \in \R^{k^{(s+1)}}_{\ge 0}$ to be the entries of $u^{(s)}$ indexed by $a^{(s+1)}_1, \ldots, a^{(s+1)}_{k^{(s+1)}}$.

    Finally, we describe how to define $w^{(s+1)}$. First note that each interval $I^{(s)}_\ell$ corresponds to some subset $\brc{b_{\ell,1},\ldots, b_{\ell,m_\ell}}$ of the entries of $w^{(s)}$. Consider the intervals $[b_{\ell,1}, b_{\ell, m_\ell}]$ for every $\ell$, regarded as subsets of the coordinates of $w^{(s)}$.
    
    We claim that these intervals are all moderate for $w^{(s)}$ in the sense of Lemma~\ref{lem:twomove}. By the inductive hypothesis, all entries of $w^{(s)}$ outside of $u^{(s)}$ are $> \tau - s + 1$. So within any interval $[b_{\ell,1}, b_{\ell, m_\ell}]$ of coordinates of $w^{(s)}$, the indices which are not $b_{\ell,c}$ for some $c$ are $> \tau - s + 1$, whereas the indices which are $b_{\ell, c}$ for some $c$ are, by assumption on $I^{(s)}_\ell$, at most $\tau - s$. Therefore, the intervals $[b_{\ell,1}, b_{\ell, m_\ell}]$ are moderate for $w^{(s)}$ as claimed.
    
    We can thus make two moves in the game to replace the entries in each interval $[b_{\ell,1}, b_{\ell, m_\ell}]$ of coordinates in $w^{(s)}$ by the single entry $u^{(s+1)}_\ell$. Define $w^{(s+1)}$ to be this new vector. Note that the entries of $w^{(s+1)}$ outside of $u^{(s+1)}$ were either entries of $u^{(s)}$ from among the gaps of $\mf{P}_s$, or entries of $w^{(s)}$ outside of $u^{(s)}$. In the former case, by assumption on $\mf{P}_s$, such entries are $> \tau - s$, and in the latter case, by the inductive hypothesis, such entries are $> \tau - s + 1$. We have thus maintained the invariant that $u^{(s+1)}$ is a subsequence of $w^{(s+1)}$ such that all entries of $w^{(s+1)}$ outside of $u^{(s+1)}$ are $> \tau - (s + 1) +1$.

    Note that by Lemma~\ref{lem:half}, $k^{(t)} \le k^{(t-1)}/2$. So after making at most $O(\log k)$ moves as defined above, we end up with a vector $w^{(t)}$ which has exactly one entry which is at most $\tau - t$. If $\tau \ge \Omega(\log k)$, then because there will always be a zero entry within any vector obtained over the course of the game, this means that there is exactly one zero entry in $w^{(t)}$. By Lemma~\ref{lem:zeroentry}, this means the game has ended.
\end{proof}

\subsection{Noisy clumping game}

As we will see in Section~\ref{sec:shortpath_proof}, the steps in our learning algorithm will only \emph{approximately} correspond to moves in the clumping game. More precisely, the former exactly correspond to the following ``noisy'' version of the clumping game.

\begin{definition}
    Given $w\in\R^k_{\ge 0}$, the vector $w'\in\R^k_{\ge 0}$ is a $\Delta$-perturbation of $w$ if for every $\ell\in[k]$, the following holds:
    \begin{itemize}
        \item $w'_\ell \le w_\ell$
        \item If additionally $w_\ell > 1$, then $w'_\ell \ge w_\ell - \Delta$.
    \end{itemize}
\end{definition}

Given $k\in\mathbb{N}$, $\tau,\phi\ge 0$, and starting vector $w\in\R^k_{\ge 0}$ for which $w_1 = w_k = 0$, we say that a \emph{$\phi$-legal move in the \textbf{noisy} clumping game} is a move consisting of the following four steps. The first three steps are identical to those of the noiseless clumping game. Then in the fourth step,
\begin{enumerate}
    \setcounter{enumi}{3}
    \item Replace $w$ with an arbitrary (possibly adversarially chosen) $1/100k$-perturbation of $w$.
\end{enumerate}

Lemma~\ref{lem:monotone} ensures that a $1$-legal strategy for the noiseless clumping game can be converted into a $0.99$-legal strategy for the noisy clumping game:

\begin{lemma} \label{lem:noisyvsnoiseless}
    Let $w\in\R^k_{\ge 0}$ be any vector for which $w_1 = w_k = 0$. If there is a sequence of $N$ $1$-legal moves in the noiseless clumping game starting from $w$ after which the game ends with the final vector consisting of a single zero entry, then there is a sequence of $N$ $(1 - \Delta)$-legal moves starting from $w$ in the noisy clumping game after which the game ends with the final vector consisting of a single zero entry.
\end{lemma}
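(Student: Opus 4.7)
The approach is a straightforward simulation argument by induction on the move number $s$, where the noisy game simply copies the intervals chosen by the given noiseless strategy at each step. I maintain the invariant that after $s$ moves the state $\tilde{w}^{(s)}$ of the noisy game is an $(s/(100k))$-perturbation of the corresponding noiseless state $w^{(s)}$. The base case is trivial since both games begin at $w$, and since $N \le k$, taking $\Delta := N/(100k) \le 1/100$ will make every move $(1-\Delta)$-legal.

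At the inductive step, I first verify that the intervals $[i_1,j_1], \ldots, [i_m,j_m]$ chosen by the noiseless strategy at $w^{(s)}$ remain a $(1-\Delta)$-legal move at $\tilde{w}^{(s)}$. Since perturbations only decrease entries, the requirement $w_{i_a}, w_{j_a} \le \tau$ and the closeness alternative $w_\ell \le \tau$ transfer directly from $w^{(s)}$ to $\tilde{w}^{(s)}$. For the gap alternative, the noiseless bound $w^{(s)}_\ell > \max(w^{(s)}_{i_a}, w^{(s)}_{j_a}) + 1$ forces $w^{(s)}_\ell > 1$, so the perturbation definition gives $\tilde{w}^{(s)}_\ell \ge w^{(s)}_\ell - s/(100k)$; combined with $\tilde{w}^{(s)}_{i_a} \le w^{(s)}_{i_a}$ and $\tilde{w}^{(s)}_{j_a} \le w^{(s)}_{j_a}$, this yields $\tilde{w}^{(s)}_\ell > \max(\tilde{w}^{(s)}_{i_a}, \tilde{w}^{(s)}_{j_a}) + (1 - s/(100k)) \ge \max + (1 - \Delta)$, as required.

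Next I propagate the invariant through the collapse of intervals and the adversary's $1/(100k)$-perturbation. Because the collapse replaces each interval by its minimum and the min is a coordinatewise-monotone operation, the upper bound on the pre-adversary state is immediate; and whenever the new noiseless entry $w^{(s+1)}_c$ exceeds $1$, every entry of the corresponding noiseless interval lies above $1$, so the inductive lower bound transfers through the min to give an $(s/(100k))$-perturbation of $w^{(s+1)}$ before the adversary acts. The adversary's $1/(100k)$-perturbation then contributes at most another $1/(100k)$ to the error, completing the inductive step. The main subtlety here is a borderline scenario in which $w^{(s+1)}_c$ lies in the narrow window $(1, 1 + s/(100k)]$: the pre-adversary noisy entry could slip to at most $1$, letting the adversary drop it to $0$ and seemingly violate the invariant. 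I expect Lemma~\ref{lem:monotone} to encapsulate exactly this no-crossing behaviour of the game (or equivalently to provide a composition rule for perturbations across the threshold $1$), so that this case is safely absorbed; failing that, one can replace the invariant by the slightly weaker statement that entries $w^{(s+1)}_c$ that are potentially problematic for the \emph{next} move are bounded strictly away from $1$, which follows from the fact that the gap alternative only uses entries with $w_\ell > \max + 1 > 1$ and an additive margin accumulates with $s$.

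Finally, after the full sequence of $N$ moves the noiseless state $w^{(N)}$ consists of a single zero entry, so $\tilde{w}^{(N)}$, being a perturbation of it, likewise consists of a single entry in $[0,0]$ and must itself be zero. Thus the noisy game ends after exactly $N$ $(1-\Delta)$-legal moves with the desired final configuration, proving the lemma.
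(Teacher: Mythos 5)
Your proof mirrors the paper's, which establishes the claim by iterating Lemma~\ref{lem:monotone}: simulate the noiseless strategy move-for-move and track the accumulated perturbation as $s/(100k)$, noting $N<k$ so the total stays below $1/100$. The threshold-one composition subtlety you flag is genuine, but the paper's own proof of Lemma~\ref{lem:monotone} glosses over it in exactly the same way (it asserts without justification that a $1/100k$-perturbation of a $\Delta$-perturbation is a $(\Delta+1/100k)$-perturbation, which fails for entries in the window $(1,1+\Delta]$), so your argument is at the same level of rigor as the paper's and follows the same route.
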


\noindent This follows immediately by inducting on $k$ and repeatedly applying the following:

\begin{lemma} \label{lem:monotone}
    Let $w\in\R^k_{\ge 0}$ be any vector for which $w_1 = w_k = 0$, and let $w'$ be any $\Delta$-perturbation of $w$. Any $1$-legal move in the noiseless clumping game starting from $w$ is also a $(1 - \Delta)$-legal move in the noisy clumping game starting from $w'$.
    
    Let $u, u'$ be the vectors resulting from the former and latter respectively. Then $u'$ is a $(\Delta + 1/100k)$-perturbation of $u$.
\end{lemma}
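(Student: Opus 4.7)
The plan is to split the proof into its two claims---legality of the selected move when applied to $w'$, and the perturbation bound relating $u'$ to $u$---and attack each by unpacking definitions. Both reduce to propagating the pointwise inequalities $w' \le w$ and $w'_\ell \ge w_\ell - \Delta$ (the latter whenever $w_\ell > 1$) through Step~2 (interval minima) and Step~4 (adversarial perturbation).

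For the legality claim, I would fix a selected interval $[i_a, j_a]$. The endpoint condition $w'_{i_a}, w'_{j_a} \le \tau$ is immediate from $w' \le w$. For the interior, I would split on the two alternatives in the definition of a legal move. If every interior entry satisfies $w_\ell \le \tau$, then $w'_\ell \le \tau$ as well by monotonicity. In the opposite case where $w_\ell > \max(w_{i_a}, w_{j_a}) + 1$ for all interior $\ell$, the right-hand side is at least $1$ since $w$ is nonnegative, so $w_\ell > 1$ and the perturbation bound gives
\[
w'_\ell \;\ge\; w_\ell - \Delta \;>\; \max(w_{i_a}, w_{j_a}) + 1 - \Delta \;\ge\; \max(w'_{i_a}, w'_{j_a}) + (1 - \Delta).
\]
This is exactly the interior condition at margin $1 - \Delta$, so the selected intervals form a $(1 - \Delta)$-legal move on $w'$.

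For the perturbation claim, write $v$ for the vector obtained from $w'$ immediately after Step~3, so that $u'$ is an adversarial $1/100k$-perturbation of $v$. Fix a merged interval $[i^*_a, j^*_a]$ and let $u_{[a]} = \min_\ell w_\ell$ and $v_{[a]} = \min_\ell w'_\ell$ over $\ell$ in that interval. The upper bound $u'_{[a]} \le v_{[a]} \le u_{[a]}$ is immediate from $w' \le w$ and from Step~4 being non-increasing. When $u_{[a]} > 1$, every $w_\ell$ over the interval exceeds $1$, so the perturbation bound yields $w'_\ell \ge w_\ell - \Delta \ge u_{[a]} - \Delta$, hence $v_{[a]} \ge u_{[a]} - \Delta$. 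Composing with the $1/100k$-perturbation in Step~4 then gives $u'_{[a]} \ge v_{[a]} - 1/100k \ge u_{[a]} - (\Delta + 1/100k)$, which is the definition of a $(\Delta + 1/100k)$-perturbation.

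The main subtlety to watch for is aligning the ``$>1$'' threshold in the perturbation definition across the two levels of the argument: in the high-value interior alternative the threshold triggers automatically because $\max(w_{i_a}, w_{j_a}) + 1 \ge 1$, and in the output bound it triggers via the hypothesis $u_{[a]} > 1$ (which is the only regime where the perturbation definition even imposes a lower bound). One must also track that passing through the minimum operation does not degrade the additive error---this is exactly the monotonicity step that lets the two perturbations compose additively. Beyond these two threshold checks the argument is a short composition of definitions, with no nontrivial combinatorics beyond what is already captured by Steps~1--3.
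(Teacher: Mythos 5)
Your proof is correct and follows essentially the same route as the paper's: propagate $w' \le w$ (and the lower bound when $w_\ell > 1$) through the interior dichotomy to get $(1-\Delta)$-legality, then show the minimum over each merged interval is a $\Delta$-perturbation and compose with the adversarial $1/100k$-perturbation from Step 4. The paper compresses the second half into the one-line assertion that a $1/100k$-perturbation of a $\Delta$-perturbation is a $(\Delta + 1/100k)$-perturbation, which is exactly the composition you spell out; note that both your argument and the paper's implicitly use that $v_{[a]} > 1$ when invoking the Step-4 lower bound on $u'_{[a]}$, an edge case that is harmless here because $\Delta$ stays well below $1$ in every application of the lemma.
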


\begin{proof}
    Suppose the $1$-legal move in the noiseless clumping game is specified by intervals $\brc{[i_a,j_a]}_{a\in[m]}$. 

    Certainly for any $a\in[m]$, if $w_{i_a}, w_{j_a} \le \tau$, then $w'_{i_a}, w'_{j_a}\le \tau$. Likewise, if $w_\ell \le \tau$ for all $i_a < \ell < j_a$, then $w'_\ell \le \tau$ for all $i_a < \ell < j_a$. Otherwise, if $w_\ell > \max(w_{i_a}, w_{j_a}) + 1$ for all $i_a < \ell < j_a$, then by the definition of $\Delta$-perturbation, $w'_\ell \in [w_\ell - \Delta,w_\ell]$, so $w'_\ell > \max(w_{i_a}, w_{j_a}) + 1 - \Delta \ge \max(w'_{i_a}, w'_{j_a}) + 1$ for all $i_a < \ell < j_a$. We conclude that $\brc{[i_a,j_a]}_{a\in[m]}$ is a $(1 - \Delta)$-legal move. The last part of the lemma follows from the fact that a $1/100k$-perturbation of a $\Delta$-perturbation is a $(\Delta + 1/100k)$-perturbation.
\end{proof}

\subsection{Relating the noisy clumping game to Lemma~\ref{lem:shortpath}}
\label{sec:shortpath_proof}

We show that any sequence of $\calJ_1,\ldots,\calJ_q$ satisfying the four conditions of Lemma~\ref{lem:shortpath} corresponds to a sequence of $0.99$-legal moves in the noisy clumping game, and vice versa, after which we can conclude the proof of Lemma~\ref{lem:shortpath} by invoking Lemma~\ref{lem:puzzle_main} and Lemma~\ref{lem:noisyvsnoiseless}.

First, we need the following definition:

\begin{definition}
    Let $c$ be the constant in the exponent of $\gamma/d$ in the definition of $T(\gamma)$ in \eqref{eq:Tdef}. Given $\gamma \in [0,1)$, define the \emph{level of $\gamma$}, denoted $L(\gamma)$, by \begin{equation}
        % L(\gamma) \triangleq \frac{0.9}{\ln(ck)}\ln\Bigl(1 + \ln(1/\gamma)\cdot \frac{ck - 1}{ck\ln d - \ln(\Lambda^{10} / \radius^2)}\Bigr)\,,
        L(\gamma) \triangleq \frac{0.9}{\ln(ck)} \ln \Bigl(1 + \frac{(ck - 1)\ln(\epsilon'/\gamma)}{ck\ln d + \ln(\radius^2/\ulam^{10}) + (ck - 1)\ln(1/\epsilon')} \Bigr)
        % \Bigl( \frac{ck\ln d + \ln(\radius^2/\ulam^{10}) + (ck - 1)\ln(1/\gamma)}{ck\ln d + \ln(\radius^2/\ulam^{10}) + (ck - 1)\ln(1/\epsilon')}\Bigr)
    \end{equation}
    where $\Lambda$ is defined in \eqref{eq:paramdef} and $\xi$ therein is given by \eqref{eq:xidef}. This is clearly monotonically increasing as $\gamma$ decreases.
    
    The function $L$ is chosen so that 
    \begin{equation}
        L(T(\gamma)) = 0.9 + L(\gamma)\,.
    \end{equation} In particular, the level of $T^{(n)}(\epsilon')$ is precisely $0.9n$, where $T^{(n)}$ denotes $n$-fold composition of $T$.
\end{definition}

\begin{observation}\label{obs:margin}
    For any $0 \le \gamma_1 \le \cdots \le \gamma_s$, if $L(\gamma_s) > 1$, then
    \begin{equation}
        L(\gamma_s) - O(1/(k\ln d)) \le L(\gamma_1 + \cdots + \gamma_s) \le L(\gamma_s)\,.
    \end{equation} The latter bound also holds if $L(\gamma_s) \le 1$.
\end{observation}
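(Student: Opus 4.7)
The upper bound $L(\gamma_1 + \cdots + \gamma_s) \le L(\gamma_s)$ is immediate from monotonicity of $L$: inspection of the definition shows that $L$ is a decreasing function of $\gamma$, and $\gamma_1 + \cdots + \gamma_s \ge \gamma_s$. Since this argument uses nothing about the magnitude of $L(\gamma_s)$, it also proves the parenthetical second claim.

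For the lower bound, the plan is to sandwich the sum from above by $\gamma_1 + \cdots + \gamma_s \le s\gamma_s \le k\gamma_s$, using $s \le k$ (the relevant regime in our application, since scales arise from at most $k$ neurons). Applying monotonicity once more, it then suffices to show $L(k\gamma_s) \ge L(\gamma_s) - O(1/(k\ln d))$.

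This reduces the observation to a one-variable calculation. Setting $A \triangleq ck\ln d + \ln(\radius^2/\ulam^{10}) + (ck-1)\ln(1/\epsilon')$, $B \triangleq ck - 1$, and $x \triangleq 1 + B\ln(\epsilon'/\gamma_s)/A$, we have $L(\gamma_s) = \frac{0.9}{\ln(ck)}\ln(x)$ and
\begin{equation}
L(k\gamma_s) - L(\gamma_s) = \frac{0.9}{\ln(ck)}\,\ln\Bigl(1 - \frac{B\ln k}{Ax}\Bigr).
\end{equation}
The hypothesis $L(\gamma_s) > 1$ translates to $x \ge (ck)^{1/0.9} \ge ck$, and this is the only place the hypothesis enters. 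Because $\radius \ge 1$, $\ulam \ll 1$, and $\epsilon' < 1$ force the last two terms in $A$ to be nonnegative, we have $A \ge ck\ln d$; combined with $B \le ck$ and $x \ge ck$, this bounds $B\ln k/(Ax) \le \ln k/(ck\ln d) \ll 1$. A first-order expansion $\ln(1-y) \ge -2y$ then gives $L(k\gamma_s) - L(\gamma_s) \ge -O(\ln k / (\ln(ck)\cdot ck\ln d)) = -O(1/(k\ln d))$, as required.

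There is no serious obstacle; the only subtlety is quantitative bookkeeping, to verify that the hypothesis $L(\gamma_s) > 1$ is strong enough to make $x$ dominate the perturbation introduced by replacing $\gamma_s$ with $k\gamma_s$, and that the spurious $\ln k$ factor from that replacement is fully absorbed by the $\ln(ck)$ in the denominator of $L$ together with the $ck\ln d$ term from $A$. The $\Theta(k)$ in the exponent of the definition of $T(\gamma)$ (carried into the constant $c$ appearing in $A$) is precisely what makes this cancellation work.
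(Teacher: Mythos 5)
Your proof is correct and follows essentially the same route as the paper's: monotonicity gives the upper bound, and for the lower bound you dominate the sum by $k\gamma_s$ and then expand $L(k\gamma_s) - L(\gamma_s)$, using $L(\gamma_s) > 1 \Rightarrow x \ge (ck)^{1/0.9}$ to control the perturbation term. Your write-up is if anything cleaner, since the paper's version contains a few apparent typos (e.g.\ ``$\ln(1+\cdots) \ge ck$'' where it should be $1 + \cdots \ge (ck)^{1/0.9}$, and a missing minus sign in the final displayed inequality).
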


\begin{proof}
    That $L(\gamma_1 + \cdots + \gamma_s) \le L(\gamma_s)$ follows immediately from the fact that $L$ is a monotonically decreasing function. For the other bound, for convenience, denote $\beta \triangleq ck\ln d + \ln(\radius^2/\Lambda^{10} + (ck - 1)\ln(1/\epsilon')$ and note that $L(\gamma_s) \ge 1$ implies that $\ln(1 + (ck-1)\ln(\epsilon'/\gamma)/\beta) \ge ck$ so that
    \begin{equation}
        \ln\frac{1 + \ln(1/k\gamma)\cdot (ck - 1)/\beta}{1 + \ln(1/\gamma)\cdot (ck-1)/\beta} 
        \ge \ln\Bigl(1 - \frac{\ln(k) \cdot (ck-1)/\beta}{ck} \Bigr) \ge 1 - O(\ln k/(k\ln d))\,,
    \end{equation}
    so
    \begin{equation}
        L(\gamma_1 + \cdots + \gamma_s) \ge L(k\gamma_s) \ge L(\gamma_s) - O(1/(k \ln d))\,.
    \end{equation}
\end{proof}

\begin{proof}[Proof of Lemma~\ref{lem:shortpath}]
    Take the ``$k$'' in Section~\ref{sec:game} to be $k + 1$, and let the initial vector ``$w$'' be defined as follows. As specified in Section~\ref{sec:game}, we take its first and last entries to be $0$. For $1 < i < k$, let the $i$-th entry of ``$w$'' from Section~\ref{sec:game} be given by 
    \begin{equation}
        w_i \triangleq L(v_i - v_{i-1})\,.
    \end{equation}
    Define
    \begin{equation}
        \tau \triangleq L(\underline{\gamma}) = \Theta(\log k)\,.
    \end{equation}
    
    For $\phi = 0.99$, consider any first move $\brc{[i_a,j_a]}$ in the noisy clumping game, starting from the vector $w$. 
    
    Suppose first that this move does not consist of $\brc{[1,k+1]}$. We show that this move corresponds to a choice of $\calJ$ satisfying the conditions that
    \begin{itemize}
        \item[(i)] All the subsets $\Sclose_i(\gamma)$ for $(i,\gamma)\in\calJ$ are disjoint.
        \item[(ii)] For each $(i,\gamma)\in\calJ$, $\gamma$ is a gapped scale for $i$.
    \end{itemize}
    For each $a\in[m]$,
    \begin{itemize}
        \item[(A)] \underline{If $w_\ell \le \tau$ for all $i_a \le \ell \le j_a$}: by definition of $w$, this means that $L(v_\ell - v_{\ell-1}) \le \tau$ for all $\max(i_a,2) \le \ell \le \min(j_a,k)$.
        So $v_{\max(i_a-1,1)},\ldots, v_{\min(j_a,k)}$ are all separated by a distance of at least $\underline{\gamma}$. This means that $\Sclose_\ell(\underline{\gamma}) = \brc{\ell}$ while $\Sfar_\ell(\underline{\gamma}) = [k]\backslash\brc{\ell}$, so $\underline{\gamma}$ is a gapped scale for every $i_a \le \ell < j_a$. Add $(\ell,\underline{\gamma})$ for all $i_a \le \ell < j_a$ to $\calJ$.
        \item[(B)] \underline{If $w_{\ell} > \max(w_{i_a},w_{j_a}) + 0.99$ for all $i_a < \ell < j_a$}: this means that 
        \begin{equation}
            v_\ell - v_{\ell - 1} \ll \min(T(v_{i_a} - v_{i_a - 1}), T(v_{j_a} - v_{j_a - 1}))
        \end{equation}
        for all $\max(i_a,2) \le \ell \le \min(j_a,k)$.
        In fact, because of the margin between $\phi = 0.99$ and the constant $0.9$ in the definition of $L$ and by Observation~\ref{obs:margin}, this ensures that 
        \begin{equation}
            v_\ell - v_{\ell'} \ll \min(T(v_{i_a} - v_{i_a - 1}), T(v_{j_a} - v_{j_a - 1}))   
        \end{equation}
        for all $i_a\le \ell < j_a$. 
        
        Let $\gamma$ be such that $L(\gamma) = \min(w_{i_a}, w_{j_a})$. If the minimum is achieved by the former (resp. the latter), then $\gamma$ is a gapped scale for $i_a$ (resp. $j_a$) and $\Sclose_{i_a}(\gamma) = \brc{i_a,\ldots,j_a - 1}$ (resp. $\Sclose_{j_a}(\gamma) = \brc{i_a,\ldots,j_a - 1}$). Add $(i_a,\gamma)$ (resp. $(j_a,\gamma)$) to $\calJ$.
    \end{itemize}
    It is clear from the above construction of $\calJ$ that the two conditions (i) and (ii) hold. 
    
    Finally, we need to relate the removal of neurons indexed by $\cup_{(i,\gamma)\in\calJ} \Sclose_i(\gamma)$ from $[k]$ to Steps 2 and 4 of the clumping game. Indeed, for each $(i,\gamma)\in\calJ$, if the pair came from
    case (A) above and corresponds to an index $\ell$ for which $i_a \le \ell < j_a$, then $\Sclose_i(\gamma) = \brc{\ell}$. Recall that for all $[i_a,j_a]$ in case (A), there is a pair in $\calJ$ corresponding to each such $\ell$. On the other hand, if $(i,\gamma)$ instead came from case (B) above and $[i_a,j_a]$ is the corresponding interval from the move of the noisy clumping game, then $\Sclose_i(\gamma)\subset[k]$ is the set $\brc{i_a,\ldots,j_a - 1}$. Putting things together, we conclude that $\cup_{(i,\gamma)\in\calJ}\Sclose_i(\gamma) = \cup_a \brc{i_a,\ldots,j_a - 1}$. So if $[i^*_1,j^*_1],\ldots,[i^*_n,j^*_n]$ are the intervals defined in Step 2 of the game, then $\cup_{(i,\gamma)\in\calJ}\Sclose_i(\gamma) = \cup_a \brc{i^*_a,\ldots,j^*_a - 1}$. 
    
    Note that for every $1 \le a < n$, $L(v_{j^*_a + 1} - v_{i^*_a - 1}) \le L(\min_{i^*_a \le \ell \le j^*_a} w_\ell)$ and additionally, if $L(\min_{i^*_a \le \ell \le j^*_a} w_\ell) \ge 1$, then $L(v_{j^*_a + 1} - v_{i^*_a - 1}) \ge L(\min_{i^*_a \le \ell \le j^*_a} w_\ell) - O(1/(k\log d))$. For $d$ greater than a sufficiently large constant, the quantity $O(1/(k\log d))$ is at most $1/100k$, so if $\ell_1,\ldots,\ell_s$ are the neurons remaining after the first step of the algorithm, the vector with entries consisting of $L(v_{\ell_a} - v_{\ell_{a - 1}})$ is a valid $1/100k$ perturbation of the vector obtained from Step 2 of the clumping game.
    
    We have thus shown that any initial $0.99$-legal move $\brc{[i_a,j_a]}$ in the noisy clumping game that does not consist solely of the interval $[1,k+1]$ corresponds to a valid set $\calJ$ of neurons that can be learned and removed from consideration in the first iteration of our recursive learner, and moreover the sequence of pairwise separations between successive $v_i$'s associated to the remaining neurons corresponds to the new vector $w$ in Step 4 from making the move $\brc{[i_a,j_a]}$ in the noisy clumping game. See Figure~\ref{fig:clump} for an illustration of this correspondence.

    \begin{figure}[h]
        \centering
        \includegraphics[width=0.9\textwidth]{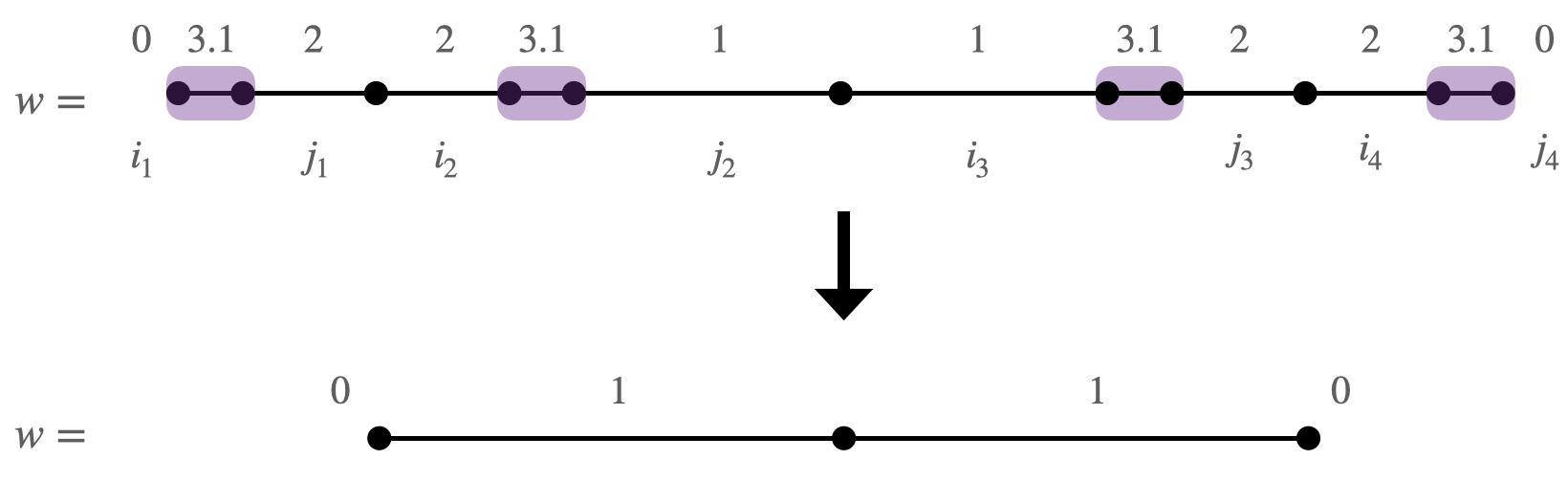}
        \caption{Example of a $1$-legal move in the (noiseless) clumping game, with $\tau = 3$. Initially, the vector is given by $w = (0,3.1,2,2,3.1,1,1,3.1,2,2,3.1,0)$, and the move is given by $(i_1,j_1) = (1,3), (i_2,j_2) = (4,6), (i_3,j_3) = (7,9), (i_4,j_4) = (10,12)$. The vector resulting from this move is $(0,1,0)$. Below the entries of $w$ is a sequence of points $v_i$ such that the distance between the $(i-1)$-st and $i$-th point is $\gamma$ satisfying $L(\gamma) = w_i$. The highlighted regions correspond to clumps of neurons for which there is a gapped scale and which can thus be learned using the analysis in Case 2a in Section~\ref{sec:case2a}. After these neurons are learned and subtracted out of the network, the remaining three neurons have pairwise separations $\gamma_1,\gamma_2$ for which $(L(\gamma_1), L(\gamma_2)) \approx (1,1)$.} 
        \label{fig:clump}
    \end{figure}
    
    The above reasoning can then be applied in an identical fashion to subsequent moves in the clumping game / subsequent iterations of the recursive learner, provided the move does not consist solely of the interval $[1,k+1]$.

    Lemma~\ref{lem:puzzle_main} and Lemma~\ref{lem:noisyvsnoiseless} ensure the existence of a sequence of $0.99$-legal moves in the noisy clumping game at the end of which the game ends with the vector $w$ consisting of a single zero entry. The last move in this sequence must be the move $[1,k+1]$. If $\ell_1,\ldots,\ell_s\in[k]$ are the only remaining neurons prior to this final move, the move $[1,k+1]$ is only $0.99$-legal provided that $L(v_{\ell_a} - v_{\ell_{a-1}}) > 0.99$, but in this case $|v_{\ell_a} - v_{\ell_b}| \le k\cdot T(1) \ll \epsilon'$ for all $a,b\in[s]$, as desired.
\end{proof}

% , we are left with only the indices $i_a$ for all $a\in[m]$ such that $i_a = j_a$. Note that for any two indices $i_a < i_{a'}$ for which there are no other $a'' \in (a,a')$ for which $i_{a''} = j_{a''}$, $|v_{i_a} - v_{i_{a'}}| \ge $

\section{Conclusion and Future Directions}
In this paper, we provided the first PAC learning algorithm that learns narrow (constant $k$) one-hidden-layer neural networks in polynomial time. Our algorithm used random contractions of higher order moment tensors and subsequently performed an iterative procedure to repeatedly learn some clumps of neurons. This allowed us to avoid depending on the condition number of the weight matrix, which was a core assumption in several prior works. 

One obvious drawback of our technique is the unsatisfactory dependence on $k$ in our runtime $d^{k^{\mathcal O(\log^2 k)}}$. Note that known CSQ hardness results rule out the possibility of improving to $d^{o(k)}$ for our algorithm, but we leave as an interesting open question closing the gap between our upper bound and these lower bounds. Furthermore, harnessing the power of non-CSQ style approaches as in \cite{chenmeka,chen2022learning} could potentially allow us to get an algorithm that runs in time $\poly(d)\cdot (1/\epsilon)^{h(k)}$ for some function $h$. 

Another worthwhile direction is to investigate other applications of the power sum estimate technique proposed in our work to tensor problems in the absence of separation conditions.

\bibliographystyle{alpha}
\bibliography{reference}

\newpage
\appendix

\section{Deferred Preliminaries}

\subsection{Hermite polynomials}
\label{app:hermite}
% As a classical orthogonal polynomial sequence, Hermite polynomials have a wide range of applications such as signal processing, random matrix theory and quantum physics. Despite the multiple ways they are defined, 
Recall the definition of the probabilist's Hermite polynomials:
\[H_n(x) = (-1)^n e^{\frac{x^2}{2}}\cdot\frac{d^2}{dx^2}e^{-\frac{x^2}{2}}.\]
Under this definition, the first four Hermite polynomials are
\[H_0(x) = 1, ~~H_1(x) = x, ~~H_2(x) = x^2 - 1, ~~ H_3(x) = x^3-3x.\]
The Hermite polynomials comprise an orthogonal basis of the Hilbert space $\mL^2(\bR, \omega)$
% of functions satisfying 
% \[\bE_{x\sim\mN(0,1)} |f(x)|^2 = \int_{-\infty}^{\infty}|f(x)|^2\omega(x)dx < +\infty\]
where $\omega$ is the standard Gaussian measure on $\R$.
In this function space, the inner product is defined as $\langle f,g \rangle = \bE_{x\sim\mN(0,1)} [f(x) g(x) ]$.
% = \int_{-\infty}^{\infty} f(x)\overline{g(x)}\cdot \omega(x)dx . \]
% Next, we introduce several useful propositions on Hermite polynomials. 
Under this inner product, we have
% \begin{Proposition}[Inner Product between Hermite Polynomials]\label{prop:2.1} For arbitrary non-negative integers $m,n\geqslant 0$, the inner product of two Hermite polynomials $H_m(x), H_n(x)$ can be calculated as:
\[\langle H_m, H_n\rangle = \bE_{x\sim\mN(0,1)} H_m(x)H_n(x) = m!\cdot \mathbb{I}[m=n].\]
% \end{Proposition}
% When $m\neq n$, the inner product is 0, which shows their orthogonality. When $m = n$, the equation above claims $\|H_n\|^2 = n!$, which leads to the fact that $\{H_n(x)/\sqrt{n!}\}_{n\geqslant 0}$ comprises an orthonormal basis of Hilbert space $\mL^2(\bR, \omega(x)dx)$. 
% \begin{Proposition}
% \label{prop:2.2}
% Let $f\in \mL^2(\bR, \omega(x)dx)$ be any smooth function with finite squared norm under normal distribution. Denote $f^{(n)}$ be its $n$-th order derivative, then:
% \[\bE_{x\sim\mN(0,1)} \left[f(x) H_n(x) \right]= \bE_{x\sim\mN(0,1)} \left[f^{(n)}(x)\right].\]
% \end{Proposition}

The \emph{normalized} probabilist's Hermite polynomials are given by $\wh{H}_\ell(x) \triangleq \frac{1}{\sqrt{\ell!}} H_\ell(x)$; these comprise an orthonormal basis of $\mathcal{L}^2(\R,\omega)$. Finally, given $x\in\R^d$, we define the \emph{Hermite tensor} $S_\ell(x) \in (\R^d)^{\otimes \ell}$ to be the tensor whose $(i_1,\ldots,i_\ell)$-th entry is given as follows. Suppose that element $j\in[d]$ appears among $i_1,\ldots,i_\ell$ a total of $n_j$ times. Then the $(i_1,\ldots,i_\ell)$-th entry of $S_\ell$ is given by $\prod^d_{j=1} \wh{H}_{n_j}(x_j)$.

\subsection{Measuring distance between networks}
Here, we will show that parameter closeness implies closeness in the square loss, which also implies the closeness between moment tensors.

\begin{lemma}[Lemma 3.3 from \cite{chen2021efficiently}]\label{lem:closerelus}
    For any unit vectors $u,u'$,
    \begin{equation}
        \mathbb{E}\left[\relu(\langle u, x\rangle) - \relu(\langle u', x\rangle)\right]^2 \le \frac{5}{6}\|u-u'\|_2^2.
    \end{equation}
\end{lemma}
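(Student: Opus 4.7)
The plan is to reduce to a two-dimensional Gaussian computation and then verify an elementary one-variable inequality. First I would observe that since $\relu(\langle u, x\rangle)$ and $\relu(\langle u', x\rangle)$ depend on $x$ only through $X \triangleq \langle u, x\rangle$ and $Y \triangleq \langle u', x\rangle$, the quantity of interest depends only on the joint distribution of $(X,Y)$, which is a bivariate Gaussian with unit marginals and correlation $\rho = \langle u, u'\rangle$. Since $u, u'$ are unit vectors, $\|u - u'\|_2^2 = 2 - 2\rho$, so the claim becomes a one-parameter inequality in $\rho \in [-1,1]$.

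Next I would compute the left-hand side in closed form. The marginal computation $\mathbb{E}[\relu(X)^2] = 1/2$ is immediate, and the cross term is the standard arccosine-kernel identity
\begin{equation}
\mathbb{E}[\relu(X)\relu(Y)] = \frac{1}{2\pi}\bigl(\sqrt{1-\rho^2} + \rho(\pi - \arccos \rho)\bigr),
\end{equation}
which one can derive by writing $X = \rho Y + \sqrt{1-\rho^2}\,Z$ for $Z \sim \mathcal{N}(0,1)$ independent of $Y$ and performing the resulting two-dimensional Gaussian integral. Expanding $\mathbb{E}[(\relu(X)-\relu(Y))^2] = \mathbb{E}[\relu(X)^2] + \mathbb{E}[\relu(Y)^2] - 2\mathbb{E}[\relu(X)\relu(Y)]$ then yields
\begin{equation}
\mathbb{E}[(\relu(X)-\relu(Y))^2] = 1 - \tfrac{1}{\pi}\bigl(\sqrt{1-\rho^2} + \rho(\pi - \arccos \rho)\bigr).
\end{equation}

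Finally I would verify the inequality by setting
\begin{equation}
g(\rho) \triangleq \tfrac{5}{6}(2-2\rho) - \Bigl(1 - \tfrac{1}{\pi}\bigl(\sqrt{1-\rho^2} + \rho(\pi - \arccos\rho)\bigr)\Bigr) = \tfrac{2}{3}(1-\rho) + \tfrac{\sqrt{1-\rho^2}}{\pi} - \tfrac{\rho \arccos \rho}{\pi}
\end{equation}
and showing $g(\rho) \ge 0$ on $[-1,1]$. Differentiating, the two $\rho/(\pi\sqrt{1-\rho^2})$ contributions from the last two terms cancel exactly, leaving the transparently negative expression $g'(\rho) = -\tfrac{2}{3} - \tfrac{\arccos \rho}{\pi}$. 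Hence $g$ is strictly decreasing on $[-1,1]$, and since $g(1) = 0$, we get $g(\rho) \ge 0$ throughout $[-1,1]$, which is exactly the claim.

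The main technical ingredient is the closed-form formula for $\mathbb{E}[\relu(X)\relu(Y)]$; once this is in hand, the remaining inequality is a routine calculus exercise, and the clean cancellation in $g'$ is the only place where one has to be careful. I do not expect any serious obstacle, and note that the constant $5/6$ appears to be loose (the ratio tends to $1/2$ as $\rho \to 1$), but the above argument is more than enough to establish the stated bound.
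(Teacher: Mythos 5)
Your proof is correct. Since the paper only cites this as Lemma~3.3 of \cite{chen2021efficiently} without reproducing an argument, there is no in-paper proof to compare against, but the argument you give is sound and self-contained: the reduction to the correlation $\rho = \langle u,u'\rangle$ is valid because the ReLUs depend on $x$ only through $(X,Y)$, the arc-cosine kernel identity $\mathbb{E}[\relu(X)\relu(Y)] = \frac{1}{2\pi}(\sqrt{1-\rho^2} + \rho(\pi-\arccos\rho))$ is standard (e.g.\ Cho--Saul), and the derivative computation is right --- the $\rho/(\pi\sqrt{1-\rho^2})$ contributions from $\frac{d}{d\rho}\sqrt{1-\rho^2}$ and from the product rule on $\rho\arccos\rho$ do cancel, leaving $g'(\rho) = -\tfrac{2}{3} - \tfrac{\arccos\rho}{\pi} < 0$ on $(-1,1)$, and $g(1)=0$ clinches it. Your side remark is also accurate: the sharp constant is $\tfrac{1}{2}$ (approached as $\rho\to 1^-$, where $\mathbb{E}[(\relu(X)-\relu(Y))^2] = (1-\rho) + O((1-\rho)^{3/2})$ while $\|u-u'\|^2 = 2(1-\rho)$), so $\tfrac{5}{6}$ is not tight.
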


\begin{lemma}\label{lem:paramtoL2}
For two 2-layer ReLU networks $\flamu, f_{\vec{\lambda}',\vec{u}'}$ with the same width $k$ and for which 
% :
% \[f_{\lambda, \vec{u}} = \sum_{i=1}^k \lambda_i \cdot \relu (\langle u_i, x\rangle), ~~~f_{\lambda', \vec{u}'} = \sum_{i=1}^k \lambda_i'\cdot \relu(\langle u_i', x\rangle)\]
% where $(\lambda_i, u_i), (\lambda_i', u_i')\in \mathbb{R}\times \mathbb{S}^{d-1}$. 
$\norm{\vlam}_1, \norm{\vec{\lambda}'}_1 \le \radius $ for some $\radius > 0$, we have
\[\|f_{\lambda, \vec{u}}-f_{\lambda', \vec{u}'}\|_2\lesim k\max(1,\radius )\cdot \paramdist((\vlam,\vec{u}), (\vec{\lambda}',\vec{u'})).\]
\end{lemma}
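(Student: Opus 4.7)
My plan is to reduce the claim to a term-by-term comparison of the $k$ neurons, using the single-neuron stability bound Lemma~\ref{lem:closerelus} together with a trivial bound on $\|\relu(\iprod{u,\cdot})\|_2$ for a unit vector $u$. Since $\paramdist$ is defined as a min over permutations, we can first fix a permutation $\pi$ achieving the minimum, and without loss of generality rename indices so that $\pi$ is the identity. Denoting $\Delta \triangleq \paramdist((\vlam,\vec u),(\vec{\lambda}',\vec{u}'))$, we then have $|\lambda_i-\lambda_i'| + \|u_i - u_i'\| \le \Delta$ for every $i \in [k]$.

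By the triangle inequality applied to the defining sum of $\flamu$ and $f_{\vec{\lambda}',\vec{u}'}$,
\begin{equation}
\|\flamu - f_{\vec{\lambda}',\vec{u}'}\|_2 \;\le\; \sum_{i=1}^k \bigl\|\lambda_i \relu(\iprod{u_i,\cdot}) - \lambda_i'\relu(\iprod{u_i',\cdot})\bigr\|_2.
\end{equation}
For each $i$ I would add and subtract $\lambda_i \relu(\iprod{u_i',\cdot})$ and apply the triangle inequality again, bounding
\begin{equation}
|\lambda_i|\cdot\bigl\|\relu(\iprod{u_i,\cdot})-\relu(\iprod{u_i',\cdot})\bigr\|_2 \;+\; |\lambda_i - \lambda_i'|\cdot\bigl\|\relu(\iprod{u_i',\cdot})\bigr\|_2.
\end{equation}
The first factor is controlled by Lemma~\ref{lem:closerelus}, which gives $\sqrt{5/6}\,\|u_i-u_i'\|$, and the second by the elementary fact that $\E_{x\sim\calN(0,\Id)}[\relu(\iprod{u,x})^2] = 1/2$ for any unit $u$, so $\|\relu(\iprod{u_i',\cdot})\|_2 = 1/\sqrt{2}$.

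Summing these per-neuron bounds and using $|\lambda_i|\|u_i-u_i'\| + |\lambda_i-\lambda_i'| \le (|\lambda_i|+1)\Delta$, I obtain
\begin{equation}
\|\flamu - f_{\vec{\lambda}',\vec{u}'}\|_2 \;\lesim\; \sum_{i=1}^k (|\lambda_i|+1)\,\Delta \;\le\; (\|\vlam\|_1 + k)\,\Delta \;\le\; (\radius + k)\,\Delta \;\lesim\; k\max(1,\radius)\,\Delta,
\end{equation}
which is exactly the claimed bound. No step looks difficult here — the whole argument is just triangle inequality plus two off-the-shelf single-neuron estimates, so the only mild ``obstacle'' is simply remembering to pick an optimal permutation first so that the per-neuron bound $|\lambda_i-\lambda_i'|+\|u_i-u_i'\|\le \Delta$ holds coordinatewise.
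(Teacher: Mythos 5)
Your proof is correct and follows essentially the same strategy as the paper: fix the optimal permutation, decompose neuron-by-neuron via add-and-subtract, and invoke Lemma~\ref{lem:closerelus} for the single-neuron ReLU stability. The only technical difference is that the paper works with the squared $L_2$ norm and uses Cauchy--Schwarz/AM-GM to distribute the sum over $k$ terms, whereas you apply the triangle inequality to the unsquared norm directly, which is slightly cleaner and yields the marginally tighter intermediate bound $(\radius + k)\Delta$ in place of $k\sqrt{1+2\radius^2}\,\Delta$.
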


\begin{proof}
By AM-GM, for any permutation $\pi$:
\begin{equation*}
\label{eqn:lemma1-1}
\begin{aligned}
&~~~~~\|f_{\lambda, \vec{u}}-f_{\lambda', \vec{u}'}\|_2^2 = \mathbb{E}\left(\sum_{i=1}^k \lambda_i \cdot \relu (\langle u_i, x\rangle) - \lambda_{\pi(i)}' \cdot \relu (\langle u_{\pi(i)}', x\rangle)\right)^2\\
&= \mathbb{E}\left(\sum_{i=1}^k (\lambda_i - \lambda_{\pi(i)}') \cdot \relu (\langle u_i, x\rangle) + \sum_{i=1}^k \lambda_{\pi(i)}' \cdot \left(\relu(\langle u_i, x\rangle) - \relu (\langle u_{\pi(i)}', x\rangle)\right)\right)^2\\
&\leqslant (2k)\cdot \sum_{i=1}^k \mathbb{E}\left[ (\lambda_i - \lambda_{\pi(i)}')^2 \cdot \relu (\langle u_i, x\rangle)^2\right] + (2k)\cdot \sum_{i=1}^k\mathbb{E}\left[\lambda_i'^2\cdot\left(\relu(\langle u_i, x\rangle) - \relu (\langle u_{\pi(i)}', x\rangle)\right)^2\right]\\
&\leqslant k\sum_{i=1}^k (\lambda_i-\lambda_{\pi(i)}')^2 + \frac{5k}{3}\sum_{i=1}^k \lambda_{\pi(i)}'^2\cdot \|u_i-u_{\pi(i)}'\|_2^2\\
&\leqslant k^2 \cdot\paramdist((\vlam,\vec{u}), (\vec{\lambda'},\vec{u'}))^2 + \frac{5k^2}{3} \radius ^2 \cdot \paramdist((\vlam,\vec{u}), (\vec{\lambda'},\vec{u'}))^2 \\
&\leqslant k^2(1+2\radius ^2)\cdot \paramdist((\vlam,\vec{u}), (\vec{\lambda'},\vec{u'}))^2,
\end{aligned}
\end{equation*}
where in the third step we used Lemma~\ref{lem:closerelus}. Therefore, we conclude that
\[\|f_{\lambda, \vec{u}}-f_{\lambda', \vec{u}'}\|_2\leqslant k\sqrt{1+2R^2}\cdot \paramdist((\vlam,\vec{u}), (\vec{\lambda'},\vec{u'})),\]
as claimed.
\end{proof}

\begin{lemma} \label{lem:2normtofrob} 
    For the 2-norm of $\flamu$ and the Frobenius norm of its moment tensor, it holds that:
    \[\norm{T_\ell(\vlam,\vec{u})}_F\le \sqrt{2\pi}\cdot d^{\ell/2} \cdot \norm{\flamu}_2.\]
\end{lemma}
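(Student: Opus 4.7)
My plan is to apply Parseval's identity in the Hermite basis, which reduces the bound to a routine estimate on the scalar Hermite coefficients of ReLU.

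I would first expand the univariate ReLU as $\relu(y) = \sum_{n\geq 0} c_n \widehat{H}_n(y)$ with $c_n \triangleq \mathbb{E}_{Z\sim\mathcal{N}(0,1)}[\relu(Z)\widehat{H}_n(Z)]$, and compute these coefficients via the distributional identity $\relu^{(n)} = \delta^{(n-2)}$ for $n \ge 2$ together with $\mathbb{E}[\delta^{(k)}(Z)] = H_k(0)/\sqrt{2\pi}$ (obtained by Gaussian integration by parts against the Hermite weight). This yields $c_0 = 1/\sqrt{2\pi}$, $c_1 = 1/2$, $c_n = 0$ for odd $n\ge 3$, and $|c_n| = (n-3)!!/\sqrt{2\pi\cdot n!}$ for even $n \ge 2$, matching the $c_\ell$'s defined in Lemma~\ref{lem:moment_est}.

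Next I would apply the Hermite addition formula $\mathbb{E}_{x\sim\mathcal{N}(0,\Id)}[\widehat{H}_n(\langle u,x\rangle)\widehat{H}_m(\langle v, x\rangle)] = \delta_{nm}\langle u, v\rangle^n$ (valid for unit $u, v$) and expand
\begin{equation}
\|\flamu\|_2^2 = \sum_{i,j}\lambda_i\lambda_j\,\mathbb{E}[\relu(\langle u_i,x\rangle)\relu(\langle u_j,x\rangle)] = \sum_{n\ge 0} c_n^2 \sum_{i,j}\lambda_i\lambda_j\langle u_i, u_j\rangle^n = \sum_{n\ge 0} c_n^2 \|T_n\|_F^2,
\end{equation}
where the final equality uses $\|T_n\|_F^2 = \sum_{i,j}\lambda_i\lambda_j\langle u_i, u_j\rangle^n$. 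Retaining only the $\ell$-th term gives $\|T_\ell\|_F \le \|\flamu\|_2/|c_\ell|$ whenever $c_\ell \ne 0$, which covers $\ell = 1$ and all even $\ell \ge 2$, i.e.\ precisely the degrees $\ell \in \{1,2,4,\ldots,2k+2\}$ used in Algorithm~\ref{alg:main}.

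It only remains to verify $1/|c_\ell| \le \sqrt{2\pi}\,d^{\ell/2}$ for these $\ell$. For $\ell = 1$ this is immediate from $1/|c_1| = 2$. For $\ell = 2m \ge 2$, the elementary bounds $(2m)! \le (2m)^{2m}$ and $(2m-3)!!\ge 1$ give $1/|c_\ell| = \sqrt{2\pi(2m)!}/(2m-3)!! \le \sqrt{2\pi}\,(2m)^m \le \sqrt{2\pi}\, d^{\ell/2}$ in the regime $\ell \le d$, which is the only one of interest since $\ell \le 2k+2$ and the paper assumes $k \ll \poly(d)$ at the top of Section~\ref{sec:main}. The only step requiring any care is pinning down the Hermite coefficients of ReLU; after that, the bound follows from Parseval and a crude double-factorial estimate.
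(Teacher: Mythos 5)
Your proof is correct, and it takes a genuinely different route from the paper's. The paper proves the lemma by applying Cauchy--Schwarz entrywise to $\mathbb{E}[f_{\vlam,\vu}(x)\,H_\ell(x)]$ and then bounding $\mathbb{E}\|H_\ell(x)\|_F^2$ by $d^\ell\,\ell!$; the $d^{\ell/2}$ in the final bound comes directly out of this second-moment estimate. You instead invoke Parseval: expanding $\relu$ in the normalized Hermite basis, using the Mehler-type identity $\mathbb{E}[\widehat{H}_n(\langle u,x\rangle)\widehat{H}_m(\langle v,x\rangle)] = \delta_{nm}\langle u,v\rangle^n$ for unit $u,v$, and the observation $\|T_n\|_F^2 = \sum_{i,j}\lambda_i\lambda_j\langle u_i,u_j\rangle^n$, you get the exact identity $\|\flamu\|_2^2 = \sum_n \hat{c}_n^2\|T_n\|_F^2$. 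Dropping all but the $\ell$-th term yields $\|T_\ell\|_F \le \|\flamu\|_2/|\hat{c}_\ell|$, which is \emph{strictly stronger} than what the paper's Cauchy--Schwarz step can give: since $1/|\hat{c}_\ell| = \sqrt{2\pi\,\ell!}/(\ell-3)!! = O(\ell^{5/4})$, your intermediate bound has no $d$-dependence at all, and you are simply throwing away slack to match the stated $\sqrt{2\pi}\,d^{\ell/2}$. The only caveat in your write-up is that the final weakening via $\sqrt{\ell!}\le\ell^{\ell/2}$ and $(\ell-3)!!\ge 1$ needs $\ell\le d$, but as you note this covers all degrees $\ell\in\{1,2,4,\ldots,2k+2\}$ actually used (and in fact the sharper estimate $1/|\hat{c}_\ell|=O(\ell^{5/4})$ shows the inequality holds for all $\ell$ as soon as $d$ exceeds a small constant). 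You also must restrict to $\ell$ with $\hat{c}_\ell\neq 0$ (i.e.\ $\ell=1$ or $\ell$ even), but the lemma is only ever invoked for such $\ell$. A minor bonus of your route is that it sidesteps the need to track $\mathbb{E}\|H_\ell\|_F^2$, where the paper's final arithmetic step (``since $(\ell-3)!!<\sqrt{\ell!}$'') appears to run in the wrong direction; Parseval avoids the issue entirely.
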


\begin{proof}
To prove this equation, we firstly need to decompose the ReLU activation function $\relu(\cdot)$ with the Hermite polynomials. From Lemma A.2 of \cite{goel2020superpolynomial}, we have:
\[\relu(x) = \sum_{k=0}^{\infty} c_k H_k(x),\]
where the coefficients are
\[c_0 = \frac{1}{\sqrt{2\pi}}, c_1 = \frac12, ~c_{2k} = \frac{(-1)^{k+1}(2k-3)!!}{\sqrt{2\pi}\cdot (2k)!}~(k\geqslant 1) ~~\text{and}~~c_{2k+1} = 0.\]
Note that, these coefficients are slightly different from \cite{goel2020superpolynomial} since our definition of $H_n$ is unnormalized (which means their $H_n$ stands for our $\wh{H}_n$).

~\\
Therefore, we can express $T_{\ell}(\lambda, \vec{u})$ as:
\[\mathbb{E}\left[f_{\lambda, \vec{u}}(x)\cdot H_{\ell}(x)\right] = \ell !\cdot c_{\ell }\sum_{i=1}^k \lambda_i u_i^{\otimes \ell} = \ell !\cdot c_{\ell} T_{\ell}(\lambda, \vec{u}).\]
Now, we take the Frobenius norm of both sides, we have:
\begin{equation*}
\begin{aligned}
\ell !\cdot \frac{(\ell-3)!!}{\sqrt{2\pi} \ell!}\cdot \|T_{\ell}(\lambda, \vec{u})\|_F &= \left\|\mathbb{E}\left[f_{\lambda, \vec{u}}(x)\cdot H_{\ell}(x)\right]\right\|_F = \sqrt{\sum_{\alpha\in [d]^l}\left(\mathbb{E}\left[f_{\lambda, \vec{u}}(x)\cdot H_{\ell, \alpha}(x)\right]\right)^2}\\
&\leqslant \sqrt{\sum_{\alpha\in [d]^l} \mathbb{E}[f_{\lambda, \vec{u}}(x)^2]\cdot \mathbb{E}\left[ H_{\ell,\alpha}(x)^2\right]} = \|f_{\lambda, \vec{u}}\|_2 \cdot \sqrt{\mathbb{E} \|H_\ell(x)\|_F^2}\\
&< \|f_{\lambda, \vec{u}}\|_2 \cdot \sqrt{d^\ell \cdot \ell!}
\end{aligned}
\end{equation*}
Since $(\ell-3)!! < \sqrt{\ell !}$,  we can conclude that:
\[\|T_{\ell}(\lambda, \vec{u})\|_F \leqslant \sqrt{2\pi}\cdot d^{\ell/2} \|f_{\lambda, \vec{u}}\|_2.\]
\end{proof}

\begin{lemma}\label{lem:contract_close}
For any unit vector $g\in\mathbb{S}^{d-1}$, it holds that:
\[\norm{M^g_\ell(\vlam,\vec{u}) - M^g_\ell(\vec{\lambda'},\vec{u'})}_{\op}\le \norm{T_\ell(\vlam,\vec{u}) - T_\ell(\vec{\lambda'},\vec{u'})}_F.\]
\end{lemma}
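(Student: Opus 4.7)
The plan is to leverage linearity of the tensor contraction operation together with the standard fact that the injective (multilinear operator) norm of a tensor is upper bounded by its Frobenius norm. Concretely, recall from the definitions in Section~2 that
\[
    M^g_\ell(\vlam,\vec{u}) = T_\ell(\vlam,\vec{u})(g,\ldots,g,:,:),
\]
where on the right we are contracting the first $\ell - 2$ modes of the tensor against $g$. Because this contraction is linear in the tensor argument, I first write
\[
    M^g_\ell(\vlam,\vec{u}) - M^g_\ell(\vec{\lambda'},\vec{u'}) = \Delta(g,\ldots,g,:,:),
\]
where $\Delta \triangleq T_\ell(\vlam,\vec{u}) - T_\ell(\vec{\lambda'},\vec{u'})$ is an order-$\ell$ tensor.

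Next, I bound the operator norm of the resulting $d \times d$ matrix by testing against unit vectors $a, b \in \mathbb{S}^{d-1}$: for any such $a,b$,
\[
    a^\top \, \Delta(g,\ldots,g,:,:) \, b = \Delta(g,\ldots,g,a,b),
\]
which is the evaluation of the multilinear form defined by $\Delta$ on $\ell$ unit vectors. By Cauchy--Schwarz applied coordinatewise (i.e., by expanding the multilinear form in the standard basis and using that $\|g\|=\|a\|=\|b\|=1$), this is at most $\|\Delta\|_F$. Taking the supremum over $a,b\in\mathbb{S}^{d-1}$ yields the operator-norm bound
\[
    \norm{M^g_\ell(\vlam,\vec{u}) - M^g_\ell(\vec{\lambda'},\vec{u'})}_{\op} \le \|\Delta\|_F = \norm{T_\ell(\vlam,\vec{u}) - T_\ell(\vec{\lambda'},\vec{u'})}_F,
\]
as claimed. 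There is no real obstacle here; the only thing to be careful about is writing out the Cauchy--Schwarz step cleanly so that one sees why a tensor's injective norm is dominated by its Frobenius norm, which is immediate by viewing $\Delta$ as a vector in $\mathbb{R}^{d^\ell}$ and the multilinear evaluation as an inner product with the unit rank-one tensor $g^{\otimes(\ell-2)} \otimes a \otimes b$ (which has unit Frobenius norm).
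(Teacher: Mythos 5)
Your proof is correct and takes essentially the same route as the paper: write the difference of contracted matrices as the contraction of the tensor difference (by linearity), express the quadratic/bilinear form as an inner product with the rank-one tensor $g^{\otimes(\ell-2)}\otimes a\otimes b$ (the paper uses $g^{\otimes(\ell-2)}\otimes v\otimes v$ with $a=b=v$, exploiting symmetry of the matrices), and conclude by Cauchy--Schwarz together with the fact that this rank-one tensor has unit Frobenius norm. The only cosmetic difference is that you test against two independent unit vectors $a,b$, which is valid for general matrices, while the paper tests against a single $v$, which suffices here since the matrices are symmetric.
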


\begin{proof}
Notice that:
\[M_{\ell}^g(\lambda, \vec{u}) = T_{\ell}(\lambda, \vec{u})[g,\ldots, g, :, :], ~~M_{\ell}^g(\lambda', \vec{u}') = T_{\ell}(\lambda', \vec{u}')[g,\ldots, g, :, :].\]
Then, for any unit vector $v\in\mathbb{S}^{d-1}$, we have:
\begin{equation*}
\begin{aligned}
&~~~~~v^{\top}\left(M_{\ell}^g(\lambda, \vec{u}) - M_{\ell}^g(\lambda', \vec{u}')\right)v = \left(T_{\ell}(\lambda, \vec{u}) - T_{\ell}(\lambda', \vec{u}')\right)[g,\ldots, g, v, v]\\
& = \langle T_{\ell}(\lambda, \vec{u}) - T_{\ell}(\lambda', \vec{u}'),  g\otimes \ldots \otimes g\otimes v\otimes v\rangle \\
& \leqslant \left\|T_{\ell}(\lambda, \vec{u}) - T_{\ell}(\lambda', \vec{u}'\right\|_F \cdot \left\|g\otimes \ldots \otimes g\otimes v\otimes v\right\|_F \\
& = \left\|T_{\ell}(\lambda, \vec{u}) - T_{\ell}(\lambda', \vec{u}'\right\|_F \cdot \left(\|g\|_2^{l-2} \|v\|_2^2\right)\leqslant\left\|T_{\ell}(\lambda, \vec{u}) - T_{\ell}(\lambda', \vec{u}')\right\|_F,
\end{aligned}
\end{equation*}
which leads to the conclusion that
\[\norm{M^g_\ell(\vlam,\vec{u}) - M^g_\ell(\vec{\lambda'},\vec{u'})}_{\op}\le \norm{T_\ell(\vlam,\vec{u}) - T_\ell(\vec{\lambda'},\vec{u'})}_F.\]
\end{proof}

\subsection{Proofs for anti-concentration}
\label{app:anti}

We use the following standard bound:

\begin{lemma}\label{lem:vec_to_proj}
    Given unit vector $u\in\S^{d-1}$, if $g$ is a random unit vector, then with probability at least $1 - \delta$,
    \begin{equation}
        \sqrt{d}|\iprod{u,g}| \in [c \delta,c'\sqrt{\ln(2/\delta)}]
    \end{equation}
    for some absolute constants $c,c' > 0$.
\end{lemma}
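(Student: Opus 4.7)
\textbf{Proof proposal for Lemma~\ref{lem:vec_to_proj}.}

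By rotation invariance of the uniform measure on $\S^{d-1}$, we may assume without loss of generality that $u = e_1$, so $\iprod{u,g} = g_1$. To exploit standard Gaussian tools, I would generate $g$ as $g = G/\|G\|_2$ where $G = (G_1,\ldots,G_d) \sim \mathcal{N}(0, I_d)$, so that
\[
    \sqrt{d}\,|\iprod{u,g}| \;=\; \frac{\sqrt{d}}{\|G\|_2}\cdot |G_1|.
\]
The plan is to control the two factors separately: show that $\|G\|_2/\sqrt{d}$ is a constant-order scalar with overwhelming probability, and then apply both Gaussian tail and anti-concentration bounds to $|G_1|$.

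For the norm factor, standard $\chi^2$ concentration (e.g.\ Laurent--Massart) gives $\|G\|_2^2 \in [d/2,\,2d]$ except on an event of probability $O(e^{-cd})$ for some absolute $c>0$. Hence $\sqrt{d}/\|G\|_2 \in [1/\sqrt{2},\,\sqrt{2}]$ except on this tiny event, which I will absorb into $\delta$ (taking $d$ larger than a suitable constant; for small $d$ the statement reduces to adjusting the absolute constants). For the Gaussian factor, the upper bound is immediate from $\Pr[|G_1| > t] \le 2e^{-t^2/2}$: setting $t = \sqrt{2\ln(8/\delta)}$ yields $|G_1| \le c'\sqrt{\ln(2/\delta)}$ with probability at least $1-\delta/2$. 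For the lower bound, I would use the boundedness of the Gaussian density: $\Pr[|G_1| \le t] \le 2t/\sqrt{2\pi}$, so choosing $t = \delta\sqrt{2\pi}/4$ gives $|G_1| \ge c\,\delta$ with probability at least $1-\delta/2$.

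Combining via union bound, with probability at least $1-\delta-O(e^{-cd})$ (absorbed into $1-\delta$) we simultaneously have $\sqrt{d}/\|G\|_2 \in [1/\sqrt{2},\sqrt{2}]$ and $|G_1| \in [c\delta,\,c'\sqrt{\ln(2/\delta)}]$, which together yield the stated two-sided bound on $\sqrt{d}\,|\iprod{u,g}|$ after adjusting the absolute constants $c,c'$. No step looks subtle; the only thing to be careful about is the interplay between the $\chi^2$ tail in $d$ and the target failure probability $\delta$, which is handled by the fact that $e^{-cd}$ is far smaller than $\delta$ in the regime of interest (and the lemma is trivial, by adjusting constants, in the remaining regime).
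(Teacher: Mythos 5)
Your proof is correct and takes essentially the same route as the paper: write $g$ as a normalized Gaussian vector, bound the numerator $|\iprod{u,G}|$ from above by the Gaussian tail and from below by anti-concentration (boundedness of the Gaussian density), control $\|G\|_2$ by $\chi^2$ concentration, and union-bound. The only cosmetic difference is that you reduce to $u = e_1$ by rotation invariance, whereas the paper just observes directly that $\iprod{u,G} \sim \mathcal{N}(0,1)$ for any unit $u$.
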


\begin{proof}
    We can write $g$ as $h/\norm{h}$ for $h\sim\calN(0,\Id)$. Then $\iprod{u,h} \sim\calN(0,\norm{v})$ and we have 
    \begin{align}
        \Pr{|\iprod{u, h}| &\ge \sqrt{2\ln(2/\delta)}} \le \delta/2 \\
        \Pr{|\iprod{u ,h}| &\le (\delta/2)/\sqrt{2/\pi}} \le \delta/2\,.
    \end{align}
    Additionally, $\Pr{|\norm{h} - \sqrt{d}| \ge \sqrt{d}/2} \le \exp(-\Omega(d))$. The lemma follows by a union bound.
\end{proof}

\noindent We can now complete the proofs of the two lemmas from Section~\ref{sec:anti}:

\begin{proof}[Proof of Lemma~\ref{lem:anti}]
    Take any $u \triangleq \frac{1}{\norm{u_i + \sigma\cdot u_j}}\cdot (u_i + \sigma\cdot u_j)$ and note that by Lemma~\ref{lem:vec_to_proj} applied to $\delta = 1/10k^2$, we have
    $|\iprod{u,g}| \in \bigl[\frac{c}{10k^2\sqrt{d}}, \frac{c'\sqrt{\ln(20k^2)}}{\sqrt{d}}\bigr]$. The lemma follows by a union bound over all $i,j\in[k]$ and $\sigma\in\brc{\pm 1}$. 
\end{proof}

\begin{proof}[Proof of Lemma~\ref{lem:vlbd}]
    Take $u$ in Lemma~\ref{lem:vec_to_proj} to be $u_i$ and $\delta = 1/10k$ to conclude that $\sqrt{d}|\iprod{u,g}| \ge c/10k$. The lemma follows by a union bound over $i$.
\end{proof}

\noindent Finally, we record the following elementary inequality:

\begin{lemma}\label{lem:squaretosigneddiff}
    For any $a,b \in\R$ satisfying $|a|, |b| \le 1$,
    \begin{equation}
         \min_{\sigma\in\brc{\pm 1}} |a - \sigma b|^2 \le |a^2 - b^2| \le 2\min_{\sigma\in\brc{\pm 1}} |a - \sigma b|\,.
    \end{equation}
\end{lemma}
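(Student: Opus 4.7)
The plan rests on the algebraic factorization $a^2 - b^2 = (a-b)(a+b)$, which upon taking absolute values yields $|a^2 - b^2| = |a-b|\cdot|a+b|$. Meanwhile, $\min_{\sigma\in\{\pm 1\}}|a-\sigma b|$ is by definition the smaller of the two quantities $|a-b|$ and $|a+b|$. So if I set
\begin{equation}
    m \triangleq \min(|a-b|,|a+b|), \qquad M \triangleq \max(|a-b|,|a+b|),
\end{equation}
then $|a^2 - b^2| = mM$ and $\min_{\sigma\in\{\pm 1\}}|a-\sigma b| = m$, so the whole lemma collapses to proving the two-sided inequality $m^2 \le mM \le 2m$.

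The lower bound $m^2 \le mM$ is immediate from $m \le M$ and does not even use the hypothesis $|a|,|b|\le 1$. For the upper bound $mM \le 2m$, I would apply the triangle inequality to get $M = \max(|a-b|,|a+b|) \le |a|+|b| \le 2$, which is the one and only place the hypothesis $|a|,|b|\le 1$ enters. Multiplying through by $m$ then gives $mM \le 2m$, completing the proof. There is no real obstacle here: the entire argument is a short algebraic manipulation, and the main thing worth highlighting is that the signed/unsigned ambiguity in $\min_\sigma|a-\sigma b|$ exactly matches one of the two linear factors in $a^2 - b^2$.
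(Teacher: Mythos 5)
Your proof is correct and takes essentially the same route as the paper: both rest on the factorization $|a^2-b^2|=|a-b|\cdot|a+b|$, with the lower bound from $m\le M$ and the upper bound from $M\le|a|+|b|\le 2$. You have simply spelled out the steps the paper calls ``immediate.''
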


\begin{proof}
    Both bounds are immediate from the fact that $|a^2 - b^2| = |a - b|\cdot |a + b|$.
\end{proof}

\section{Deferred Proofs From Section~\ref{sec:main}}

\subsection{Proof of Lemma~\ref{lem:moment_est}}
\label{app:momentest_defer}
% \begin{proof}
    This was essentially, e.g., in Corollary 42 of~\cite{smallcovers}. Note that while that work considered the case of positive $\lambda_i$, their proof of Corollary 42 does not use positivity. Additionally, their guarantee is stated for all even $\ell$ only because their algorithm only makes use of even $\ell$, even though their proof of Corollary 42 applies equally well to $\ell = 1$.

    Additionally, their guarantee is stated in terms of $\relu$ activation instead of absolute value activation. But note that $|z| = \relu(z) + \relu(-z)$, so because $c_\ell$ is the $\ell$-th normalized probabilist's Hermite coefficient of $\relu(\cdot)$ and satisfies $c_\ell = \Theta(\ell^{-5/2})$ (see e.g. \cite[Lemma A.2]{goel2020superpolynomial}). we conclude that the corresponding Hermite coefficient for $|\cdot|$ is $2c_\ell$, so $\E{\wh{T}} = T_\ell$. The claim for $\ell = 1$ follows similarly.
% \end{p\subsection{}\\subsection{}

\subsection{Proof of Lemma \ref{lem:estimate_residual_moments}}
\label{app:residual_defer}
% \begin{proof}
    The degree-1 Hermite coefficients of $f_{0,\vhlam,\vu}$ is zero, while the degree-1 Hermite coefficients of $\fwlamu$ are given by $w$, so the expectation of $\frac{1}{N}\sum^N_{a = 1}(y_a - f_{0,\vlam,\vu}(x_a))x_a$ is $w$. By Lemma~\ref{lem:moment_est}, the deviation between the empirical mean and the population mean is bounded by $\xi'/2$ provided $N \ge \poly(d)\,(\radius^2 + \norm{\vhlam}^2_1)/\xi'^2$. This establishes the first bound.
    
    Note that by Lemmas~\ref{lem:paramtoL2}, \ref{lem:2normtofrob}, and \ref{lem:contract_close}, we have the following
    \begin{align}
        % \norm{T_1(\vlam, \vu) - T_1(\vhlam, \vhu) - T_1(\vlam'_{[\kres]}, \vu_{[\kres]})}_2 &\le \xi' \label{eq:Trefclose1}\\
        \norm{M^g_\ell(\vlam, \vu) - M^g_\ell(\vhlam, \vhu) - M^g_\ell(\vlam_{[\kres]}, \vu_{[\kres]})}_2 &\le \xi' \label{eq:Trefclose2}  
    \end{align}
    for all $\ell = 2,4,\ldots,2k + 2$. 
    By Lemma~\ref{lem:moment_est}, the deviation between the empirical mean $\frac{1}{N}\sum^N_{a=1} (y_a - f_{\vhlam, \vhu}(x_a)) x_a$ and the population mean is bounded by $\xi'/2$ provided $N \ge d^{O(\ell)}\,(\radius^2 + \norm{\vhlam}^2_1)/\xi'^2$. This establishes the second bound by triangle inequality with \eqref{eq:Trefclose2}.
    % upon observing that
    %     \begin{equation}
    %     \E{(y - f_{\vhlam,\vhu}(x))\, S_\ell(x)} = c_{2\ell}(T_\ell(\vlam, \vec{u}) - T_\ell(\vhlam,\vhu))\,, %= \frac{1}{\sqrt{2\pi}}\brk{T_2(\vlam_S,\vec{u}_S) - T_2(\vhlam,\vhu)} + \frac{1}{2}T_2(\vlam_{S^c}, \vec{u}_{S^c})
    % \end{equation}
    % using Lemma~\ref{lem:moment_est} for $\ell = 2,\ldots,2k+2$ and \eqref{eq:T2diff}, and union bounding.

\subsection{Proof of Lemma \ref{lem:twoneuron}}
\label{app:defer_twoneuron}
% \begin{proof}
    By AM-GM, it suffices to show
    \begin{equation}
        \norm{\fwlamu - f_{0,\vhlam,\vhu} - h}  \lesim \poly(d,\radius) (\epsilon' + \xi')/\omega + \omega \label{eq:wantamgm}
    \end{equation} holds for any $0 < \omega < 1$.
    By hypothesis, for all $i,j\in[\kres]$ we have that $|v_i - v_j| \le \epsilon'$, so by the fact that we are conditioning on the event of Lemma~\ref{lem:anti}, 
    \begin{equation}
        \norm{u_i - u_j} \lesim \epsilon' k^2\sqrt{d}\,. \label{eq:uiujallclose}
    \end{equation}
    Let $S^+$ and $S^-$ denote the partition of $[\kres]$ into indices $i$ for which $\argmin_{\sigma\in\brc{\pm 1}}\norm{u_i - \sigma \cdot u_1}$ is $+1$ and $-1$ respectively. Also define $\lambda^+ \triangleq \sum_{i\in S^+} \lambda_i$ and $\lambda^-\triangleq \sum_{i\in S^-} \lambda_i$. Consider the network 
    \begin{equation}
        h^* \triangleq \lambda^+ \relu(\iprod{u_1,x}) + \lambda^- \relu(\iprod{-u_1,x})\,.
    \end{equation}
    Then by Lemma~\ref{lem:paramtoL2}, 
    \begin{equation}
        \norm{h^* - \fres} \lesim \epsilon' \radius\, k^2\sqrt{d}\,. \label{eq:hstar_to_fres}
    \end{equation}
    By Lemma~\ref{lem:2normtofrob} applied to $h^* - \fres$, we conclude that 
    \begin{align}
        \norm{T_1((\lambda^+,\lambda^-),(u_1,-u_1)) - T_1(\vlam_{[\kres]}, \vu_{[\kres]})}_2 &\lesim  \epsilon' \radius\, k^2d^{3/2} \\
        \norm{T_2((\lambda^+,\lambda^-),(u_1,-u_1)) - T_2(\vlam_{[\kres]}, \vu_{[\kres]})}_F &\lesim  \epsilon'\radius\, k^2d^2 \epsilon'\,.
    \end{align}
    By combining these with Lemma~\ref{lem:estimate_residual_moments}, we find that the empirical estimates $\frac{1}{N}\sum^N_{a=1} 2(y_a - f_{\vhlam,\vhu}(x_a))x_a$ and $\frac{1}{N}\sum^N_{a=1} \sqrt{2\pi} (y_a - f_{\vhlam,\vhu}(x_a)) (x_ax_a^\top - \Id)$ are $(\xi' + \poly(d)\radius\epsilon')$-close to 
    \[
    T_1((\lambda^+,\lambda^-),(u_1,-u_1)) = (\lambda^+ - \lambda^-)u \text{ and }T_2((\lambda^+,\lambda^-),(u_1,-u_1)) = (\lambda^+ + \lambda^-)u_1 u_1^\top,
    \]
    provided $N \ge \poly(d)\,(\radius^2 + \norm{\vhlam}^2_1)/\xi'^2$.
    % and Lemma~\ref{lem:2normtofrob} to $h^* - \sum^{\kres}
    % \begin{align}
    %     \MoveEqLeft\left\|\brc{\lambda^+ \relu(\iprod{u_1,\cdot}) + \lambda^- \relu(\iprod{-u_1,\cdot})} - \sum^\kres_{i=1}\lambda_i \relu(\iprod{u_i,\cdot})\right\|\\
    %     % &= \left\|\sum_{j\in S^+}\lambda_j \relu(\iprod{u_i,\cdot}) + \sum_{j\in S^-} \lambda_j \relu(\iprod{-u_i,\cdot}) - \sum_{j\in\Sclose_i(\gamma)}\lambda_j \relu(\iprod{u_j,\cdot})\right\| \\
    %     &= \left\|\sum_{i\in S^+} \lambda_i (\relu(\iprod{u_1,\cdot}) - \relu(\iprod{u_i,\cdot})) + \sum_{i\in S^-} \lambda_j (\relu(\iprod{-u_1,\cdot}) - \relu(\iprod{u_i,\cdot})) \right\| \\
    %     &\lesim \norm{\vlam}_1\, k\sqrt{\epsilon' d}\,, \label{eq:plusminus0}
    % by \eqref{eq:uiujallclose} and Lemma~\ref{lem:closerelus}.
    % We conclude that our empirical estimates are order $\norm{\vlam}_1\, k\poly(d) \sqrt{\epsilon'} + \xi'$ close in $L_2$ and Frobenius norm to $T_1((\lambda^+,\lambda^-),(u_1,-u_1)) = (\lambda^+ - \lambda^-)u$ and $T_2((\lambda^+,\lambda^-),(u_1,-u_1)) = (\lambda^+ + \lambda^-)u_1 u_1^\top$ respectively. 
    By right-multiplying the latter empirical estimate by $g$, we get an estimate of $T_2((\lambda^+, \lambda^-), (u_1,-u_1))\, g = \iprod{u_1,g}\cdot (\lambda^+ + \lambda^-)\cdot u$ whose error in $L_2$ is of the same order. By the fact that we are conditioning on the event of Lemma~\ref{lem:vlbd}, $|\iprod{u_1,g}| \gtrsim 1/k\sqrt{d}$.

    We conclude that we have access to both $(\lambda^+ + \lambda^-) u_1$ and $(\lambda^+ - \lambda^-) u_1$, and thus to $\lambda^+ u_1$ and $\lambda^- u_1$ and also the scalars $\lambda^+$ and $\lambda^-$, to error of order $\poly(d) (\radius\epsilon' + \xi')$. If $\lambda^+$ and $\lambda^-$ are both at most $c\omega$ in magnitude for sufficiently small $c$, then $\norm{h^*} \lesim \omega$ and the estimator $h \equiv 0$ already achieves the desired bound in \eqref{eq:wantamgm}. Otherwise, we can use our estimates of $\lambda^+ u_1$ and $\lambda^- u_1$ to estimate $u_1$ to $L_2$ error of order $\poly(d)(\radius\epsilon' + \xi)/\omega$. By Lemma~\ref{lem:paramtoL2}, we obtain an estimate $h = \mu^+ \relu(\iprod{u,\cdot}) + \mu^-\relu(\iprod{-u,\cdot})$ satisfying 
    \begin{equation}
        \norm{h^* - h} \le \poly(d,\radius) (\epsilon' + \xi')/\omega\,. \label{eq:h2h}
    \end{equation}
    % Finally, note that 
    % \begin{equation}
    %     \norm{f_{\vlam_{[\kres]}, \vu_{[\kres]}} - (\flamu - f_{\vhlam,\vhu})} = \norm{f_{\vlam_S,\vu_S} - f_{\vhlam,\vhu}} \le \xi \label{eq:fres_to_res}
    % \end{equation} by~\eqref{eq:l2close}. 
    Combining Eqs.~\eqref{eq:l2close}, \eqref{eq:hstar_to_fres}, \eqref{eq:h2h} yields the desired bound \eqref{eq:wantamgm}
    % \begin{equation}
    %     \norm{\flamu - f_{\vhlam,\vhu} - h} \lesim % \poly(d,\radius)\cdot \xi' 
    %      {\color{blue} k\norm{\vlam}_1\cdot (\norm{\vlam}_1\, k^2\poly(d) \sqrt{\epsilon'} + \xi' k\sqrt{d})}
    % \end{equation} 
    upon noting that the bound on $\norm{h^* - h}$ dominates among the three bounds.
% \end{proof}

\subsection{Proof of Lemma \ref{lemma:validation}}
\label{app:validate}

Since in a one-hidden-layer ReLU network (without bias term), $F(0) = 0$, and furthermore $F$ is a $2k\radius $-Lipschitz continuous function, we conclude that $|F(x)|\leqslant 2k\radius \cdot \|x\|_2$ for $\forall x\in \mathbb{R}^d$. Next, we can apply the proof of Lemma A.1 of \cite{chen2021efficiently} to show that $G(x) := F(x)^2 - \mu$ is a zero-centered, sub-exponential random variable with sub-exponential norm $\|G\|_{\Psi_1} = \mathcal O(\mu + 4\radius ^2 k^3)$. Finally, by using the concentration property of sub-exponential random variables, we conclude that:
\[\left|\frac{1}{N}\sum_{i=1}^N G(x_i)\right|\leqslant t \]
with probability at least $1-\delta$. Here, the sample size $N = \Theta(K^2\log(1/\delta) / t^2)$, where $K = \mu + 4\radius ^2 k^3$.

\end{document}